\newcommand{\beq}{\vspace{0mm}\begin{equation}}
\newcommand{\eeq}{\vspace{0mm}\end{equation}}
\newcommand{\beqs}{\vspace{0mm}\begin{eqnarray}}
\newcommand{\eeqs}{\vspace{0mm}\end{eqnarray}}
\newcommand{\barr}{\begin{array}}
\newcommand{\earr}{\end{array}}
\newcommand{\xv}{\boldsymbol{x}}
\newcommand{\cdotv}{\boldsymbol{\cdot}}
\newcommand{\Sigmamat}[0]{{\boldsymbol{\Sigma}}}
\newcommand{\betav}[0]{{\boldsymbol{\beta}}}
\newcommand{\muv}[0]{{\boldsymbol{\mu}}}
\newcommand{\E}{\mathbb{E}}
\newcommand{\given}{\,|\,}
\newtheorem{thm}{Theorem} %[section]
\newtheorem{cor}[thm]{Corollary}
\newtheorem{prop}[thm]{Proposition}
\newtheorem{definition}{Definition}
\algnewcommand\algorithmicparfor{\textbf{parfor}}
\algnewcommand\algorithmicpardo{\textbf{do}}
\algnewcommand\algorithmicendparfor{\textbf{end\ parfor}}
\title{ %Sum Recursive Softplus \\ 
%Bernoulli-Poisson 
%Gamma Belief Network
 %Nonparametric Bayesian 
 %Infinite Convolution of Mixtures 
Softplus Regressions and
 Convex Polytopes}
\author{
Mingyuan Zhou\thanks{M. Zhou is an assistant professor of statistics in 
the Department of Information, Risk, \& Operations Management %of 
%the
% McCombs School of Business
%, 
and  Department of Statistics \& Data Sciences 
at the University of Texas at Austin, Austin, TX 78712, USA. \emph{Email:} \texttt{mingyuan.zhou@mccombs.utexas.edu}
} 
\\
%Department of Information, Risk, and Operations Management,
%IROM Department, 
%McCombs School of Business\\
The University of Texas at Austin, 
Austin, TX 78712 \\
%\texttt{mingyuan.zhou@mccombs.utexas.edu} 
}
\begin{document}

\maketitle
\begin{spacing}{1.25}
\begin{abstract}

To construct flexible nonlinear predictive distributions, the paper introduces a family of softplus function based regression models that convolve, stack, or combine both operations by convolving countably infinite stacked gamma distributions, whose scales depend on the covariates. Generalizing logistic regression that uses a single hyperplane to partition the covariate space into two halves, softplus regressions employ multiple hyperplanes to construct a confined space, related to a single convex polytope defined by the intersection of multiple half-spaces or a union of multiple convex polytopes, to separate one class from the other. The gamma process is introduced to support the convolution of countably infinite (stacked) covariate-dependent gamma distributions. For Bayesian inference, Gibbs sampling derived via novel data augmentation and marginalization techniques is used to deconvolve and/or demix the highly complex nonlinear predictive distribution. Example results demonstrate that softplus regressions provide flexible nonlinear decision boundaries, achieving classification accuracies comparable to that of kernel support vector machine while requiring significant less computation for out-of-sample prediction. 
 
 \vspace{2mm}
\emph{Keywords}: 
%Bayesian nonparametrics;
 Bernoulli-Poisson link; convolution and stack of experts; deep learning; gamma belief network; nonlinear classification; softplus and rectifier functions %; stack of experts. 

\end{abstract}
\end{spacing}
\section{Introduction}\label{sec:introduction}

Logistic and probit regressions that use a single hyperplane to partition the covariate space into two halves are widely used to model binary response variables given the covariates \citep{cox1989analysis,mccullagh1989generalized,albert1993bayesian,holmes2006bayesian}. They are easy to implement and simple to interpret, but %their limitations are evident 
%as 
neither of them is capable of producing nonlinear classification decision boundaries, and they may not provide large margin to achieve accurate out-of-sample predictions. For two classes not well separated by a single hyperplane, % in the covariate space, %are not linearly separable, 
rather than regressing a binary response variable directly on its covariates, it is common to select a subset of covariate vectors as support vectors, choose a nonlinear kernel function, and regress a binary response variable on the kernel distances between its covariate vector and these support vectors \citep{boser1992training,
cortes1995support,
vapnik1998statistical,
scholkopf1999advances,RVM}. Alternatively, one may construct a deep neural network to nonlinearly transform the covariates in a supervised manner, %making the two classes more linearly separable in the transformed covariate space, 
and then regress a binary response variable on its transformed covariates \citep{hinton2006fast,
lecun2015deep,Bengio-et-al-2015-Book}. 

Both kernel learning and deep learning map the original covariates into a more linearly separable space, transforming a nonlinear classification problem into a linear one. % by . %, on which a linear classifier is built. %and build a linear classifier in that space.
%Without the need to operate on a transformed covariate space, in this paper, %moving beyond kernel machines and deep neural networks, 
In this paper, we propose a fundamentally different approach for nonlinear classification. %regression of binary response variables. 
Relying on neither the kernel trick nor a deep neural network to transform the covariate space, % to nonlinearly transform the covariate space, 
we construct a family of softplus regressions that exploit two distinct types of interactions between hyperplanes to define flexible nonlinear classification decision boundaries directly on the original covariate space. Since kernel learning based methods such as kernel support vector machines (SVMs) \citep{cortes1995support,
vapnik1998statistical} may scale poorly in that the number of support vectors often increases linearly in the size of the training dataset,  %especially for noisy or highly nonlinear data, 
they could be not only slow and memory inefficient to train but also unappealing for making fast out-of-sample predictions  \citep{steinwart2003sparseness,wang2011trading}. One motivation of the paper is to investigate the potential of using a set of hyperplanes, whose number is directly influenced by %the geometric structures on 
how the interactions of multiple hyperplanes can be used to 
%how two different classes can be 
spatially separate two different classes in the covariate space rather than by the training data size, to construct nonlinear classifiers that can match the out-of-sample prediction accuracies of kernel SVMs, but potentially with much lower computational complexity. Another motivation of the paper is to increase the margin of the classifier, related to the discussion in \citet{kantchelian2014large} that for two classes that are linearly separable, even though a single hyperplane is sufficient to  separate the two different classes in the training dataset, using multiple hyperplanes to enclose one class may help clearly increase the total margin of the classifier and hence improve the out-of-sample prediction accuracies. 

Our motivated construction exploits two distinct operations---convolution and stacking---on the gamma distributions with covariate-dependent scale parameters. The convolution operation convolves differently parameterized probability distributions to increase representation power and enhance smoothness, while the stacking operation mixes a distribution in the stack 
with a distribution of the same family that is subsequently pushed into the stack. Depending on whether and how the convolution and stacking operations are used, 
the models in the family differ from each other on how they use the softplus functions %s \citep{dugas2001incorporating}, parameterized by the inner products of the regression coefficient and covariate vectors, 
to construct highly nonlinear probability density functions, and on how they %generalize the gamma belief network of \citet{PGBN_NIPS2015} % as the basic nonlinear unit
% together with the Bernoulli-Poisson (BerPo) link of \citet{EPM_AISTATS2015} to 
construct their hierarchical Bayesian models to arrive at these functions. In comparison to the nonlinear classifiers built on kernels or deep neural networks, the proposed softplus regressions all share a distinct advantage in providing interpretable geometric constraints, which are related to either a single or a union of convex polytopes \citep{polytope}, on the classification decision boundaries defined on the original covariate space. In addition, like neither kernel learning, whose number of support vectors often increases linearly in the size of data \citep{steinwart2003sparseness}, nor deep learning, which often requires carefully tuning both the structure of the deep network and the learning algorithm \citep{Bengio-et-al-2015-Book}, % and usually lacks probabilisitic interpretation, 
 % selected based on the prior knowledge on data,
 the proposed nonparametric Bayesian softplus regressions %, constructed with nonparametric Bayesian hierarchical models and inferred via Gibbs sampling with closed-form update equations,  
 naturally provide probability estimates, automatically learn the complexity of the predictive distribution, % that maps the covariates to a label, 
 and quantify  model uncertainties with posterior samples. % on its parameters and predictive distribution. %, without the need to tune any model parameters. 

The remainder of the paper is organized as follows. 
 %To provide some background information, 
 %In Section \ref{sec:notation}, we  introduce the Bernoulli-Poisson link and a family of softplus functions. % and %show how the softplus function naturally arises under the BerPo link and 
 %briefly review negative binomial regression. % and then define a family of softplus regressions. 
In Section \ref{sec:model}, %after a brief review on count distributions and negative binomial regression, 
we define four different softplus regressions, present their underlying hierarchical models, and describe their distinct geometric constraints on how % on partitioning
 the covariate space is partitioned. %for binary classification.
%, which are related to a single hyperplane, a convex polytope that encloses negative examples, a convex polytope that encloses positive examples, or the union of multiple convex polytopes that enclose positive examples. 
In Section~\ref{sec:inference}, we discuss Gibbs sampling via data augmentation and marginalization. % and present some properties of the Polya-Gamma distribution. %and illustrate model properties % unique geometric properties of softplus regressions
 %using some typical toy datasets, whose two classes are not linearly separable.
%  In Section \ref{sec:discussion}, we make connections of softplus regressions with a wide array of previously proposed models, revealing the novelty of SS-softplus regression in using the convolution of countably infinite predictive distributions, each of which stacks multiple covariate-dependent gamma distributions, %he logarithms of whose scale parameters are parameterized by the inner products of the same covariate vector and different regression coefficient vectors, %a deep mixture of multiple predictive distributions, 
%to construct  highly flexible nonlinear predictive distributions. 
In Section~\ref{sec:results}, we present experimental results on eight benchmark datasets for binary classification, making comparisons with %a wide variety of
five different classification algorithms. %, including two algorithms that are related to sum-softplus regression. 
 We conclude the paper in Section~\ref{sec:conclusion}. We defer to the Supplementary Materials all the proofs, an accurate approximate sampler and some new properties for the Polya-Gamma distribution, the discussions on related work, and some additional example results. %, some additional propositions, corollaries, and additional figures in the Appendix. 

\vspace{-3mm}
 \section{ Hierarchical Models and Geometric Constraints}
 \label{sec:model}
 % for Bernoulli-Poisson}
\vspace{-2mm}

%We leave the 

%Moreover, they are complementary to the kernel trick and deep neural network, as one may use them to replace the linear classifier built on the transformed covariates, provided by either the kernel trick or a deep neural network, to achieve an overall improved performance. 
\subsection{Bernoulli-Poisson link and softplus function}
%\subsection{Notation and preliminaries}
\label{sec:notation}
\vspace{-1mm}
%\textbf{The Bernoulli-Poisson link and softplus function.} %\vspace{-1mm}
%as it is not able to separate two classes that are not linearly separable. 
%Nonlinear classification. Kernel methods and neural networks. Motivations. 
To model a binary random variable, %Bernoulli random variable, 
it is common to link it 
%Rather than following the convention to model a binary random variable by linking it
to a real-valued latent Gaussian random variable using either the logistic or probit links. %%use the Bernoulli distribution and 
%model the logit (or probit) of the Bernoulli probability parameter, which is easier to manipulate as it is no longer constrained to be between zero and one. 
%into an unconstrained space that is easier to build hierachical models. 
%, transferring the problem of modeling 
%of the the logistic link is commonly used to transfer the problem of modeling the Bernoulli probability parameter into the problem of modeling link a binary random variable to a real-valued latent Gaussian random variable. % \citet{EPM_AISTATS2015} proposes
Rather than following the convention, in this paper, 
we consider
 the Bernoulli-Poisson (BerPo) link \citep{Dunson05bayesianlatent,EPM_AISTATS2015}  to threshold a latent count  at one to obtain a binary outcome $y\in\{0,1\}$~as
\beq
y = \delta(m\ge 1), ~m\sim\mbox{Pois}(\lambda),\label{eq:BerPo}
\eeq
where $m\in\mathbb{Z}$, $\mathbb{Z}:=\{0,1,\ldots\}$, and $\delta(x) = 1$ if the condition $x$ is satisfied and $\delta(x) = 0$ otherwise.
The marginalization of the latent count $m$ from the BerPo link leads to 
$$
y\sim\mbox{Bernoulli}(p),~p=1-e^{-\lambda}.%\label{eq:BerPo1}
$$
%The conditional distribution of 
The conditional distribution of $m$ given $y$ and $\lambda$ %follows a truncated Poisson distribution, which  
can be efficiently simulated using a rejection sampler \citep{EPM_AISTATS2015}.
Since its use in 
%\subsection{Bernoulli-Poisson link}
\citet{EPM_AISTATS2015} to factorize the adjacency matrix of an % model the edges of an 
undirected unweighted symmetric network, % to detect latent node communities and predict missing edges. %A primary advantage of this link function is that it allows building network models whose computation scales as a linear function rather than a quadratic function of the number of nodes in the network. 
%The BerPo link, which model the binary variable given the Poisson rate as $b\sim\mbox{Bernoulli}(1-e^{-\lambda})$, 
the BerPo link has been further extended for big binary tensor factorization \citep{hu2015zero}, multi-label learning \citep{rai2015large}, and deep Poisson factor analysis \citep{henao2015deep}. This link has also been used by 
\citet{caron2014sparse} and \citet{todeschini2016exchangeable} for network analysis. 
%In all these models, % exploiting the BerPo link, 
%a primary goal is to factorize binary data, which are commonly linked via either the logistic or probit links to real-valued data that are factorized under the Gaussian likelihood. Under the BerPo link, the binary data are now linked to latent counts that can be factorized under the Poisson likelihood.
% For big spare binary data, this often brings considerable computational savings as a BerPo link based model can spend its computation primarily on nonzeros. For example, the computation of an edge partition network model of \citet{EPM_AISTATS2015}, built on the BerPo link, scales as $O(dN)$ rather than $O(N^2)$, where $d$ is the average degree per node and $N$ is the total number of nodes of the network. In this paper, the BerPo link is primarily used for supervised learning, where the BerPo rates, randomized by gamma belief networks, are deterministically parameterized by the inner products of the covariate and regression coefficient vectors via a variety of differently structured softplus functions.

We now refer to $\lambda=-\ln(1-p)$, the negative logarithm of the Bernoulli failure probability, % \in[0,\infty)$ 
 as the BerPo rate for $y$ and simply denote \eqref{eq:BerPo} as 
$%\beq
y \sim \mbox{BerPo}(\lambda).\notag
$ %\eeq 
%where $\lambda$ is only constrained to be nonnegative. 
It is instructive to notice that $1/(1+e^{-x}) = 1-\exp[{-\ln(1+e^x)}], $ and hence
%straightforward to verify that
 letting 
\beq
y\sim\mbox{Bernoulli}\big[\sigma(x)\big],~\sigma(x) = 1/(1+e^{-x}) 
\label{eq:BerSigmoid}
%\pi),~\pi = \frac{e^x}{1+e^x}.
\eeq
%where $\mbox{sigmoid}(x) = 1/(1+e^{-x})$ is the sigmoid function, 
is equivalent to letting
\beq
y\sim\mbox{BerPo}\big[\varsigma(x)\big],~\varsigma(x) = \ln(1+e^x),
\label{eq:BerPoSoftplus}
%\lambda),~\lambda = \ln(1+e^x),
\eeq
%where $\mbox{softplus}(x) = \ln(1+e^x)$ is 
where %the softplus function 
$\varsigma(x) = \ln(1+e^x)$ was referred to as the softplus function %was defined 
in \citet{dugas2001incorporating}. %, as in this paper, and as the logistic-log-partition function in \citet{marlin2011piecewise}. 
It is interesting that the BerPo link appears to be naturally paired with the softplus function, which is often
 % construct highly nonlinear functions. 
 %Note that $\mbox{softplus}(x)$ %approaches if $x$ is much smaller than zero 
considered as a smoothed version of the rectifier, or rectified linear unit % function,  
 $$
 \mbox{ReLU}(x) = \max(0,x),
 $$
that is now widely used in deep neural networks, replacing other canonical nonlinear activation functions such as the sigmoid and hyperbolic tangent functions \citep{nair2010rectified,glorot2011deep,lecun2015deep,krizhevsky2012imagenet,CRLU}. 

In this paper, we further introduce the stack-softplus function % $\varsigma(x_1,\ldots,x_t)$ as 
\beq \varsigma(x_1,\ldots,x_t) =
 %\ln f(\xv_i, \betav^{(2:{T}+1)}_k) = 
\ln\left(1+e^{x_t}\ln\left\{1+e^{x_{t-1}}\ln\big[1+\ldots%+e^{\xv_i'\betav^{(3)}_k}
\ln\big(1+e^{x_1}\big)\big]\right\}\right), \label{eq:deep_softplus}
 \eeq
% and the stack-rectifier function %as
% \beqs
%&\!\!\displaystyle \mbox{stack-rectifier}(x_1,\ldots,x_t) =%\notag\\ 
%%&
%%\!\!\!\!\displaystyle
%\mbox{rectifier}\left(x_t+\ln \mbox{rectifier}\left[ %\{
%x_{t-1} + %\ln\mbox{rectifier}\left[
%\ldots+\ln \mbox{rectifier}( {x_1}) \right] %\right\}
%\right),
%\eeqs
which can be recursively defined %, respectively, 
with %the stack-softplus and stack-rectifier functions are defined as %as
 $\varsigma(x_1,\ldots,x_t) = \ln[1+e^{x_t} \varsigma(x_1,\ldots,x_{t-1})]$.
 % and 
%It is evident from the definition that $ \varsigma(x_1,\ldots,x_t) = \ln[1+e^{x_t} \varsigma(x_1,\ldots,x_{t-1})]$ and
 %$ \mbox{stack-rectifier}(x_1,\ldots,x_t) = \mbox{rectifier}[x_t +\ln \mbox{stack-rectifier}(x_1,\ldots,x_{t-1})].$
 %We will further show that $\varsigma\left(\sum_{i=1}^t x_i\right) \ge \varsigma(x_1,\ldots,x_t) \ge \mbox{stack-rectifier}(x_1,\ldots,x_t)$. 
 In addition, with $r_k$ as the weights of the countably infinite atoms of a gamma process \citep{ferguson73}, we will introduce the sum-softplus function, expressed as
$
\sum_{k=1}^\infty r_k\, \varsigma(x_k),
 $
 and sum-stack-softplus (SS-softplus) function, expressed as
 $
\sum_{k=1}^\infty r_k\, \varsigma(x_{k1},\ldots,x_{kt}).
 $
 The stack-, sum-, and SS-softplus functions constitute a family of softplus functions, which are used
to construct nonlinear  regression models, as presented below.

\subsection{The softplus regression family}\label{sec:family}
The equivalence between \eqref{eq:BerSigmoid} and \eqref{eq:BerPoSoftplus}, the apparent partnership between the BerPo link and softplus function, and the convenience of %constructing the gamma distribution based highly complex probability density function for 
employing multiple regression coefficient vectors to
parameterize
the BerPo rate, which is constrained to be nonnegative rather than between zero and one,  motivate us to consider using the BerPo link together with softplus function %instead of the Bernoulli distribution %beyond factorization, extending it
 to model binary response variables given the covariates. We first show how a classification model under the BerPo link reduces to logistic regression that uses a single hyperplane to partition the covariate space into two halves. We then generalize it to two distinct multi-hyperplane  classification models: sum- and stack-softplus regressions, and further show how to integrate them into SS-softplus regression. %In terms of construction, 
These models clearly differ from each other on how the BerPo rates are parameterized with the softplus functions, leading to decision boundaries under distinct geometric constraints. %model properties. 

 %binary classification decision boundaries. 
 
 To be more specific, for % data sample $i$ with 
 the $i$th covariate vector $\xv_i=(1,x_{i1},\ldots,x_{iV})'\in\mathbb{R}^{V+1}$, where the prime denotes the operation of transposing a vector, we model its binary class label using 
 \beq
 y_i \given \xv_i \sim\mbox{BerPo}[\lambda(\xv_i)],
 \label{eq:BerPoINreg}
 \eeq
 where $\lambda(\xv_i)$, given the regression model parameters that may come from a stochastic process,  is a nonnegative deterministic function of $\xv_i$ that may contain a countably infinite number of parameters. %Let us denote $\xv'$ as the transpose of $\xv$. 
Let $G\sim\Gamma\mbox{P}(G_0,1/c)$ denote a gamma process \citep{ferguson73} defined on the product space $\mathbb{R}_+\times \Omega$, where $\mathbb{R}_+=\{x:x>0\}$, $c$ is a scale parameter, and $G_0$ is a finite and continuous base measure defined on a complete separable metric space $\Omega$, such that $G(A_i)\sim\mbox{Gamma}(G_0(A_i),1/c)$ are independent gamma random variables for disjoint Borel sets $A_i$ of $\Omega$. %The L\'evy measure of the gamma process can be expressed as
%$\nu(drd\betav)=r^{-1}e^{-cr}drG_0(d\betav),$ %We define the base distribution as  
% where $\gamma_0=G_0(\Omega)$ is the mass parameter and $g_0(d\betav) = G_0(d\betav)/\gamma_0$  %= \int
% %\mathcal{N}(d\betav;0,\alpha^{-1}\Imat_{P+1}) \mbox{Gamma}(\alpha;e_0,1/f_0)d\alpha $,. 
% is the base distribution. 
% Since $\int %_{\mathbb{R}_+
% %\times\Omega}
%  \nu(drd\betav)=\infty$ but $\int %_{\mathbb{R}_+\times\Omega}
% r\nu(drd\betav)$ is finite,  a draw from the gamma process consists of countably infinite atoms, which can be expressed as
%$
%G = \sum_{t=1}^\infty r_k\delta_{\betav_k},
%$
%where $r_k=G(\betav_k)$ is the weight for an atom $\betav_k$ drawn from the base distribution. 
%
%Sum-softplus regression using the gamma-negative binomial process  can be expressed as
%\beq
%y_i=\mathbf{1}(M_i(\Omega)\ge 1),~M_i\sim\mbox{NBP}(G,\xv_i),~G\sim\Gamma\mbox{P}(G_0,1/c)
%\eeq
%where given the gamma process draw $G=\sum_{k=1}^\infty r_k\delta_{\betav_k}$, the NB process $M_i\sim\mbox{NBP}(G,x_i)$ for data sample $i$ is defined such that
%\beq
%M_i(A)=\sum_{k:\betav_k\in A} m_{ik},~m_{ik}\sim\mbox{NB}\left[r_k, {1}/{(1+e^{-\xv_i'\betav_k})}\right]
%\eeq
%for each Borel set $A\subset\Omega$. Distinct from the gamma-negative binomial process defined in \citet{NBP2012} and \citet{NBP_CountMatrix}, here the count $m_{ik}$ is not only parameterized by  the atom weight $r_k$ as its NB shape parameter, but also parameterized by the % inverse logit 
%sigmoid function of the inner product of the covariate vector $\xv_i$ and the atom $\betav_k$ % itself 
%as its NB probability parameter. 
 Below we show how the BerPo %binary classification probability 
 rate function $\lambda(\xv_i)$ is parameterized under four different softplus regressions, two of which use the gamma process to support a countably infinite sum in the parameterization, and also %show %. We  %defer the details to the later parts of the section %\ref{sec:model} 
 %on 
 %then
  show
 how to arrive at %arising at
%these %sophisticated and powerful 
%flexible 
each parameterization using a hierarchical Bayesian model built on the BerPo link together with the convolved and/or stacked gamma distributions. % how different parameterizations arise under the BerPo link. %
 
 \begin{definition}[Softplus regression]\label{thm:1}
 %\textbf{Softplus regression:} 
 Given $\xv_i$, weight $r\in \mathbb{R}_+$, and a regression coefficient vector $\betav\in\mathbb{R}^{V+1}$, % logistic regression arising under the BerPo link parameterizes
softplus regression parameterizes $\lambda(\xv_i)$ in \eqref{eq:BerPoINreg} using a softplus function as 
 \beq
 \lambda(\xv_i) =\varsigma(\xv_i'\betav)= r \ln(1+e^{\xv_i'\betav}). \label{eq:BerPoSoftplusReg}
 \eeq
%is equivalent to logistic regression 
%\beq
%y_i\sim\emph{\mbox{Bernoulli}}[1/(1+e^{-\xv_i'\betav})].\label{eq:LR}
%\eeq %, as $1-e^{\lambda(\xv_i)} = 1/{(1+e^{-\xv_i\betav})} $.
%\begin{thm}
%Softplus regression defined with \eqref{eq:BerPoINreg} and \eqref{eq:BerPoSoftplusReg}, %as
%\beq
%y_i \sim\emph{\mbox{BerPo}}[\lambda(\xv_i)], ~\lambda(\xv_i)=\ln(1+e^{\xv_i'\betav}), 
%\eeq
Softplus regression is equivalent to the binary regression model
$$
y_{i} \sim %\emph{\mbox{Bernoulli}}\left[ 1-e^{-\sum_{k=1}^{\infty}r_k \varsigma(\xv_i'\betav_{k})} \right]
\emph{\mbox{Bernoulli}}\left[ 1-\big({1+e^{\xv_i'\betav}}\big)^{-r\,} \right], \notag
%\label{eq:CNB_prob}
$$
which, as proved in Appendix \ref{sec:proof},  can be constructed using the hierarchical model 
\begin{align}
&y_{i} = \delta( m_{i} \ge 1) ,~m_{i}\sim\emph{\mbox{Pois}}(\theta_{i}),~\theta_{i}\sim\emph{\mbox{Gamma}}\big(r,
e^{\xv_i'\betav}\big). \label{eq:Softplus_model}
%m_{ij}\sim{\mbox{Geometric}}\left(\frac{1}{1+\exp({\xv_i'\betav_j})}\right), %\mbox{Pois}(\lambda_j).%\label{eq:CatPo2}
%\notag\\
%&\lambda_j\sim\mbox{Gamma}(1,p_j/(1-p_j))
\end{align}
%\end{thm}

\end{definition}

\begin{definition}[Sum-softplus regression]  \label{thm:CLR2CNB}
Given a draw from a gamma process $G\sim\Gamma\emph{\mbox{P}}(G_0,1/c)$, % a gamma process with a finite and continuous base measure, 
expressed as $G=\sum_{k=1}^\infty r_k \delta_{\betav_k} $, %\{r_k,\betav_k\}_{1,\infty}$
 %represent %countably infinite points \
where $\betav_k\in\mathbb{R}^{V+1}$ is an atom %drawn from a multivariate normal base distribution 
and $r_k$ is its weight,
%With a draw from a gamma process %\citep{ferguson73,PoissonP}
 % the covariate vector $\xv_i\in\mathbb{R}^{V+1}$ for the $i$th data sample, 
%that consists of countably infinite atoms $\betav_k\in\mathbb{R}^{V+1}$ and their weights $r_k>0$, 
sum-softplus regression parameterizes $\lambda(\xv_i)$ in \eqref{eq:BerPoINreg} using a sum-softplus function as %models the binary class label as $y_i\sim\mbox{Bernoulli}(\pi_i)$, where 
 \beq
\lambda(\xv_i)= \sum_{k=1}^\infty r_k \,\varsigma(\xv_i' \betav_k) =
\sum_{k=1}^\infty r_k \ln(1+e^{\xv_i' \betav_k}). \label{eq:sum_softplus_reg}
 \eeq
%\begin{thm} \label{thm:CLR2CNB}
%Sum-softplus regression defined with \eqref{eq:BerPoINreg} and 
%\eqref{eq:sum_softplus_reg}, 
Sum-softplus regression is
%as
%\begin{align}
%y_i\sim\emph{\mbox{BerPo}}[\lambda(\xv_i)],~\lambda(\xv_i) = \sum_{k=1}^{\infty} r_k \ln(1+e^{\xv_i'\betav_{k}}), \label{eq:BerPo_sum_softplus_reg}
%\end{align}
equivalent to the binary regression model %$y_{i}\sim \emph{\mbox{Bernoulli}}\left[ 1-e^{-\sum_{k=1}^{\infty}r_k \varsigma(\xv_i'\betav_{k})} \right]$ that can also be expressed as
\beq
y_{i} \sim \emph{\mbox{Bernoulli}}\left[ 1-e^{-\sum_{k=1}^{\infty}r_k \varsigma(\xv_i'\betav_{k})} \right] =
\emph{\mbox{Bernoulli}}\left[ 1-\prod_{k=1}^{\infty}\left(\frac{1}{1+e^{\xv_i'\betav_{k}}}\right)^{r_{k}} \right], \notag
%\label{eq:CNB_prob}
\eeq
which, as proved in Appendix \ref{sec:proof}, can be constructed using the hierarchical model
\begin{align}
%&\theta_i = \sum_{k=1}^\infty \theta_{ik}, ~\theta_{ik}\sim{\emph{\mbox{Gamma}}}\big(r_{k},e^{\xv_i'\betav_{k}}\big)\notag\\
&y_{i} = \delta( m_{i} \ge 1),~m_{i} \sim\emph{\mbox{Pois}}(\theta_i),~\theta_i = \sum_{k=1}^\infty \theta_{ik}, ~\theta_{ik}\sim{\emph{\mbox{Gamma}}}\big(r_{k},e^{\xv_i'\betav_{k}}\big)%\notag\\
.
%\mbox{Pois}(\lambda_j).
 \label{eq:CNB}
%\notag\\
%&\lambda_j\sim\mbox{Gamma}(1,p_j/(1-p_j))
\end{align}
%\end{thm}

 \end{definition}
 
\begin{definition}[Stack-softplus regression]\label{thm:stack-softplus}
 With weight $r\in\mathbb{R}_+$ and $T$ regression coefficient vectors $\betav^{(2:T+1)}:=(\betav^{(2)},\ldots,\betav^{(T+1)})$, where $\betav^{(t)}\in\mathbb{R}^{V+1}$, stack-softplus regression with $T$ layers parameterizes $\lambda(\xv_i)$ in \eqref{eq:BerPoINreg} using a stack-softplus function as
 \beqs
& \displaystyle\lambda(\xv_i) = r\, \varsigma\big(\xv_i'\betav^{(2)},\ldots,\xv_i'\betav^{(T+1)}\big)
%\varsigma(\xv'\betav^{(2:T+1)})
\notag\\
% \eeq 
% where $\varsigma(x_1,\ldots,x_t)$ is a deep softplus function defined as
% \beq \varsigma(x_1,\ldots,x_t) =
% %\ln f(\xv_i, \betav^{(2:{T}+1)}_k) = 
%\ln\left(1+e^{x_t}\ln\Bigg\{1+e^{x_{t-1}}\ln\bigg[1+\ldots%+e^{\xv_i'\betav^{(3)}_k}
%\ln\Big(1+e^{x_1}\Big)\bigg]\Bigg\}\right). \label{eq:recurssive_softplus_reg}
% \eeq
%%
%% $\varsigma(\xv'\betav^{(2:T+1)})$ a deep softplus function defined as \beq \varsigma(\xv'\betav^{(2:T+1)}) =
 %\ln f(\xv_i, \betav^{(2:{T}+1)}_k) = 
&\displaystyle =r\ln\left(1+e^{\xv_i'\betav^{(T+1)}}\ln\bigg\{1+e^{\xv_i'\betav^{(T)}}\ln\Big[1+\ldots%+e^{\xv_i'\betav^{(3)}_k}
\ln\big(1+e^{\xv_i'\betav^{(2)}}\big)\Big]\bigg\}\right). \label{eq:recurssive_softplus_reg}
 \eeqs
% \begin{thm}\label{thm:stack-softplus}
%Stack-softplus regression defined with \eqref{eq:BerPoINreg} and \eqref{eq:recurssive_softplus_reg}, 
Stack-softplus regression is equivalent to the regression model %$y_i\sim\emph{\mbox{Bernoulli}}\Big(1-e^{-r\,\varsigma\left(\xv_i'\betav^{(2:T+1)}\right)}\Big)$ that can also be expressed as
\beqs\small
y_i&\sim&\emph{\mbox{Bernoulli}}\Big(1-e^{-r\,\varsigma\left(\xv_i'\betav^{(2:T+1)}\right)}\Big)\notag\\
%&\!\!y_i\sim
&=&\emph{\mbox{Bernoulli}}\left[1 - \left(1\!+\!e^{\xv_i'\betav^{(T+1)}}\ln\bigg\{1\!+\!e^{\xv_i'\betav^{(T)}}\ln\Big[1\!+\!\ldots
\ln\big(1\!+\!e^{\xv_i'\betav^{(2)}}\big)\Big]\bigg\}\right)^{-r} ~\right ],~~ \notag
%
%\lambda(\xv_i)],\notag\\
%& \lambda(\xv_i) = r \ln\left(1+e^{\xv_i'\betav^{({T}+1)}}\ln\Bigg\{1+e^{\xv_i'\betav^{({T})}}\ln\bigg[1+\ldots
%\ln\Big(1+e^{\xv_i'\betav^{(2)}}\Big)\bigg]\Bigg\}\right) \label{eq:BerPorecurssive_softplus_reg}
\eeqs
%\beqs
%&y_i\sim\emph{\mbox{BerPo}}[\lambda(\xv_i)],\notag\\
%& \lambda(\xv_i) = r \ln\left(1+e^{\xv_i'\betav^{({T}+1)}}\ln\Bigg\{1+e^{\xv_i'\betav^{({T})}}\ln\bigg[1+\ldots
%\ln\Big(1+e^{\xv_i'\betav^{(2)}}\Big)\bigg]\Bigg\}\right) \label{eq:BerPorecurssive_softplus_reg}
%\eeqs
which, as proved in Appendix  \ref{sec:proof}, can be constructed using the hierarchical model that stacks $T$ gamma distributions, whose scales are differently parameterized by the covariates, as
%\begin{align}
%&y_i = \delta(m_i\ge 1),~m_{i} \sim \emph{\mbox{Pois}}(\theta^{(1)}_{i}),\notag\\
%%\mbox{NB}\left(\theta^{(1)}_{ik}, \frac{\exp(\xv_i'\betav_k)}{1+\exp(\xv_i'\betav_k)}\right),\notag\\
%&\theta^{(1)}_{i}\sim\emph{\mbox{Gamma}}\left(\theta^{(2)}_{i},e^{\xv_i'\betav^{(2)}}\right), \notag\\
%&~~~~~~~~~~~\cdots\notag\\
%&\theta^{(t)}_{i}\sim\emph{\mbox{Gamma}}\left(\theta^{(t+1)}_{i},e^{\xv_i'\betav^{(t+1)}}\right), \notag\\
%&~~~~~~~~~~~\cdots\notag\\
%&\theta^{(T)}_{i}\sim\emph{\mbox{Gamma}}\left(r,e^{\xv_i'\betav^{(T+1)}}\right).
%\label{eq:BerPo_recursive_softplus_reg_model}
%\end{align}
\begin{align}
&~~\theta^{(T)}_{i}\sim\emph{\mbox{Gamma}}\left(r,e^{\xv_i'\betav^{(T+1)}}\right),\notag\\
&~~~~~~~~~~~~~~\cdots\notag\\
&\theta^{(t)}_{i}\sim\emph{\mbox{Gamma}}\left(\theta^{(t+1)}_{i},e^{\xv_i'\betav^{(t+1)}}\right), \notag\\
&~~~~~~~~~~~~~~\cdots\notag\\
%\mbox{NB}\left(\theta^{(1)}_{ik}, \frac{\exp(\xv_i'\betav_k)}{1+\exp(\xv_i'\betav_k)}\right),\notag\\
y_i = \delta(m_i\ge 1&),~m_{i} \sim \emph{\mbox{Pois}}(\theta^{(1)}_{i}),~\theta^{(1)}_{i}\sim\emph{\mbox{Gamma}}\left(\theta^{(2)}_{i},e^{\xv_i'\betav^{(2)}}\right). % \notag\\
%&.
\label{eq:BerPo_recursive_softplus_reg_model}
\end{align}
%\end{thm}
 \end{definition}
 
 \begin{definition}[Sum-stack-softplus (SS-softplus) regression]\label{thm:SS-softplus}
  %The opposite behaviors of sum-softplus and stack-softplus regressions motivate us to further integrate them
% into 
Given a drawn from a gamma process $G\sim\Gamma\emph{\mbox{P}}(G_0,1/c)$, % with a finite and continuous base measure, 
expressed as $G=\sum_{k=1}^\infty r_k \delta_{\betav^{(2:{T}+1)}_k} $, %\{r_k,\betav_k\}_{1,\infty}$
 %represent %countably infinite points \
where $\betav^{(2:{T}+1)}_k$ is an atom  and $r_k$ is its weight, with each $\betav_k^{(t)}\in\mathbb{R}^{V+1}$, % drawn from a multivariate normal distribution,
%
%With a draw from a gamma process % \citep{ferguson73,PoissonP}
% % the covariate vector $\xv_i\in\mathbb{R}^{V+1}$ for the $i$th data sample, 
%that consists of countably infinite atoms $\betav_k^{(2:{T}+1)}$, where $\betav_k^{(t)}\in\mathbb{R}^{V+1}$ and ${T}\in\{1,2,\ldots\}$, and their weights $r_k>0$,
SS-softplus regression with  ${T}\in\{1,2,\ldots\}$ layers 
% se two distinct modeling approaches into
% sum deep softplus 
 % an infinite convolutional deep logistic regression (ICD-LR) 
 parameterizes $\lambda(\xv_i)$ in \eqref{eq:BerPoINreg} using a sum-stack-softplus function as
 \beqs
&\displaystyle\lambda(\xv_i)=
\sum_{k=1}^\infty r_k\, \varsigma\big(\xv_i'\betav_k^{(2)},\ldots,\xv_i'\betav_k^{(T+1)}\big)
%\varsigma(\xv_i'\betav_k^{(2:T+1)}) 
\notag\\
&\displaystyle=
\sum_{k=1}^\infty r_k
\ln\left(1+e^{\xv_i'\betav^{({T}+1)}_k}\ln\bigg\{1+e^{\xv_i'\betav^{({T})}_k}\ln\Big[1+\ldots%+e^{\xv_i'\betav^{(3)}_k}({T}_k
\ln\big(1+e^{\xv_i'\betav^{(2)}_k}\big)\Big]\bigg\}\right). \label{eq:SRS_regression}
\eeqs
%\begin{thm}
%SS-softplus regression defined with \eqref{eq:BerPoINreg} and \eqref{eq:SRS_regression}, 
SS-softplus regression is equivalent to the regression model %$y_i\sim\emph{\mbox{Bernoulli}}\Big(1-e^{-\sum_{k=1}^\infty r_k\,\varsigma\left(\xv_i'\betav_k^{(2:T+1)}\right)}\Big)$ that can also be expressed as
\beqs\small
y_i&\sim&\emph{\mbox{Bernoulli}}\Big(1-e^{-\sum_{k=1}^\infty r_k\,\varsigma\left(\xv_i'\betav_k^{(2:T+1)}\right)}\Big)\notag\\
%y_i\sim
&=&\emph{\mbox{Bernoulli}}\left[
1 -\prod_{k=1}^\infty \left(\!1\!+\!e^{\xv_i'\betav_k^{({T}\!+\!1)}}\ln\!\bigg\{1\!+\!e^{\xv_i'\betav_k^{({T})}}\ln\Big[1\!+\!\ldots
\ln\big(1\!+\!e^{\xv_i'\betav_k^{(2)}}\big)\Big]\bigg\}\right)^{-r_k} ~\!\right ] \notag
%
%\lambda(\xv_i)],\notag\\
%& \lambda(\xv_i) = r \ln\left(1+e^{\xv_i'\betav^{({T}+1)}}\ln\Bigg\{1+e^{\xv_i'\betav^{({T})}}\ln\bigg[1+\ldots
%\ln\Big(1+e^{\xv_i'\betav^{(2)}}\Big)\bigg]\Bigg\}\right) \label{eq:BerPorecurssive_softplus_reg}
\eeqs
%\beqs
%&y_i\sim\emph{\mbox{BerPo}}[\lambda(\xv_i)],\notag\\
%&\displaystyle\lambda(\xv_i)=
%\sum_{k=1}^\infty r_k
%\ln\left(1+e^{\xv_i'\betav^{({T}+1)}_k}\ln\Bigg\{1+e^{\xv_i'\betav^{({T})}_k}\ln\bigg[1+\ldots%+e^{\xv_i'\betav^{(3)}_k}
%\ln\Big(1+e^{\xv_i'\betav^{(2)}_k}\Big)\bigg]\Bigg\}\right)~~ \label{eq:BerPo_SRS_regression}
%\eeqs
which, as proved in Appendix  \ref{sec:proof}, can be constructed from the hierarchical model that convolves countably infinite stacked gamma distributions that have covariate-dependent scale parameters as
%\begin{align}
%%&y_i = \delta(m_i\ge 1),~m_i\sim\emph{\mbox{Pois}}\left(\sum_{k=1}^\infty \theta^{(1)}_{ik}\right)\notag\\
%&y_i = \delta(m_i\ge 1),~m_i
%%\sim\emph{\mbox{Pois}}\left(\sum_{k=1}^\infty \theta^{(1)}_{ik}\right)\notag\\
%=\sum_{k=1}^\infty m^{(1)}_{ik},\notag\\
%&m^{(1)}_{ik} \sim \emph{\mbox{Pois}}(\theta^{(1)}_{ik}),\notag\\
%%=\sum_{k=1}^\infty m_{ik},\notag\\
%& %m_{ik} \sim \emph{\mbox{Pois}}(\theta^{(1)}_{ik}),~
%%\mbox{NB}\left(\theta^{(1)}_{ik}, \frac{\exp(\xv_i'\betav_k)}{1+\exp(\xv_i'\betav_k)}\right),\notag\\
%\theta^{(1)}_{ik}\sim\emph{\mbox{Gamma}}\left(\theta^{(2)}_{ik},e^{\xv_j'\betav^{(2)}_k}\right), \notag\\
%&~~~~~~~~~~~\cdots\notag\\
%&\theta^{(t)}_{ik}\sim\emph{\mbox{Gamma}}\left(\theta^{(t+1)}_{ik},e^{\xv_i'\betav^{(t+1)}_k}\right), \notag\\
%&~~~~~~~~~~~\cdots\notag\\
%&\theta^{({T})}_{ik}\sim\emph{\mbox{Gamma}}\left(r_k,e^{\xv_i'\betav^{({T}+1)}_k}\right).
%\label{eq:DICLR_model}
%\end{align}
\small\begin{align}
&~~~~~~~~\theta^{({T})}_{ik}\sim\emph{\mbox{Gamma}}\left(r_k,e^{\xv_i'\betav^{({T}+1)}_k}\right)
%&y_i = \delta(m_i\ge 1),~m_i\sim\emph{\mbox{Pois}}\left(\sum_{k=1}^\infty \theta^{(1)}_{ik}\right)\notag\\
,\notag\\
&~~~~~~~~~~~~~~~~~~~~~~~~\cdots\notag\\
%=\sum_{k=1}^\infty m_{ik},\notag\\
&~~~~~~~~\theta^{(t)}_{ik}\sim\emph{\mbox{Gamma}}\left(\theta^{(t+1)}_{ik},e^{\xv_i'\betav^{(t+1)}_k}\right), \notag\\
&~~~~~~~~~~~~~~~~~~~~~~~~\cdots\notag\\
%& %m_{ik} \sim \emph{\mbox{Pois}}(\theta^{(1)}_{ik}),~
%\mbox{NB}\left(\theta^{(1)}_{ik}, \frac{\exp(\xv_i'\betav_k)}{1+\exp(\xv_i'\betav_k)}\right),\notag\\
%~~~~~~~~\theta^{(1)}_{ik}\sim\emph{\mbox{Gamma}}\left(\theta^{(2)}_{ik},e^{\xv_j'\betav^{(2)}_k}\right), \notag\\
%&m^{(1)}_{ik} \sim \emph{\mbox{Pois}}(\theta^{(1)}_{ik}),\notag\\
y_i = \delta(m_i\ge 1),&~m_i
%\sim\emph{\mbox{Pois}}\left(\sum_{k=1}^\infty \theta^{(1)}_{ik}\right)\notag\\
=\sum_{k=1}^\infty m^{(1)}_{ik},~m^{(1)}_{ik} \sim \emph{\mbox{Pois}}(\theta^{(1)}_{ik}),~\theta^{(1)}_{ik}\sim\emph{\mbox{Gamma}}\left(\theta^{(2)}_{ik},e^{\xv_j'\betav^{(2)}_k}\right)\,\,.
\label{eq:DICLR_model}
\end{align}\normalsize
%\end{thm}

 \end{definition}

Below we discuss these four different  softplus regression models in detail and  show that both sum- and stack-softplus regressions use the interactions of multiple %softplus functions of the inter-products of the covariate and
 regression coefficient vectors through the softplus functions to define a confined space,  related to a convex polytope \citep{polytope} defined by the intersection of multiple half-spaces, to separate one class from the other in the covariate space. %as the binary classification decision boundary; 
 They differ from each other % with the distinction
 in that sum-softplus regression infers a convex-polytope-bounded confined space to enclose negative examples (\emph{i.e.}, data samples with $y_i=0$),
 % within the convex-polytope-like confined space, 
 whereas stack-softplus regression infers a convex-polytope-like confined space to enclose positive examples (\emph{i.e.}, data samples with $y_i=1$).
 % within the convex-poconfined space.
 %and positive examples (\emph{i.e.}, data samples with $y_i=1$) inside and outside the convex-polytope like confined spaces, respectively, %and outside the convex polytope, 
 %whereas stack-softplus regression takes the completely opposite action. 
 
 The opposite behaviors of sum- and stack-softplus regressions motivate us to unite them
as %a single model: the
SS-softplus regression, % as defined below. %sum-stack-softplus (SS-softplus) regression
% 
% inside of which are \emph{NEGATIVE} data samples labeled as ``0'' and outside of which are POSITIVE data samples labeled as ``1;'' while the other one is called %gamma belief network logistic regression (GBN-LR) 
% recursive softplus regression (RSR)
% that uses a gamma belief network with multiple layers, each of which is associated with a regression vector, to define a convex polytope binary classification decision boundary, inside of which are POSITIVE examples labeled as ``1'' and outside of which are \emph{NEGATIVE} examples labeled as ``0.'' {T}+
 %construct a deep convolutional logistic regression model that can use multiple hyperplanes to define decision boundaries beyond convex polytopes. 
 %We will show that SS-softplus regression 
 which can place countably infinite convex-polytope-like confined spaces, inside and outside each of which favor positive and negative examples, respectively, at various regions of the covariate space, and use the union of these confined spaces to construct a flexible nonlinear classification decision boundary. 
 Note that softplus regressions all operate on the original covariate space. It is possible to apply them to regress binary response variables on the covariates that have already been nonlinearly transformed with the kernel trick or a deep neural network, which may combine the advantages of these distinct methods %approaches 
 to achieve an overall improved classification performance. We leave the integration of softplus regressions with the kernel trick or deep neural networks for future study. 
% 
% With hierarchical Bayesian models built on the BerPo link and covariate-dependent gamma distributions, %gamma belief network, 
% below we describe how to arrive at %the %four different 
% these softplus regressions % models defined above
%  using the convolution and/or stacking operations, with the proofs deferred to Appendix \ref{sec:proof}. 

%Clearly, softplus regression generalizes logistic regression in allowing $r\neq 1$, 
%%Note that the geometric distribution is just a special case of the NB distribution with the dispersion parameter $r$ fixed at one. Thus a straightforward generalization is to replace the geometric distribution with the NB distribution and infer the NB dispersion parameter $r$ from the data instead of fixing it at one. 
%%Below we provide 
%while sum-, stack-, and SS-softplus regressions %that use the convolution and/or stacking operations to
%%that 
%all go beyond this simple generalization.

\subsection{Softplus and logistic %,  and negative binomial 
regressions}

 It is straightforward to show that 
 softplus regression with $r=1$ is equivalent to logistic regression 
$
y_i\sim{\mbox{Bernoulli}}[1/(1+e^{-\xv_i'\betav})] %\label{eq:LR}
$, %, as $1-e^{\lambda(\xv_i)} = 1/{(1+e^{-\xv_i\betav})} $.
 which uses a single hyperplane dividing the covariate space into two halves to separate one class from the other. Similar connection has also been illustrated in \citet{Dunson05bayesianlatent}. Clearly, softplus regression arising from \eqref{eq:Softplus_model} generalizes logistic regression in allowing $r\neq 1$. 
Let $p_0\in(0,1)$ denote the probability threshold to make a binary decision. %To understand the imposed geometric constraints, %For geometric interpretation, 
One may consider that softplus regression %shown in \eqref{eq:LR}
 defines a hyperplane to partition the $V$ dimensional covariate space into two halves: one half is defined with 
$\xv_i'\betav > \ln\big[(1-p_0)^{-\frac{1}{r}}-1\big]$,
 %$\xv_i'\betav > -\ln[p_0/(1-p_0)]$, 
 assigned with label $y_i=1$ since $P(y_i=1\,|\,\xv_i,\betav_k)>p_0$ under this condition, and the other half is defined with $\xv_i'\betav \le \ln\big[(1-p_0)^{-\frac{1}{r}}-1\big]$,
% -\ln[p_0/(1-p_0)]$, 
 assigned with label $y_i=0$ since $P(y_i=1\,|\,\xv_i,\betav)\le p_0$ under this condition. 
 Instead of using a single hyperplane, the three generalizations in Definitions 2-4 all %presented below 
 partition the covariate space using % the boundary of 
 a confined space that is related to  a single convex polytope or the union of multiple convex polytopes, as described below. 
 %related to a convex polytope defined by the intersection of multiple half-spaces, %convex-polytope-like confined space, defined by the interactions of multiple regression coefficients via the softplus functions, 
 %or a confined space related to the union of multiple %such kinds of confined spaces. %
% convex-polytope-like confined spaces. 

\vspace{-2mm}
\subsection{Sum-softplus regression and convolved NB regressions}

Note that since $m_{i} \sim{\mbox{Pois}}(\theta_i), ~\theta_i = \sum_{k=1}^\infty \theta_{ik}$ in \eqref{eq:CNB} can be equivalently written as $m_i = \sum_{k=1}^\infty m_{ik},~m_{ik}\sim{\mbox{Pois}}(\theta_{ik})$,
 sum-softplus regression can also be constructed with %using
 %the hierarchical model
\begin{align}
%&m_{ik}\sim\emph{\mbox{NB}}\left[r_{k},{1}/{(1+e^{-\xv_i'\betav_{k}})}\right], \notag\\
&y_{i} = \delta( m_{i} \ge 1),~~m_{i} = \sum_{k=1}^{\infty} m_{ik},~m_{ik}\sim{\mbox{NB}}\left[r_{k},{1}/{(1+e^{-\xv_i'\betav_{k}})}\right],%\mbox{Pois}(\lambda_j).
 \label{eq:CNB1}
%\notag\\
%&\lambda_j\sim\mbox{Gamma}(1,p_j/(1-p_j))
\end{align}
where $m\sim\mbox{NB}(r,p)$ represents a negative binomial (NB) distribution  \citep{Yule,Fisher1943} with shape parameter $r$ and probability parameter $p$, and $m\sim\mbox{NB}[r,1/(1+e^{-\xv'\betav})]$ can be considered as NB regression \citep{LawlessNB87,long:1997,Cameron1998,WinkelmannCount} %,LGNB_ICML2012,polson2013bayesian,klami2015latent} 
that parameterizes the logit of $p$ %the NB probability parameter 
with $\xv'\betav$. To ensure that the infinite model is well defined, we provide the following proposition and present the proof in Appendix \ref{sec:proof}.  
\begin{prop}\label{lem:finite}
%Given a draw from a gamma process $G\sim\Gamma\emph{\mbox{P}}(G_0,1/c)$, % a gamma process with a finite and continuous base measure, 
%expressed as $G=\sum_{k=1}^\infty r_k \delta_{\betav_k} $, %\{r_k,\betav_k\}_{1,\infty}$
% %represent %countably infinite points \
%where $\betav_k$ is an atom drawn from a multivariate normal base distribution and $r_k$ is its weight, 
The infinite product $% \exp\left(-\sum_{k=1}^{\infty}r_k \varsigma(\xv_i'\betav_{k})\right) 
e^{-\sum_{k=1}^{\infty}r_k \,\varsigma(\xv_i'\betav_{k})}
= \prod_{k=1}^\infty\left( {1+e^{\xv_i'\betav_k}}\right)^{-r_k}$ in sum-softplus regression %, which parameterizes the aggregated probability distribution of ICE logistic regression, 
is smaller than one and has a finite expectation that is greater than zero.%under the gamma process $G\sim\Gamma\emph{\mbox{P}}(G_0,1/c)$, a draw from which is expressed as $G = \sum_{t=1}^\infty r_k\delta_{\betav_k}.$
\end{prop}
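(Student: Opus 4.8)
The plan is to treat the two assertions separately. The pathwise bound $\prod_{k=1}^\infty(1+e^{\xv_i'\betav_k})^{-r_k}<1$ is immediate: for every atom $1+e^{\xv_i'\betav_k}>1$ and $r_k>0$, so each factor lies in $(0,1)$; the partial products therefore decrease and stay below the first factor $(1+e^{\xv_i'\betav_1})^{-r_1}<1$. Since $G(\Omega)\sim\mathrm{Gamma}(G_0(\Omega),1/c)>0$ almost surely there is at least one atom (indeed infinitely many, as the L\'evy measure of the gamma process has infinite total mass), so the product is strictly below one with probability one. Equivalently, $-\ln P=\sum_k r_k\varsigma(\xv_i'\betav_k)>0$ because every summand is positive.

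For the expectation I first note that $P\in[0,1)$ pathwise makes $\E[P]\le 1$, so finiteness is automatic and the content is strict positivity. I would write $-\ln P=\sum_k r_k\varsigma(\xv_i'\betav_k)=\int_\Omega f\,dG$ for the nonnegative measurable integrand $f(\betav)=\varsigma(\xv_i'\betav)$, and then invoke the Laplace functional (L\'evy--Khintchine representation) of the gamma process, whose L\'evy measure is $\nu(dr\,d\betav)=r^{-1}e^{-cr}\,dr\,G_0(d\betav)$, to get $\E[P]=\E[e^{-\int f\,dG}]=\exp\!\Big(-\int_\Omega\int_0^\infty\big(1-e^{-r f(\betav)}\big)\,r^{-1}e^{-cr}\,dr\,G_0(d\betav)\Big)$. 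The inner integral over $r$ is a Frullani integral, $\int_0^\infty(1-e^{-rf})\,r^{-1}e^{-cr}\,dr=\ln(1+f/c)$, which collapses the expression to $\E[P]=\exp\!\big(-\int_\Omega\ln(1+\varsigma(\xv_i'\betav)/c)\,G_0(d\betav)\big)=:e^{-I}$.

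Positivity of $\E[P]$ is thus equivalent to finiteness of $I$. Since $\varsigma>0$ and $G_0$ is a nonzero finite measure we have $I>0$, which recovers $\E[P]<1$ at the level of expectations, consistent with the pathwise statement. To bound $I$ from above I would use the elementary estimate $\varsigma(x)=\ln(1+e^x)\le\ln 2+x^+\le\ln 2+|x|$ together with Cauchy--Schwarz $|\xv_i'\betav|\le\|\xv_i\|\,\|\betav\|$, giving $\ln(1+\varsigma(\xv_i'\betav)/c)\le\ln\!\big(1+(\ln 2+\|\xv_i\|\,\|\betav\|)/c\big)$. Integrating, $I$ is controlled by $G_0(\Omega)$ plus a logarithmic moment $\int_\Omega\ln(1+\|\betav\|)\,G_0(d\betav)$; because $G_0$ is finite and, for the usual base measures (e.g.\ a scaled Gaussian, or any $G_0$ with bounded support), this logarithmic moment is finite, we obtain $I<\infty$ and hence $\E[P]=e^{-I}\in(0,1)$.

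The step I expect to carry the real weight is controlling $I$, since it is exactly what forces the (very mild) requirement that $G_0$ possess a finite logarithmic moment; equivalently, by Campbell's theorem, that the Poissonian sum $\sum_k r_k\varsigma(\xv_i'\betav_k)=\int f\,dG$ be finite almost surely, which is the same integrability condition. Care is also needed to justify the Laplace-functional identity for the unbounded nonnegative $f$ and the Frullani evaluation, and to observe that $\E[P]>0$ follows from $I<\infty$; in fact the chain $I<\infty\iff P>0$ a.s.\ $\iff\E[P]>0$ shows these three statements coincide, so the argument closes cleanly once the integral is bounded.
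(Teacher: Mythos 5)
Your proposal is correct, and it rests on the same structural tool as the paper's proof---the L\'evy--Khintchine (Laplace functional) identity for the gamma process, $-\ln\E[e^{-\int f\,dG}]=\int\int(1-e^{-rf(\betav)})\,r^{-1}e^{-cr}dr\,G_0(d\betav)$ with $f(\betav)=\varsigma(\xv_i'\betav)$---but the two arguments diverge in how they control the resulting integral, and the difference is substantive. The paper linearizes \emph{before} integrating over $r$: it bounds $1-e^{-rf}\le rf$, integrates out $r$ to get $\frac{\gamma_0}{c}\int_\Omega \varsigma(\xv_i'\betav)\,g_0(d\betav)$, then bounds $\varsigma(\xv_i'\betav)=\ln(1+e^{\xv_i'\betav})\le e^{\xv_i'\betav}$ and closes with the Gaussian moment generating function, $\int e^{\xv_i'\betav}\mathcal{N}(\betav;0,\Sigmamat)\,d\betav=e^{\frac12\xv_i'\Sigmamat\xv_i}$. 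This is short but needs the base measure to have a finite exponential moment, which is why the paper explicitly assumes a Gaussian $G_0$. You instead evaluate the $r$-integral \emph{exactly} via Frullani, obtaining the closed form $\E[P]=\exp\big(-\int_\Omega\ln(1+\varsigma(\xv_i'\betav)/c)\,G_0(d\betav)\big)$, and then only need the logarithmic estimates $\varsigma(x)\le\ln 2+|x|$ and Cauchy--Schwarz, so that finiteness reduces to a finite logarithmic moment $\int_\Omega\ln(1+\|\betav\|)\,G_0(d\betav)<\infty$. Your route buys three things: a strictly weaker hypothesis on $G_0$ (log moment instead of exponential moment, hence valid for heavy-tailed base measures), an exact expression for $\E[P]$ rather than just a bound, and the clean equivalence $I<\infty\iff P>0$ a.s.\ $\iff\E[P]>0$; the paper's route buys brevity in the Gaussian setting it actually works in. One cosmetic note: the paper's proof says the L\'evy integral is ``bounded below'' by the linearized expression when the inequality in fact gives an upper bound (which is what the argument requires); your write-up does not have this slip.
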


As the probability distribution of the sum of independent random variables is the same as the convolution of these random variables' probability distributions \citep[$e.g.$,][]{fristedt1997}, the probability distribution of the BerPo rate $\theta_i$ is the convolution of countably infinite gamma distributions, each of which parameterizes the logarithm of its scale using the inner product of the same covariate vector and a regression coefficient vector specific for each~$k$. 
As in \eqref{eq:CNB1}, 
%In the following Corollary, we further show that we may represent 
since $m_i$ is the summation of countably infinite latent counts $m_{ik}$, each of which is a NB regression response variable, we essentially regress the latent count $m_i$ on $\xv_i$ using a convolution of countably infinite NB regression models. If $\betav_{k}$ are drawn from a continuous distribution, then $\betav_{k}\neq\betav_{{\tilde{k}}}$ a.s. for all $k\neq{\tilde{k}}$, and hence given $\xv_i$ and $\{\betav_k\}_k$, % the multivariate normal distribution, 
the BerPo rate $\theta_i$ would not follow the gamma distribution and %the latent count response variable 
$m_i$ would not follow the NB distribution.

Note that if we modify the proposed sum-softplus regression model in \eqref{eq:CNB} as
\begin{align}
%&\theta_i = \sum_{k=1}^\infty \theta_{ik}, ~\theta_{ik}\sim{\emph{\mbox{Gamma}}}\big(r_{k},e^{\xv_i'\betav_{k}}\big)\notag\\
&y_{ij} = \delta( m_{ij} \ge 1),~m_{ij} \sim{\mbox{Pois}}(\theta_{ij}),~\theta_{ij} = \sum_{k=1}^K \theta_{ijk}, ~\theta_{ijk}\sim{{\mbox{Gamma}}}\big(\phi_{k}^{-1},\phi_k \lambda_{jk}e^{\xv_{ij}'\betav}\big)%\notag\\
,
%\mbox{Pois}(\lambda_j).
 \label{eq:CNB_Dunson}
%\notag\\
%&\lambda_j\sim\mbox{Gamma}(1,p_j/(1-p_j))
\end{align}
then we have $P(y_{ij}=1\given \xv_{ij}) = \left[ 1-\prod_{k=1}^{K}\left({1+\phi_{k}\lambda_{jk} e^{\xv_{ij}'\betav}}\right)^{-\phi_{k}^{-1}} \right]$, which becomes the same as Eq. 2.7 of \citet{Dunson05bayesianlatent} that is designed to model multivariate binary response variables. Though related, that construction %of \citet{Dunson05bayesianlatent}
 is clearly different from the proposed sum-softplus regression in that it uses only a single regression coefficient vector $\betav$ and does not support $K\rightarrow \infty$. It is of interest to extend the models in \citet{Dunson05bayesianlatent} with the sum-softplus construction discussed above and the stack- and SS-softplus constructions to be discussed below.

%\begin{cor}\label{cor:5}
%As $m_{i} \sim\emph{\mbox{Pois}}(\theta_i), ~\theta_i = \sum_{k=1}^\infty \theta_{ik}$ in \eqref{eq:CNB} can be written as $m_i = \sum_{k=1}^\infty m_{ik},~m_{ik}\sim\emph{\mbox{Pois}}(\theta_{ik})$,
% sum-softplus regression can also be constructed with %using
% %the hierarchical model
%\begin{align}
%%&m_{ik}\sim\emph{\mbox{NB}}\left[r_{k},{1}/{(1+e^{-\xv_i'\betav_{k}})}\right], \notag\\
%&y_{i} = \delta( m_{i} \ge 1),~~m_{i} = \sum_{k=1}^{\infty} m_{ik},~m_{ik}\sim\emph{\mbox{NB}}\left[r_{k},{1}/{(1+e^{-\xv_i'\betav_{k}})}\right].%\mbox{Pois}(\lambda_j).
% \label{eq:CNB1}
%%\notag\\
%%&\lambda_j\sim\mbox{Gamma}(1,p_j/(1-p_j))
%\end{align}
%\end{cor}

%\vspace{-3mm}

\subsubsection{ Convex-polytope-bounded 
confined space that favors negative examples} %Geometric interpretation}
%\vspace{-3mm}
For sum-softplus regression arising from \eqref{eq:CNB}, the binary classification decision boundary is no longer defined by a single hyperplane. Let us make the analogy that each $\betav_k$ is an expert of a committee that collectively make binary decisions. For expert $k$, the magnitude of $r_k$ indicates how strongly its opinion is weighted by the committee, $m_{ik}=0$ represents that it votes ``No," and $m_{ik}\ge 1$ represents that it votes ``Yes." Since the response variable $y_i =\delta\left(\sum_{k=1}^\infty m_{ik}\ge 1\right)$, the committee would vote ``No'' if and only if all its experts vote ``No'' ($i.e.$, all the counts $m_{ik}$ are zeros), in other words, %the response variable would be voted
the committee would vote 
 ``Yes'' even if only a single expert votes ``Yes.'' 
 %We provide a more detailed qualitative analysis in Appendix \ref{app:sum-softplus} on how sum-softplus regression combines its countably infinite experts. 
 % to
 %Thus, one my consider combining all countably infinite experts of the committee to 
 %define a constrained covariate space, inside which every expert votes ``No'' ($i.e.$, $m_{ik}=0$). % with a probability greater than $1-p_0$.

% , is a convex polytope \citep{polytope}
% %decision boundary of sum-softplus regression can be approximated by the boundary of a convex polytope \citep{polytope}, 
% specified by the set of solutions to a set of countably infinite 
% inequalities as
%\beq\label{eq:convex_polytope}
%\xv_i' \betav_k < \ln\big[(1-p)^{-\frac{1}{r_k}}-1\big], ~~ k\in\{1,2,\ldots\}.
%%[\betav_1,\ldots,\betav_{\infty}] < \bf{0}
%\eeq
%%where $K$ is the total number of members in the committee. 
% 

 Let us now examine the confined covariate space for sum-softplus regression %defined by the set of solutions to 
 that satisfies the inequality
 $P(y_i=1\given \xv_i)=1-e^{-\lambda(\xv_i)}\le p_0$, where a data point is labeled as one with a probability no greater than $p_0$. Although it is not immediately clear what kind of geometric constraints are imposed on the covariate space by this inequality, %\eqref{eq:sum_ineuqality}, 
 the following theorem shows that %that inequality 
 it defines a confined space, which is bounded by a convex polytope defined by %the set of solutions to a set of countably infinite inequalities 
 the intersection of countably infinite half-spaces.
 
 \begin{thm}\label{thm:sum_polytope}
 For sum-softplus regression, the confined space specified by the inequality $P(y_i=1\given \xv_i)=1-e^{-\lambda(\xv_i)}\le p_0$, which can be expressed as
 \beq
 \lambda(\xv_i) = \sum_{k=1}^\infty r_k \ln(1+e^{\xv_i' \betav_k})\le -\ln(1-p_0), \label{eq:sum_ineuqality}
 \eeq
 is bounded by a convex polytope defined by the set of solutions to countably infinite inequalities %\!\!
\beq\label{eq:convex_polytope}
\xv_i' \betav_k \le \ln\big[(1-p_0)^{-\frac{1}{r_k}}-1\big], ~~ k\in\{1,2,\ldots\}.
%[\betav_1,\ldots,\betav_{\infty}] < \bf{0}
\eeq
 \end{thm}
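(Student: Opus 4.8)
The plan is to prove the set inclusion: every $\xv_i$ satisfying the confined-space constraint \eqref{eq:sum_ineuqality} also satisfies all of the half-space inequalities \eqref{eq:convex_polytope}. I interpret ``bounded by'' as precisely this containment of the confined space within the convex region, since the two sets need not coincide (the summed constraint is strictly stronger than any single half-space constraint). First I would abbreviate $c := -\ln(1-p_0)$ and record that $c > 0$, because $p_0 \in (0,1)$ forces $1-p_0 \in (0,1)$. I would also check that the bound in \eqref{eq:convex_polytope} is well defined: since $(1-p_0)^{-1/r_k} > 1$ for every $r_k > 0$, the argument $(1-p_0)^{-1/r_k} - 1$ of the logarithm is strictly positive.

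The algebraic heart of the argument is a single identity. Writing $b_k := \ln[(1-p_0)^{-1/r_k} - 1]$ for the threshold in the $k$-th inequality of \eqref{eq:convex_polytope}, I would exponentiate to get $1 + e^{b_k} = (1-p_0)^{-1/r_k}$, and hence
\[
r_k \ln(1+e^{b_k}) = -\ln(1-p_0) = c .
\]
Thus $b_k$ is exactly the value of $\xv_i'\betav_k$ at which the $k$-th softplus summand alone saturates the global budget $c$.

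The main step is then to exploit nonnegativity. For every $k$ and every $\xv_i$, the summand $r_k \ln(1+e^{\xv_i'\betav_k})$ is nonnegative, since $r_k > 0$ (the gamma-process atom weights are positive) and $\ln(1+e^t) > 0$ for all real $t$. Consequently, if $\xv_i$ lies in the confined space, so that $\sum_{k=1}^\infty r_k \ln(1+e^{\xv_i'\betav_k}) \le c$, then discarding all but the $k$-th nonnegative term yields $r_k \ln(1+e^{\xv_i'\betav_k}) \le c$ for each fixed $k$. Because $t \mapsto r_k \ln(1+e^t)$ is strictly increasing, I would invert this single-term inequality exactly as in the identity above to obtain $\xv_i'\betav_k \le b_k$. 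Since this holds simultaneously for all $k$, the point $\xv_i$ satisfies every inequality in \eqref{eq:convex_polytope}, establishing the inclusion.

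There is no serious obstacle here; the proof is short once the nonnegativity observation is in place. The only points requiring care are the well-definedness of the infinite sum, which is guaranteed by Proposition \ref{lem:finite}, and the one-directional nature of the inclusion---the converse fails because a point can satisfy each half-space bound while the sum of the softplus terms overshoots $c$. I would also remark that the intersection of the countably many half-spaces in \eqref{eq:convex_polytope} is convex as an intersection of convex sets, which justifies calling it a convex polytope, here in the generalized sense of permitting infinitely many facets.
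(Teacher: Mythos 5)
Your proof is correct and follows essentially the same route as the paper's: both arguments use the nonnegativity of the remaining summands to conclude that each individual term satisfies $r_k \ln(1+e^{\xv_i' \betav_k}) \le -\ln(1-p_0)$, and then invert the strictly increasing single-term softplus to obtain the half-space inequality \eqref{eq:convex_polytope} for every $k$. Your additional remarks (well-definedness of the thresholds, strictness of the inclusion, convexity of the intersection) are sound but just elaborate on what the paper leaves implicit.
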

 
 \begin{prop}\label{prop:sum_polytope}
 For any data point $\xv_i$ that resides outside the convex polytope defined by% the set of solutions to countably infinite inequalities in 
~\eqref{eq:convex_polytope}, which means $\xv_i$ violates at least one of the inequalities in \eqref{eq:convex_polytope} a.s.,
 it will be labeled under sum-softplus regression with $y_i=1$ with a probability greater than $p_0$, and $y_i=0$ with a probability no greater than $1-p_0$.
 \end{prop}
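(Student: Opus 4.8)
The plan is to recognize that this proposition is precisely the contrapositive of Theorem~\ref{thm:sum_polytope}, and to give a direct single-term lower-bound argument rather than re-deriving the full set containment. First I would recall that under sum-softplus regression $P(y_i=1\given\xv_i)=1-e^{-\lambda(\xv_i)}$ with $\lambda(\xv_i)=\sum_{k=1}^\infty r_k\ln(1+e^{\xv_i'\betav_k})$, so that the target inequality $P(y_i=1\given\xv_i)>p_0$ is equivalent to the strict lower bound $\lambda(\xv_i)>-\ln(1-p_0)$. It therefore suffices to establish this bound on $\lambda(\xv_i)$ whenever $\xv_i$ lies outside the polytope defined by~\eqref{eq:convex_polytope}.

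The key step exploits the fact that every summand in $\lambda(\xv_i)$ is nonnegative. Suppose $\xv_i$ violates the $k$th inequality in~\eqref{eq:convex_polytope}, that is $\xv_i'\betav_k>\ln[(1-p_0)^{-1/r_k}-1]$. Exponentiating, adding one, taking logarithms, and multiplying through by $r_k>0$---all monotone operations---yields $r_k\ln(1+e^{\xv_i'\betav_k})>-\ln(1-p_0)$. Because $e^{\xv_i'\betav_j}>0$ and $r_j>0$ for every $j$, each term $r_j\ln(1+e^{\xv_i'\betav_j})$ is strictly positive, so discarding all terms except the $k$th gives the valid bound $\lambda(\xv_i)\ge r_k\ln(1+e^{\xv_i'\betav_k})>-\ln(1-p_0)$.

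Converting back, $\lambda(\xv_i)>-\ln(1-p_0)$ gives $e^{-\lambda(\xv_i)}<1-p_0$ and hence $P(y_i=1\given\xv_i)=1-e^{-\lambda(\xv_i)}>p_0$; the complementary claim then follows at once from $P(y_i=0\given\xv_i)=1-P(y_i=1\given\xv_i)<1-p_0$. I do not anticipate a genuine obstacle, since all the structural content already resides in Theorem~\ref{thm:sum_polytope}; the only points requiring care are (i) verifying that every summand is nonnegative, so that retaining a single violated term is a legitimate lower bound on the infinite sum, which makes any convergence argument unnecessary and means Proposition~\ref{lem:finite} is not even invoked here, and (ii) the almost-sure qualifier, which merely discards the measure-zero boundary set $\{\xv_i:\xv_i'\betav_k=\ln[(1-p_0)^{-1/r_k}-1]\}$ so that ``outside'' can be read as strict violation, thereby delivering the strict inequality $P(y_i=1\given\xv_i)>p_0$.
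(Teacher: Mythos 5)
Your proof is correct and follows essentially the same route as the paper's own argument: isolate the violated $k$th inequality, convert it via monotone operations into $r_k\ln(1+e^{\xv_i'\betav_k})>-\ln(1-p_0)$, and use the nonnegativity of all remaining summands to lower-bound $\lambda(\xv_i)$ by that single term. The additional remarks on the a.s.\ qualifier and on not needing Proposition~\ref{lem:finite} are sound but do not change the substance.
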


 The convex polytope defined in \eqref{eq:convex_polytope} is enclosed by the intersection of countably infinite $V$-dimensional half-spaces. If we set $p_0=0.5$ as the probability threshold to make binary decisions, then the convex polytope %hyperplanes, 
 assigns a label of $y_i=0$ to an $\xv_i$ inside the convex polytope ($i.e.$, an $\xv_i$ that satisfies all the inequalities in Eq. \ref{eq:convex_polytope}) with a relatively high %(but less than $50\%$) 
 probability, and assigns a label of $y_i=1$ to an $\xv_i$ outside the convex polytope ($i.e.$, an $\xv_i$ that violates at least one of the inequalities in Eq. \ref{eq:convex_polytope}) with a probability of at least $50\%$.

Note that as $r_k\rightarrow 0$, 
$r_k\ln(1+e^{\xv_i' \betav_{k}}) \rightarrow 0 $ and $\ln\big[(1-p_0)^{-\frac{1}{r_k}}-1\big]\rightarrow \infty$. Thus 
%$1- \left[{1+\exp(\xv'_i\betav_k)}\right]^{-r_k} \rightarrow 0$ and. Thus 
expert $k$ with a tiny $r_k$ essentially has a negligible impact on both the decision of the committee and the boundary of the convex polytope. % and the decision of the committee.
Choosing the gamma %-negative binomial 
process as the nonparametric Bayesian prior sidesteps the need to tune the number of experts in the committee, shrinking the weights of all unnecessary experts and hence allowing a finite number of experts with non-negligible weights to be automatically inferred from the data.
 
%
%Thus expert $k$ with a tiny $r_k$ essentially has a negligible impact on the decision of the committee. The gamma process has an inherent shrinkage mechanism, as in the prior the number of atoms with weight larger than a positive constant $\epsilon$ follows a Poisson distribution that has a finite mean as 
%$
%\gamma_0\int_{\epsilon}^\infty r^{-1}e^{-cr}dr.
%$ Thus although we have countably infinite experts in the prior, only a finite number of them will have large weights in the prior and we expect only a small number of them will have large weights in the posterior. 

\vspace{-1mm}
\subsubsection{Illustration for sum-softplus regression}
\vspace{-1mm}
A clear advantage of sum-softplus regression over both softplus and logistic regressions is that it could use multiple hyperplanes %, each of which partitions the covariate space into two halves,
 to construct a nonlinear decision boundary and, similar to the convex polytope machine of \citet{kantchelian2014large}, to separate two different classes by a large margin. 
To illustrate the imposed geometric constraints, % imposed by %distinct properties of 
%sum-softplus regression, %, stack-softplus, and SS-softplus regressions, 
we first consider a synthetic two dimensional dataset with two classes, as shown in Fig. \ref{fig:circle_1} (a), where most of the data points of Class $B$ reside within a unit circle and these of Class $A$ reside within a ring outside the unit circle. %are surrounded by the data points of the other class, and the two classes partially overlap with each other. 
%approximately $95\%$ of one class's data points residing within a unit circle while approximately $97.5\%$ of the other class's data points residing outside the unit circle. 
%\subsubsection{Example dataset: circle}
%Let us first examine how sum-softplus regression performs on the first dataset. 

 \begin{figure}[!t]
\begin{center}
 \includegraphics[width=0.75\columnwidth]{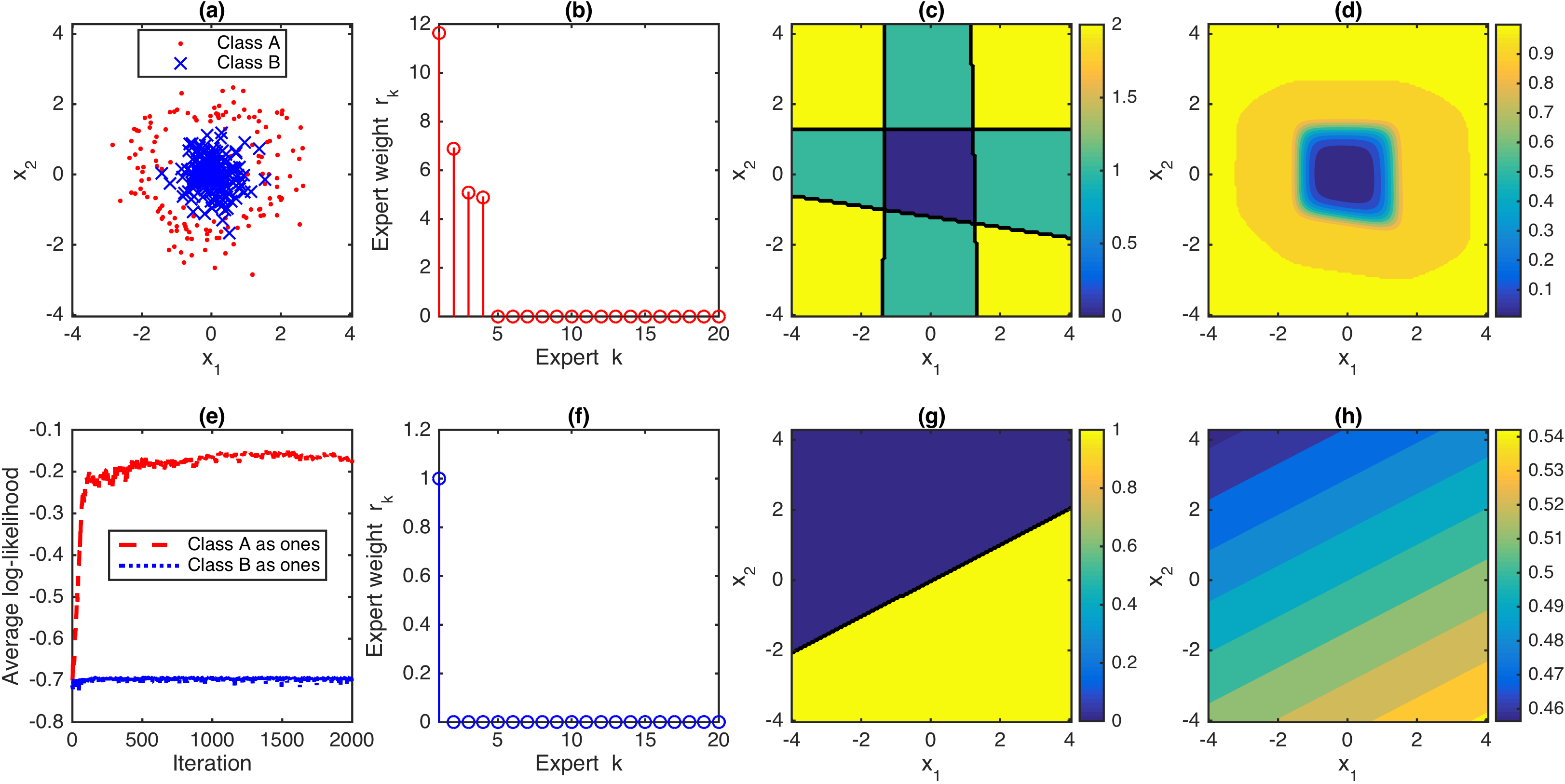}
 \vspace{-.2cm}
%\vspace{2.5mm}
% \includegraphics[width=0.75\columnwidth]{ICNBE_Toy_rk}
%MCE.png}
%\vspace{3cm}
\end{center}
\vspace{-4.9mm}
\caption{\small\label{fig:circle_1}
%Analogous figure % plots to those in
%to Fig. \ref{fig:banana}  for sum-softplus regression with $K_{\max}=20$ experts, % and $T=20$ criteria for each expert, 
%with the following differences: 
%(a) A two dimensional dataset that consists of 150 data points from Class $A$, whose radiuses are drawn from $\mathcal{N}(2,0.5^2)$ and angles are distributed uniformly at random between 0 and $360$ degrees, and another 150 data points from Class $B$, whose both $x$-axis and $y$-axis values are drawn from $\mathcal{N}(0,0.5^2)$.
% With data points in Classes $A$ and $B$ labeled as ``1'' and ``0,'' respectively, and with $p_0=0.5$, (b) shows the inferred weights $r_k$ of the experts, ordered by their values, (c) shows a contour map, the value of each point of which represents how many inequalities specified in \eqref{eq:convex_polytope} are violated, and whose region with zero values corresponds to 
% the convex polytope enclosed by the intersection of the hyperplanes defined in \eqref{eq:convex_polytope}. (f)-(h) are analogous plots to (b)-(d), with the data points in Classes $A$ and $B$ relabeled as ``0'' and ``1,'' respectively. 
%
%
Visualization of sum-softplus regression with $K_{\max}=20$ experts on a binary classification problem under two opposite labeling settings. For each labeling setting, 2000 Gibbs sampling iterations are used and the MCMC sample that provides the maximum likelihood on fitting the training data labels %among the last 1000 iterations 
is used to display the results. (a) A two dimensional dataset that consists of 150 data points from Class $A$, whose radiuses are drawn from $\mathcal{N}(2,0.5^2)$ and angles are distributed uniformly at random between 0 and $360$ degrees, and another 150 data points from Class $B$, whose both $x$-axis and $y$-axis values are drawn from $\mathcal{N}(0,0.5^2)$.
 With data points in Classes $A$ and $B$ labeled as ``1'' and ``0,'' respectively, and with $p_0=0.5$, (b) shows the inferred weights $r_k$ of the experts, ordered by their values, (c) shows a contour map, the value of each point of which represents how many inequalities specified in \eqref{eq:convex_polytope} are violated, and whose region with zero values corresponds to 
 the convex polytope enclosed by the intersection of the hyperplanes defined in \eqref{eq:convex_polytope}, and (d) shows the the contour map of the predicted class probabilities. 
 (f)-(h) are analogous plots to (b)-(d), with the data points in Classes $A$ and $B$ relabeled as ``0'' and ``1,'' respectively. 
 (e) The average per data point log-likelihood as a function of MCMC iteration, for both labeling settings.
 }
\end{figure}

 \begin{figure}[!t]
\begin{center}
 \includegraphics[width=0.75\columnwidth]{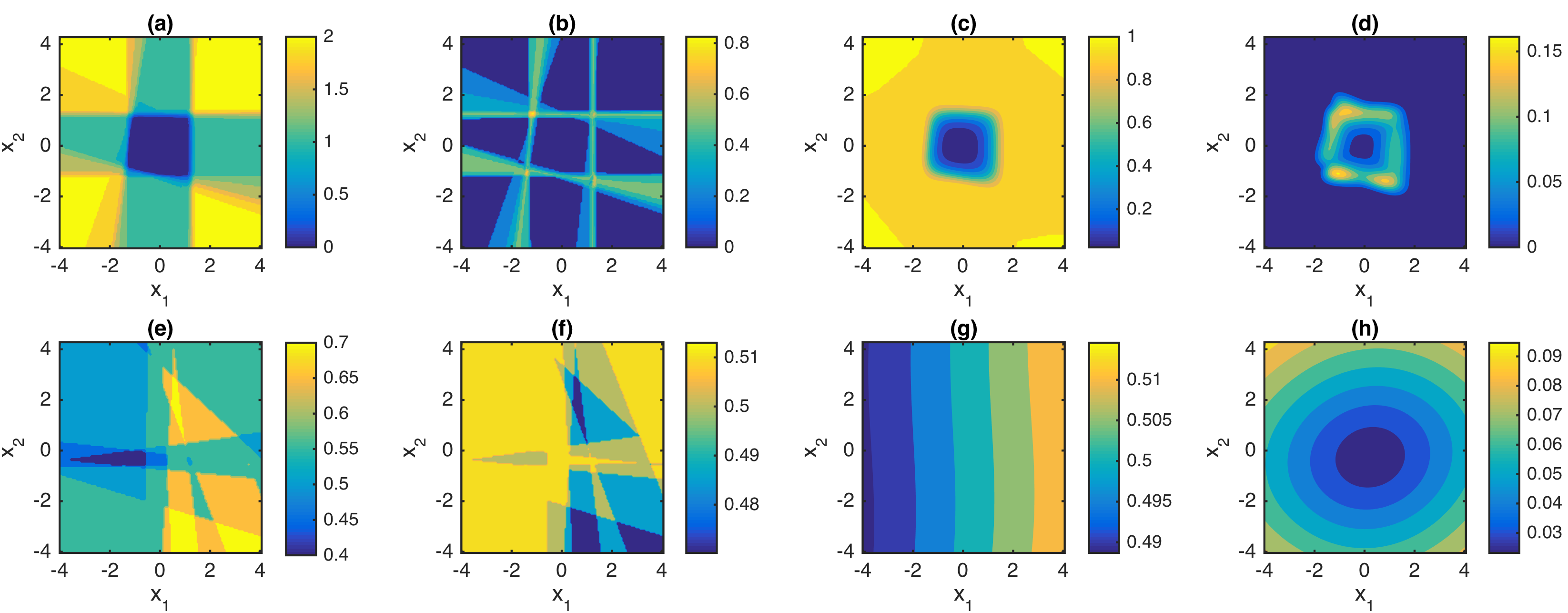}
 \vspace{-.2cm}
%\vspace{2.5mm}
% \includegraphics[width=0.75\columnwidth]{ICNBE_Toy_rk}
%MCE.png}
%\vspace{3cm}
\end{center}
\vspace{-5.9mm}
\caption{\small\label{fig:circle_ave_1}
Visualization of the posteriors of sum-softplus regression based on 20 MCMC samples, collected once per every 50 iterations during the last 1000 MCMC iterations, with the same experimental setting used for Fig. \ref{fig:circle_1}. With $p_0=0.5$,
(a) and (b) show the contour maps of the posterior means and standard deviations, respectively, of the number of inequalities specified in \eqref{eq:convex_polytope} that are violated, and (c) and (d) show the contour maps of the posterior means and standard deviations, respectively, of predicted class probabilities. 
 (e)-(h) are analogous plots to (a)-(d), with the data points in Classes $A$ and $B$ relabeled as ``0'' and ``1,'' respectively. 
% Analogous figure %plots to those in 
%to 
%Fig. \ref{fig:banana_ave} for sum-softplus regression, using the same dataset as in Fig. \ref{fig:circle_1}, with the following differences: 
%%Visualization of the posteriors of stack-softplus regression based on 20 MCMC samples, with the same experimental setting used for Fig. \ref{fig:circle_2}.
%(a) and (b) show the contour maps of the posterior means and standard deviations, respectively, of the number of inequalities specified in \eqref{eq:convex_polytope} that are violated. (e)-(f) are analogous plots to (a)-(b) under the opposite labeling setting.
 }
 \end{figure}

%Enclosed by the intersections of all the experts' hyperplanes shown in \eqref{eq:convex_polytope} is a convex polytope, for a data point outside which at least one expert say ``Yes'' with a high probability, and for a data point inside which all the experts say ``No'' with high probabilities. 
%For example, the two classes of the dataset shown in Fig. \ref{fig:circle_1} (a) are not linearly separable. % but may be well classified with ICNBE. %, if we can infer a convex polytope to enclose most of the points belong to Class $B$. 
%To apply sum-softplus regression to the dataset in Fig. \ref{fig:circle_1} (a), 
We first label the data points of Class $A$ as ``1'' and these of Class $B$ as ``0.'' Shown in Fig. \ref{fig:circle_1} (b) are the inferred weights $r_k$ of the experts, using the MCMC sample that has the highest log-likelihood in fitting the training data labels. It is evident from Figs. \ref{fig:circle_1} (b) and (c) that sum-softplus regression infers four experts (hyperplanes) with significant weights. %which essentially defines the convex polytope. 
The convex polytope in Fig. \ref{fig:circle_1} (c) that encloses the space marked as zero is intersected by these four hyperplanes, each of which is defined as in \eqref{eq:convex_polytope} with $p_0=50\%$.
%We are able use \eqref{eq:convex_polytope} to construct a decision boundary defined by a convex polytope, 
Thus outside the convex polytope are data points that would be labeled as ``1'' with at least 50\% probabilities and inside it are data points that would be labeled as ``0'' with relatively high %but less than 50\%
 probabilities. We further show in Fig. \ref{fig:circle_1} (d) the contour map of the inferred probabilities for $P(y_i=1\given \xv_i) = 1-e^{-\lambda(\xv_i)}$, where $\lambda(\xv_i)$ are calculated with~\eqref{eq:sum_softplus_reg}. 
Note that due to the model construction, 
a single expert's influence on the decision boundary %, % determined by all experts 
can be conveniently measured, and the exact decision boundary is bounded by a convex polytope. % in Fig. \ref{fig:circle_1} (c). 
Thus it is not surprising that 
%as the $K$ experts are relatively independent to each other , 
the convex polytope in Fig. \ref{fig:circle_1} (c), which encloses the space marked as zero, %and the hyperplane in Fig. \ref{fig:circle_1} (g), which partitions the space into two halves, 
aligns well with the contour line of $P(y_i=1\given \xv_i)=0.5$ shown in Fig. \ref{fig:circle_1} (d). % and (h), respectively. 
%to label a data point as one (i.e., Class $A$). 

Despite being able to construct a nonlinear decision boundary bounded by a convex polytope, sum-softplus regression has a clear restriction in that if the data labels are flipped, % to let the data points inside a convex polytope be labeled as ``1,'' 
its performance %with multiple experts 
may substantially deteriorate, becoming no better than %worse and could no longer outperform
that of logistic regression. % using a single expert.
For example, for the same data shown in Fig. \ref{fig:circle_1} (a), if we choose the opposite labeling setting where the data points of Class $A$ are labeled as ``0'' and these of Class $B$ are labeled as ``1,'' then sum-softplus regression infers a single expert (hyperplane) with non-negligible weight, as shown in Figs. \ref{fig:circle_1} (f)-(g), and fails to separate the data points of two different classes, as shown in Figs. \ref{fig:circle_1} (g)-(h). The data log-likelihood plots in Fig. \ref{fig:circle_1} (e) also suggest that sum-softplus regression could perform substantially better if the training data are labeled 
in favor of its geometric constraints on the decision boundary. 
An advantage of a Bayesian hierarchical model is that with collected MCMC samples, one may estimate not only the posterior means but also uncertainties. % the posterior standard deviations 
%of interested model outputs, as in Fig. \ref{fig:circle_ave_1}. 
The standard deviations shown in Figs. \ref{fig:circle_ave_1} (b) and (d) clearly indicate the uncertainties of sum-softplus regression on its decision boundaries and predictive probabilities in the covariate space, which may be used to help decide how to sequentially query the labels of unlabeled data in an active learning setting \citep{cohn1996active,settles2010active}. 

%As shown in Figs. \ref{fig:circle_1} (c) and (d), sum-softplus regression clearly generalizes softplus/logistic regression in that it uses the boundary of a convex-polytope-bounded confined space rather than a single hyperplane to partition the covariate space into two parts. However, its restriction in using a convex-polytope-bounded confined space to enclose negative examples ($i.e.$, data points with $y_i=0$) is also evident by examining the performance of the same sum-softplus regression on the same dataset, with the only difference in that the data points of Class $A$ are relabeled as ``0'' and these of Class $B$ are relabeled as ``1.''
%Under this opposite labeling setting, Figs. \ref{fig:circle_1} (f-h) show that sum-softplus regression infers only a single expert (hyperplane) with a significant weight, and fails to generalizes logistic regression to separate the two classes. 

%favors placing negative examples ($i.e.$, data points with $y_i=0$) inside that space, %the convex-polytope-bounded confined space, 
%and favors placing positive examples ($i.e.$, data points with $y_i=1$) outside that space. %convex polytope. 

%Fig. \ref{fig:circle_1} (e) further confirms that the performance of sum-softplus regression could significantly deteriorate to be as bad as logistic regression for nonlinear classification if that the data are not labeled in its preferred manner. 

The sensitivity of sum-softplus regression to how the data are labeled could be mitigated but not completely solved by combining two sum-softplus regression models trained under the two opposite labeling settings. 
In addition, sum-softplus regression may not perform well no matter how the data are labeled if neither of the two classes could be enclosed by a convex polytope. 
To fully resolve these issues,  we first introduce stack-softplus regression, which defines a convex-polytope-like confined space to enclose positive examples. %that favors placing positive and negative examples inside and outside it, respectively. 
We then show how to combine the two distinct, but complementary, softplus regression models to construct SS-softplus regression that provides more flexible nonlinear decision boundaries. 
%properties of sum-softplus and stack-softplus regressions. 

\subsection{Stack-softplus regression and stacked gamma distributions}

The model in \eqref{eq:BerPo_recursive_softplus_reg_model} combines the BerPo link with a gamma belief network that stacks differently parameterized gamma distributions. Note that here  ``stacking''  is defined as an operation that mixes the shape parameter of a gamma distribution at layer $t$ with a gamma distribution at layer $t+1$, the next one pushed into the stack, and pops out the covariate-dependent gamma scale parameters from layers $T+1$ to 2 in the stack, following the last-in-first-out rule, %are popped out 
to parameterize the BerPo rate of the class label $y_i$ shown in \eqref{eq:recurssive_softplus_reg}. % and further generalized in \citet{GBN}. 

\subsubsection{Convex-polytope-like confined space that favors positive examples}

Let us make the analogy that each $\betav^{(t)}$ is one of the $T$ criteria that an expert examines before making %collectively make 
a binary decision. 
From \eqref{eq:recurssive_softplus_reg} %and \eqref{eq:recursive_rectifier} 
it is clear that as long as a single criterion $t\in\{2,\ldots,{T}+1\}$ of the expert is strongly violated, which means that $\xv_i'\betav^{(t)}$ is much smaller than zero, then the expert would vote ``No'' regardless of the values of $\xv_i'\betav^{({\tilde{t}})}$ for all ${\tilde{t}}\neq t$.
Thus the response variable could be voted ``Yes'' by the expert only if none of the $T$ expert criteria are strongly violated. 
For stack-softplus regression, let us specify a confined space using the inequality $P(y_i=1\given \xv_i)=1-e^{-\lambda(\xv_i)}>p_0$, which can be expressed as 
\beq\small \label{eq:positive_convex_polytope}
\xv_i'\betav^{({T}+1)} + \ln \ln\!\bigg\{1+e^{\xv_i'\betav^{({T})}}\ln\Big[1+\ldots
\ln\big(1+e^{\xv_i'\betav^{(2)}}\big)\Big]\bigg\} > \ln\big[(1-p_0)^{-\frac{1}{r}}-1\big], % ~k\in\{1,2,\ldots\}
%[\betav_1,\ldots,\betav_{\infty}] < \bf{0}
\eeq
and hence any data point $\xv_i$ outside the confined space ($i.e.$, violating the inequality in Eq.~\ref{eq:positive_convex_polytope} a.s.) will be labeled as $y_i=0$ with a probability no less than $1-p_0$.

%It is not immediately clear how the geometric structure of the confined space specified by \eqref{eq:positive_convex_polytope} looks like.
%, let us first define  $g_t$ with $g_1 = 1$ and the recursion 
 %$$g_t=\ln(1+g_{t-1})$$ 
 %for $t=2,\ldots,T$.
Considering the covariate space
\beq
\mathcal{T}^{-t}:=\left\{\xv_i:  \xv'_i\betav_{{\tilde{t}}} \ge 0 \text{ for }  {\tilde{t}}\neq t\right\},
\eeq
 where all the criteria except criterion $t$ of the expert  tend to be satisfied,   the decision boundary of stack-softplus regression in $\mathcal{T}^{-t}$ would be clearly influenced by the satisfactory level of criterion $t$, whose hyperplane partitions  $\mathcal{T}^{-t}$ into two parts as
   \beq\small
y_i =
\begin{cases} \vspace{.15cm}
\displaystyle 1
, & {\mbox{if  }} \displaystyle 
1-   %\ln f(\xv_i, \betav^{(2:{T}+1)}_k) = 
\left(1^{T+1}+\ln\bigg\{1^T+\ln\Big[1^{T-1}+\ldots%+e^{\xv_i'\betav^{(3)}_k}
+\ln\big(1^t+e^{\xv_i'\betav^{(t)}} g_{t-1}\big)\Big]\bigg\}\right)^{-r} \!\!> p_0,\\
%1- \left(\frac{1}{1+e^{\xv'_i\betav^{({T}+1)}}}\right)^{r} < 0.5, \\ 
%\frac{e_0+(K^{+})^{-ji}}{f_0+\psi(c+r_{\cdotv} ) - \psi(c)}\
 \displaystyle 0
, &  \mbox{otherwise}, %{\mbox{iff  }}  m_i= 0 \mbox{ and } n_i\ge 1, 
\end{cases}
 \eeq 
 for all $\xv_i\in\mathcal{T}^{-t}$. 
Let us define $g_t$ with $g_1 = 1$ and the recursion 
 $g_t=\ln(1+g_{t-1})$ 
 for $t=2,\ldots,T$, and define $h_t$ with $h_{T+1} = (1-p_0)^{-\frac{1}{r}}-1$ and the recursion 
$
 h_t=e^{h_{t+1}} -1 \notag
$
 for $t=T,T-1,\ldots,2$. %As shown in Appendix \ref{app:deep_softplus}, 
 Using the definition of $g_t$ and $h_t$, 
% the inequality in the above equation can be equivalently expressed as
% \begin{align} %\label{eq:convex_polytope_recursive}
%&\xv_i' \betav^{(t)} > \ln(h_{t})-\ln(g_{t-1}), ~t\in\{2,\ldots,T+1\}.\notag
%\end{align}
%
 combining all the $T$ expert criteria, the confined space of stack-softplus regression specified in \eqref{eq:positive_convex_polytope} can be roughly related to a convex polytope, % that encloses positive examples, 
which is specified by the solutions to a set of $T$ 
 inequalities as
\begin{align}\label{eq:convex_polytope_recursive}
&\xv_i' \betav^{(t)} > \ln(h_{t})-\ln(g_{t-1}), ~t\in\{2,\ldots,T+1\}.
\end{align}
% 
%\begin{align}\label{eq:convex_polytope_recursive}
%&\xv_i' \betav^{(T+1)} >h_{T+1} - \ln(g_T),~ h_{T+1}=\ln\big(2^{\frac{1}{r}}-1\big),\notag\\
%&\xv_i' \betav^{(T)} > h_{T} - \ln(g_{T-1}), ~h_T=\ln\big[\exp({e^{h_{T+1}}}) -1\big] \notag\\
%&~~~~~~~~~~~~\cdots\notag\\
%&\xv_i' \betav^{(t)} > h_{t}-\ln(g_{t-1}) ,~h_t=\ln\big[\exp({e^{h_{t+1}}}) -1\big] \notag\\
%&~~~~~~~~~~~~\cdots\notag\\
%&\xv_i' \betav^{(2)} > h_{2} -\ln(g_{1}) ,~h_2=\ln\big[\exp({e^{h_{3}}}) -1\big]. 
%%[\betav_1,\ldots,\betav_{\infty}] < \bf{0}
%\end{align}
%where $K$ is the total number of members in the committee. 
The convex polytope is enclosed by the intersection of $T$ $V$-dimensional hyperplanes, and since none of the $T$ criteria would be strongly violated inside the convex polytope, the label $y_i=1$ ($y_i=0$) would be assigned to an $\xv_i$ inside (outside) the convex polytope with a relatively high (low) probability. % and a label of $y_i=0$ to an $\xv_i$ outside the convex polytope with a relatively low probability.

Unlike the confined space of sum-softplus regression defined in \eqref{eq:sum_ineuqality} that is bounded by a convex polytope defined in \eqref{eq:convex_polytope}, the convex polytope defined in \eqref{eq:convex_polytope_recursive} only roughly corresponds to the confined space of stack-softplus regression, as defined in \eqref{eq:positive_convex_polytope}. Nevertheless, the confined space defined in \eqref{eq:positive_convex_polytope} is referred to as a convex-polytope-like confined space, due to both its connection to the convex polytope in \eqref{eq:convex_polytope_recursive} and the fact that %the inequality 
\eqref{eq:positive_convex_polytope} is likely to be violated if at least one of the $T$ criteria is strongly dissatisfied ($i.e.$, $e^{\xv_i'\betav^{(t)}}\rightarrow 0$ for some $t$).

\subsubsection{Illustration for stack-softplus regression}

 \begin{figure}[!t]
\begin{center}
 \includegraphics[width=0.75\columnwidth]{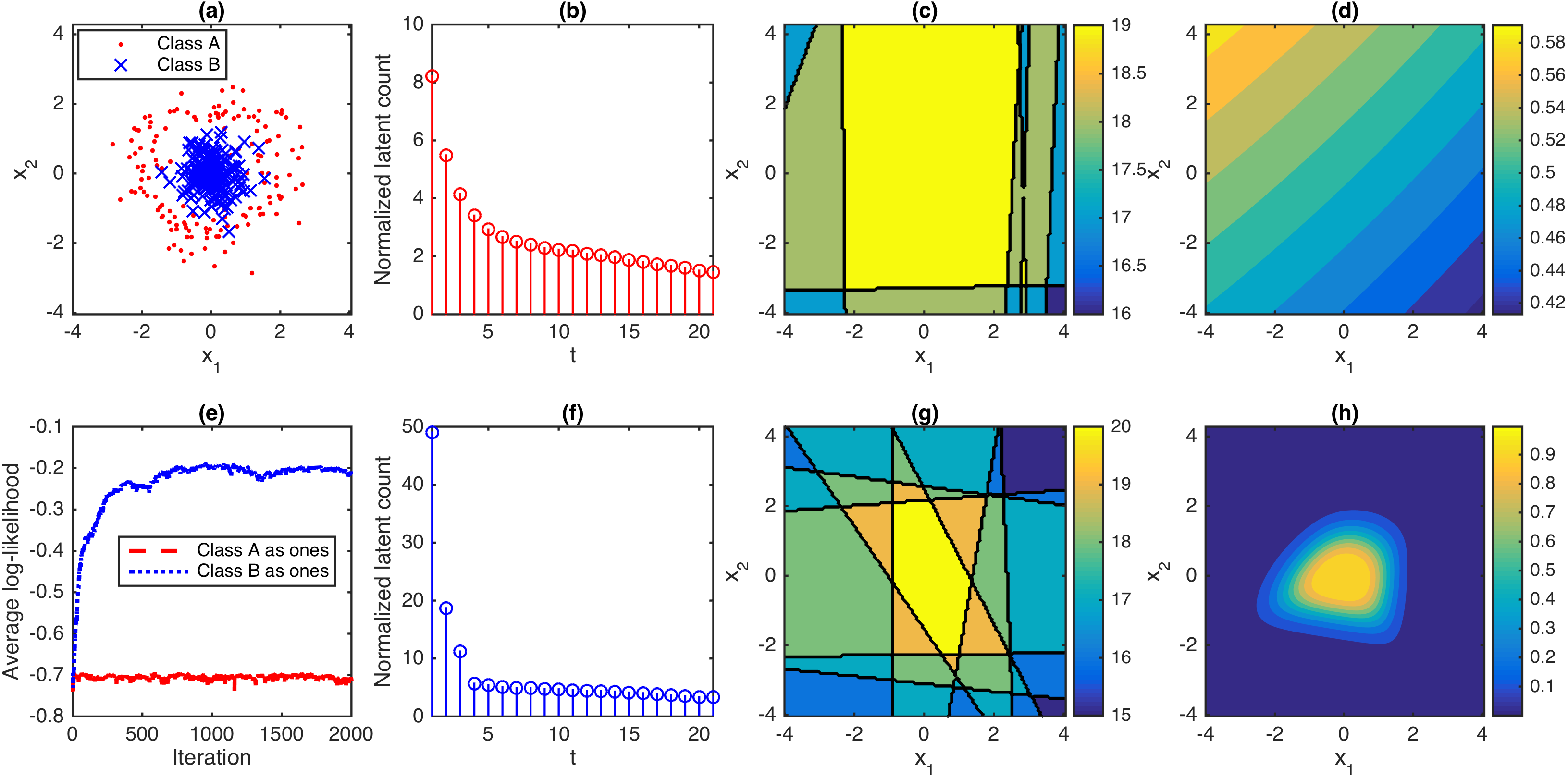}
 \vspace{-.2cm}
%\vspace{2.5mm}
% \includegraphics[width=0.75\columnwidth]{ICNBE_Toy_rk}
%MCE.png}
%\vspace{3cm}
\end{center}
\vspace{-4.9mm}
\caption{\small\label{fig:circle_2}
Analogous figure to Fig. \ref{fig:circle_1} for stack-softplus regression with $T=20$ expert criteria, with the following differences: 
 (b) shows the average latent count per positive sample, $\sum_i m_i^{(t)}\big /\sum_i \delta(y_i=1)$, as a function of layer $t$, (c) shows a contour map, the value of each point of which represents how many inequalities specified in \eqref{eq:convex_polytope_recursive} are satisfied, and whose region with the values of $T=20$ corresponds to 
 the convex polytope enclosed by the intersections of the hyperplanes defined in \eqref{eq:convex_polytope_recursive},
 %the convex polytope enclosed by the intersections of the hyperplanes defined in \eqref{eq:convex_polytope_recursive}, and (f) and (g) are analogous plots to (b) and (c) under the opposite labeling setting. 
 }
%\end{figure}

% \begin{figure}[!t]
\begin{center}
 \includegraphics[width=0.75\columnwidth]{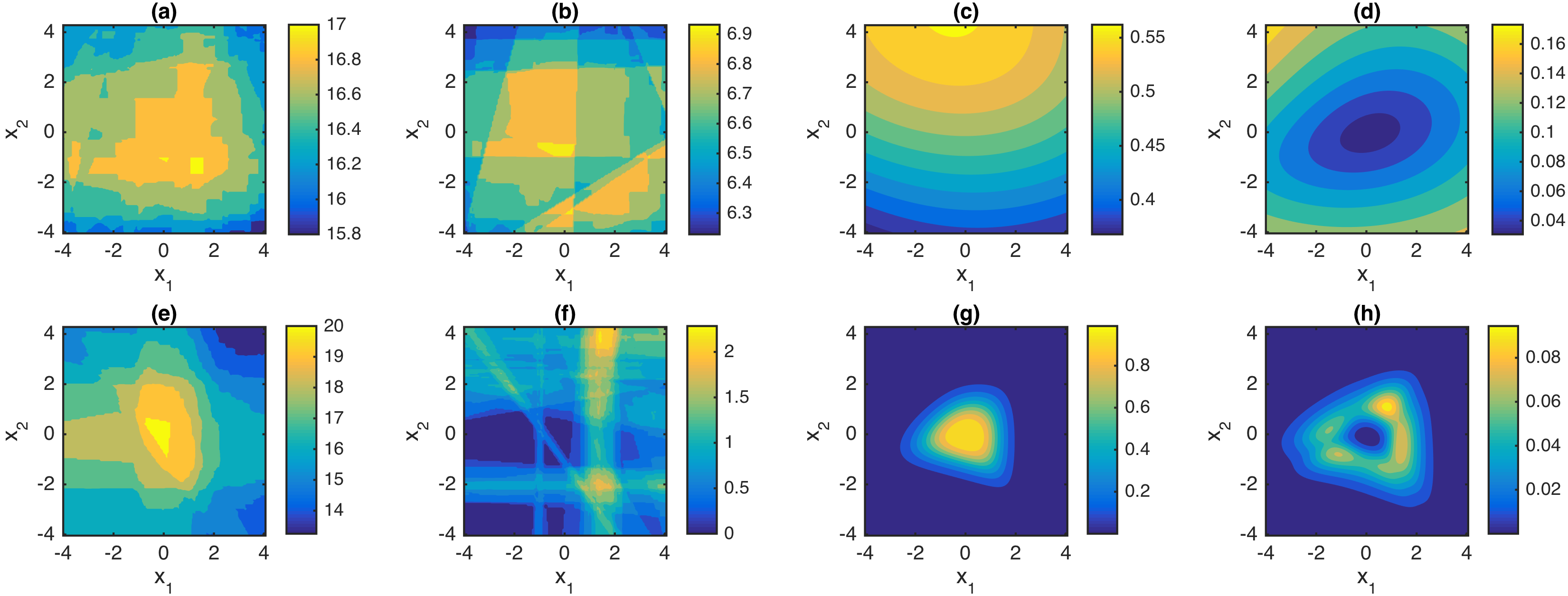}
 \vspace{-.2cm}
%\vspace{2.5mm}
% \includegraphics[width=0.75\columnwidth]{ICNBE_Toy_rk}
%MCE.png}
%\vspace{3cm}
\end{center}
\vspace{-4.9mm}
\caption{\small\label{fig:circle_ave_2}
Analogous figure to Fig. \ref{fig:circle_ave_1} for stack-softplus regression, with the following differences: 
%Visualization of the posteriors of stack-softplus regression based on 20 MCMC samples, with the same experimental setting used for Fig. \ref{fig:circle_2}.
(a) and (b) show the contour maps of the posterior means and standard deviations, respectively, of the number of inequalities specified in \eqref{eq:convex_polytope_recursive} that are satisfied. (e)-(f) are analogous plots to (a)-(b) under the opposite labeling setting. 
%, and (c) and (d) show the contour maps of the posterior mean and standard deviation, respectively, of predicted class probabilities. 
 %(e)-(h) are analogous plots to (a)-(d), with the data points in Classes 1 and 2 relabeled as ``0'' and ``1,'' respectively. 
}
\end{figure}

%in its preferred manner. % that negative examples could be enclosed by a convex polytope.
%in fitting the training data labels. %Below we give an example that ICNBE directly applies. %on non-linear classification problems. 

Let us examine how stack-softplus regression performs on the same data used in Fig. \ref{fig:circle_1}. % to illustrate its geometric constraints. % of stack-softplus regression. 
When Class $B$ is labeled as ``1,'' as shown in Fig. \ref{fig:circle_2} (g), stack-softplus regression infers a convex polytope that encloses the space marked as $T=20$ using the intersection of all $T=20$ hyperplanes, each of which is defined as in \eqref{eq:convex_polytope_recursive}; %\eqref{eq:Union_convex_polytope};
and as shown in Fig. \ref{fig:circle_2} (h), it works well by using a convex-polytope-like confined space to enclose positive examples. However, as shown in Figs. \ref{fig:circle_2} (c)-(e), its performance deteriorates 
when the opposite labeling setting is used. 
Note that due to the model construction that introduces complex interactions between the $T$ hyperplanes, \eqref{eq:convex_polytope_recursive} can only roughly describe how 
a single hyperplane could influence the decision boundary determined by all hyperplanes. Thus it is not surprising that neither the convex polytope in Fig. \ref{fig:circle_2} (c), which encloses the space marked with the largest count there, nor the convex polytope in Fig. \ref{fig:circle_2} (g), which encloses the space marked with $T$, align well with the contour lines of $P(y_i=1\given \xv_i)=0.5$ in Figs. \ref{fig:circle_2} (d) and (h), respectively. 

While how the latent count $m_{\cdotv}^{(t)}$ decreases as $t$ increases does not indicate a clear cutoff point for the depth $T$, neither do we observe a clear sign of overfitting when $T$ is set as large as 100 in our experiments. Both Figs. \ref{fig:circle_2} (c) and (g) indicate that most of the hyperplanes are far from any data points and tend to vote ``Yes'' for all training data. 
%An advantage of using a Bayesian hierarchical model is that with collected MCMC samples, one may estimate not only the posterior means but also the posterior standard deviations of interested model outputs, as shown in Fig. \ref{fig:circle_ave_1}. 
The standard deviations shown in Figs. \ref{fig:circle_ave_2} (f) and (h) clearly indicate the uncertainties of stack-softplus regression on its decision boundaries and predictive probabilities in the covariate space. %, may be used to help decide how to sequentially query the labels of unlabeled data in an active learning setting. %\citep{cohn1996active,settles2010active}. 

Like sum-softplus regression, stack-softplus regression also generalizes softplus and logistic regressions in that it uses the boundary of a confined space rather than a single hyperplane to partition the covariate space into two parts.
Unlike the convex-polytope-bounded confined space of sum-softplus regression % defined in \eqref{eq:sum_ineuqality}
 that %is bounded by a convex polytope and 
favors placing negative examples inside it, the convex-polytope-like confined space of stack-softplus regression % defined in \eqref{eq:positive_convex_polytope} 
favors placing positive examples inside it.
%Unlike sum-softplus regression, it places positive rather than negative examples inside the convex polytope. 
%, and it does have a nonparametric Bayesian prior to support using the intersections of countably infinite hyperplanes to define the convex polytope ($i.e.$, to allow $T\rightarrow\infty$). 
While both sum- and stack-softplus regressions could be sensitive to how the data are labeled, their distinct behaviors under the same labeling setting motivate us to combine them together as SS-softplus regression, as described below.

\subsection{Sum-stack-softplus (SS-softplus) regression}

Note that if %the number of criteria of each expert is 
${T}=1$, SS-softplus regression reduces to sum-softplus regression; if $K=1$, % the number of experts $K$ becomes one, 
it reduces to stack-softplus regression; and if $K=T=1$, it reduces to softplus regression, which further reduces to logistic regression if the weight of the single expert is fixed at $r=1$. 
To ensure that the SS-softplus regression  model is well defined in its infinite limit, we provide the following proposition and present the proof in Appendix \ref{sec:proof}. 
\begin{prop}\label{lem:finite1}
%Given a drawn from a gamma process $G\sim\Gamma\emph{\mbox{P}}(G_0,1/c)$, % with a finite and continuous base measure, 
%expressed as $G=\sum_{k=1}^\infty r_k \delta_{\betav^{(2:{T}+1)}_k} $, %\{r_k,\betav_k\}_{1,\infty}$
% %represent %countably infinite points \
%where $\betav^{(2:{T}+1)}_k$ is an atom, with each $\betav_k^{(t)}$ drawn from a multivariate normal base distribution, and $r_k$ is its weight, 
The infinite product in sum-stack-softplus regression  as $$e^{-\sum_{k=1}^\infty r_k\,\varsigma\left(\xv_i'\betav_k^{(2:T+1)}\right)} = \prod_{k=1}^\infty \left(\!1\!+\!e^{\xv_i'\betav_k^{({T}\!+\!1)}}\ln\!\bigg\{1\!+\!e^{\xv_i'\betav_k^{({T})}}\ln\Big[1\!+\!\ldots
\ln\big(1\!+\!e^{\xv_i'\betav_k^{(2)}}\big)\Big]\bigg\}\right)^{-r_k}$$ %, which parameterizes the aggregated probability distribution of ICE logistic regression, 
is smaller than one and has a finite expectation that is greater than zero.%under the gamma process $G\sim\Gamma\emph{\mbox{P}}(G_0,1/c)$, a draw from which is expressed as $G = \sum_{t=1}^\infty r_k\delta_{\betav_k}.$
\end{prop}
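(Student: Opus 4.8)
The plan is to handle the two assertions separately: ``smaller than one'' is an almost-sure statement about each realization of the gamma process, whereas the claim about the expectation is an integral identity that I would evaluate in closed form and then bound.

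First I would record that the stack-softplus function is strictly positive. From the recursion $\varsigma(x_1,\ldots,x_t)=\ln[1+e^{x_t}\varsigma(x_1,\ldots,x_{t-1})]$ following \eqref{eq:deep_softplus}, with base case $\varsigma(x_1)=\ln(1+e^{x_1})>0$, a one-line induction gives $\varsigma(x_1,\ldots,x_t)>0$ for all real arguments: if $\varsigma(x_1,\ldots,x_{t-1})>0$ then $e^{x_t}\varsigma(x_1,\ldots,x_{t-1})>0$ and the argument of the outer logarithm exceeds one. Hence each summand $r_k\,\varsigma(\xv_i'\betav_k^{(2:T+1)})$ is positive, and since a gamma process has infinitely many atoms almost surely, the exponent $\sum_k r_k\,\varsigma(\xv_i'\betav_k^{(2:T+1)})$ is almost surely strictly positive, so the product is almost surely strictly below one.

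For the expectation I would apply the L\'evy--Khinchine (Campbell) formula for the gamma process with L\'evy measure $\nu(dr,d\betav)=r^{-1}e^{-cr}\,dr\,G_0(d\betav)$, obtaining
\begin{equation}
\E\!\left[e^{-\sum_k r_k\varsigma(\xv_i'\betav_k^{(2:T+1)})}\right]
=\exp\!\left(-\int_\Omega\!\int_0^\infty\!\big(1-e^{-r\varsigma(\xv_i'\betav)}\big)\,r^{-1}e^{-cr}\,dr\,G_0(d\betav)\right).
\end{equation}
The inner integral is a Frullani integral equal to $\ln\!\big(1+\varsigma(\xv_i'\betav)/c\big)$, so the expectation equals $e^{-I}$ with $I:=\int_\Omega\ln\!\big(1+\varsigma(\xv_i'\betav)/c\big)\,G_0(d\betav)\ge 0$. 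This already makes the expectation at most one; the real content is $I<\infty$, which is exactly what forces the expectation to be strictly positive and finite.

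The crux, and the step I expect to be the main obstacle, is bounding $I$, because $\varsigma$ of a $T$-tuple couples all the layers. I would first prove the sub-additivity bound $\varsigma(x_1,\ldots,x_t)\le\sum_{j=1}^t\ln(1+e^{x_j})$ by induction: using $1+e^{x_t}u\le(1+e^{x_t})(1+u)$ for $u\ge0$ and $\ln(1+v)\le v$, the recursion yields $\varsigma(x_1,\ldots,x_t)\le\ln(1+e^{x_t})+\varsigma(x_1,\ldots,x_{t-1})$. Combining this with $\ln(1+a+b)\le\ln(1+a)+\ln(1+b)$ for $a,b\ge0$ dominates the integrand by a finite sum of single-layer terms,
\begin{equation}
\ln\!\Big(1+\tfrac{1}{c}\varsigma(\xv_i'\betav)\Big)\le\sum_{t=2}^{T+1}\ln\!\Big(1+\tfrac{1}{c}\ln\big(1+e^{\xv_i'\betav^{(t)}}\big)\Big),
\end{equation}
so that $I$ is bounded by $T$ integrals, each of exactly the single-hyperplane form already shown to be finite in the proof of Proposition~\ref{lem:finite} (each depends on only one coordinate of the tuple, and integrates against the corresponding marginal of the finite measure $G_0$). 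This reduces the present claim to the sum-softplus case and gives $I<\infty$, hence $\E[\cdot]=e^{-I}\in(0,1]$, a finite expectation that is greater than zero. The only delicate point is the integrability of a single-layer term against $G_0$, but since $\ln(1+\tfrac1c\ln(1+e^{\xv_i'\betav}))$ grows only logarithmically in $\|\betav\|$, this is inherited directly from the hypotheses on $G_0$ used in Proposition~\ref{lem:finite}.
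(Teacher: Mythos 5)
Your proposal is correct, and it departs from the paper's own proof at both of the steps that carry the weight. Where you evaluate the L\'evy--Khintchine exponent exactly via the Frullani integral, getting $\E[\,\cdot\,]=e^{-I}$ with $I=\int_\Omega\ln\bigl(1+\varsigma(\xv_i'\betav^{(2:T+1)})/c\bigr)G_0(d\betav)$, the paper instead linearizes with $1-e^{-r\lambda}\le r\lambda$ and only bounds $-\ln\E[\,\cdot\,]\le \frac{1}{c}\int \varsigma\, dG_0$; your identity is sharper (it recovers the paper's bound via $\ln(1+u)\le u$) and yields the exact value of the expectation rather than mere finiteness. More substantively, the key inequality taming the stack is different: you prove sub-additivity, $\varsigma(x_2,\ldots,x_{T+1})\le\sum_{t}\ln(1+e^{x_t})$, which decomposes $I$ into $T$ single-layer integrals each of exactly the form handled in Proposition \ref{lem:finite}; the paper instead telescopes $\ln(1+e^x)\le e^x$ from the inside out to collapse the entire stack into one softplus of the summed coefficients, $\varsigma(x_2,\ldots,x_{T+1})\le\ln\bigl(1+e^{x_2+\cdots+x_{T+1}}\bigr)\le e^{x_2+\cdots+x_{T+1}}$, and then computes a single Gaussian moment generating function $\E\bigl[e^{\xv_i'\betav^{(\star)}}\bigr]=e^{\frac{1}{2}\xv_i'(\sum_t\Sigmamat_t)\xv_i}$ for $\betav^{(\star)}=\sum_{t}\betav^{(t)}$. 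The trade-off: the paper's collapse is a one-line computation, but it implicitly uses independence of the layers $\betav^{(t)}$ under $G_0$ (to know the law of their sum) and needs the full Gaussian MGF to exist; your decomposition is modular---it literally reduces Proposition \ref{lem:finite1} to Proposition \ref{lem:finite}---touches only one marginal at a time, and since your integrand $\ln\bigl(1+\tfrac{1}{c}\ln(1+e^{\xv_i'\betav^{(t)}})\bigr)$ grows only logarithmically in $\|\betav^{(t)}\|$, it would establish the result under much weaker tail assumptions on $G_0$ than Gaussianity. Your almost-sure strict inequality (product $<1$, via positivity of the stack-softplus and the a.s.\ existence of atoms) is also slightly finer than the paper's ``by construction, at most one.''
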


\subsubsection{Union of convex-polytope-like confined spaces}
We may consider SS-softplus regression as a %parallel-hierachical-hyperplane 
multi-hyperplane
model that employs a committee, consisting of countably infinite experts, to make a decision, where %the committee consists and 
each expert is equipped with $T$ criteria to be examined. The committee's distribution is obtained by convolving the distributions of countably infinite experts, each of which mixes $T$ stacked covariate-dependent gamma distributions. %, whose scale parameters are linked to the covariates. 
For each $\xv_i$,
the committee  votes ``Yes''  as long as at least one expert votes ``Yes,'' and an expert could vote ``Yes'' if and only if none of its $T$ criteria are strongly violated. 
%We provide a more detailed qualitative analysis in Appendix \ref{app:SS-softplus} on how SS-softplus regression combines its countably infinite experts. 
%Therefore, the decision of the committee is the Union of the Intersections of 
%Combining all experts, %which would vote ``Yes'' as long as at least one expert votes ``Yes,'' 
Thus
the decision boundary of SS-softplus regression can be considered as a union of convex-polytope-like confined spaces that all favor placing positively labeled data inside them, as described below, with the proofs deferred to Appendix \ref{sec:proof}.

%, where the $k$th convex-polytope-like confined space $\mathcal{D}_k$ consists of %which is specified by the solutions to a set of %countably infinite 
%% inequalities as
%the set of solutions to the inequality
%\beq\label{eq:Union_convex_polytope}
%\xv_i'\betav_k^{({T}+1)} + \ln \ln\!\Bigg\{1+e^{\xv_i'\betav_k^{({T})}}\ln\bigg[1+\ldots
%\ln\Big(1+e^{\xv_i'\betav_k^{(2)}}\Big)\bigg]\Bigg\} > \ln\big[(1-p)^{-\frac{1}{r_k}}-1\big]. % ~k\in\{1,2,\ldots\}
%%[\betav_1,\ldots,\betav_{\infty}] < \bf{0}
%\eeq

\begin{thm}\label{thm:union_polytope}
 For sum-stack-softplus regression, the confined space specified by the inequality $P(y_i=1\given \xv_i)=1-e^{-\lambda(\xv_i)}> p_0$, which can be expressed as
 \beq %\footnotesize
\lambda(\xv_i) = \sum_{k=1}^\infty r_k\, \varsigma\big(\xv'\betav_k^{(2)},\ldots,\xv'\betav_k^{(T+1)}\big)
%\sum_{k=1}^\infty r_k
%\ln\left(1+e^{\xv_i'\betav^{({T}+1)}_k}\ln\Bigg\{1+e^{\xv_i'\betav^{({T})}_k}\ln\bigg[1+\ldots%+e^{\xv_i'\betav^{(3)}_k}({T}_k
%\ln\Big(1+e^{\xv_i'\betav^{(2)}_k}\Big)\bigg]\Bigg\}\right)
> -\ln(1-p_0), \label{eq:SS-softplus_ineuqality}
 \eeq
encompasses the union of convex-polytope-like confined spaces, expressed as
 $$\mathcal{D}_{\star} = \mathcal{D}_1\cup \mathcal{D}_2 \cup\ldots,$$
where the $k$th convex-polytope-like confined space $\mathcal{D}_k$ %consists of %which is specified by the solutions to a set of %countably infinite 
% inequalities as
is specified by
%the set of solutions to 
the inequality %$ r_k\, \varsigma\big(\xv'\betav_k^{(2)},\ldots,\xv'\betav_k^{(T+1)}\big)> -\ln(1-p_0)$, which can also be expressed as
\beq
\small
\label{eq:Union_convex_polytope}
\xv_i'\betav_k^{({T}+1)} + \ln \ln\!\bigg\{1+e^{\xv_i'\betav_k^{({T})}}\ln\Big[1+\ldots
\ln\big(1+e^{\xv_i'\betav_k^{(2)}}\big)\Big]\bigg\} > \ln\big[(1-p_0)^{-\frac{1}{r_k}}-1\big]. % ~k\in\{1,2,\ldots\}
%[\betav_1,\ldots,\betav_{\infty}] < \bf{0}
\eeq

 \end{thm}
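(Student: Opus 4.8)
The plan is to prove a one-sided containment: the confined space defined by \eqref{eq:SS-softplus_ineuqality} contains each region $\mathcal{D}_k$, and therefore contains their union $\mathcal{D}_\star=\mathcal{D}_1\cup\mathcal{D}_2\cup\cdots$. The guiding observation is that $\lambda(\xv_i)$ is a sum of nonnegative terms $r_k\,\varsigma(\xv_i'\betav_k^{(2:T+1)})$, so it suffices to show that membership in a single $\mathcal{D}_k$ already forces the corresponding term alone to exceed the threshold $-\ln(1-p_0)$; dropping the remaining nonnegative summands can only decrease the total, so $\lambda(\xv_i)$ stays above the threshold.

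First I would record that the stack-softplus function is strictly positive. Using the recursion $\varsigma(x_1,\ldots,x_t)=\ln[1+e^{x_t}\varsigma(x_1,\ldots,x_{t-1})]$ with base $\varsigma(x_1)=\ln(1+e^{x_1})>0$, a short induction shows each nested argument exceeds $1$, so every $\varsigma$ value is positive. In particular, the inner quantity $S_k:=\varsigma(\xv_i'\betav_k^{(2)},\ldots,\xv_i'\betav_k^{(T)})$ that appears inside the braces of \eqref{eq:Union_convex_polytope} is positive, which makes the double logarithm $\ln S_k$ on the left-hand side of \eqref{eq:Union_convex_polytope} well defined, and which also guarantees the nonnegativity $r_k\,\varsigma(\xv_i'\betav_k^{(2:T+1)})\ge 0$ invoked above.

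Next, fix $k$ and suppose $\xv_i\in\mathcal{D}_k$, that is, \eqref{eq:Union_convex_polytope} holds. Recognizing that the left-hand side of \eqref{eq:Union_convex_polytope} is exactly $\xv_i'\betav_k^{(T+1)}+\ln S_k$, I would exponentiate to get $e^{\xv_i'\betav_k^{(T+1)}}S_k>(1-p_0)^{-1/r_k}-1$, add one to both sides, take logarithms to recover the full stack-softplus $\varsigma(\xv_i'\betav_k^{(2:T+1)})=\ln[1+e^{\xv_i'\betav_k^{(T+1)}}S_k]>-\tfrac{1}{r_k}\ln(1-p_0)$, and multiply through by $r_k$ to obtain $r_k\,\varsigma(\xv_i'\betav_k^{(2:T+1)})>-\ln(1-p_0)$. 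Combining this with the nonnegativity of the other summands yields $\lambda(\xv_i)\ge r_k\,\varsigma(\xv_i'\betav_k^{(2:T+1)})>-\ln(1-p_0)$, so $\xv_i$ satisfies \eqref{eq:SS-softplus_ineuqality} and lies in the confined space. Since $k$ was arbitrary, every $\mathcal{D}_k$ is contained in the confined space, hence so is $\mathcal{D}_\star=\bigcup_k\mathcal{D}_k$.

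The only delicate point is the bookkeeping of the nested $\ln$/$\exp$ operations that convert the defining inequality of $\mathcal{D}_k$ into a clean lower bound on the full stack-softplus value; this is the reverse of the term-by-term argument for sum-softplus in Theorem \ref{thm:sum_polytope}, where all terms being small forces each hyperplane inequality, whereas here a single term being large already places $\xv_i$ inside the confined space. I would also remark that this containment is generally strict, i.e.\ the confined space can be strictly larger than $\mathcal{D}_\star$, which is precisely why each $\mathcal{D}_k$ is only ``convex-polytope-like'' and bounds rather than exactly describes the decision boundary.
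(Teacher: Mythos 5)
Your proof is correct and follows essentially the same route as the paper's: drop the nonnegative summands for $k'\neq k$ and show that membership in a single $\mathcal{D}_k$ already pushes the $k$th term $r_k\,\varsigma\big(\xv_i'\betav_k^{(2:T+1)}\big)$ above $-\ln(1-p_0)$. You merely make explicit the exponentiate--add-one--take-log bookkeeping (and the positivity of the stack-softplus) that the paper leaves implicit in its one-line argument and in the companion proof of Proposition \ref{prop:union}.
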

 
 \begin{cor}\label{cor:union_polytope}
 For sum-stack-softplus regression, the confined space specified by the inequality $P(y_i=1\given \xv_i)=1-e^{-\lambda(\xv_i)}\le p_0$ is bounded by $\bar{\mathcal{D}}_{\star} = \bar{\mathcal{D}}_1 \cap \bar{\mathcal{D}}_2\cap\ldots $
 %, the complement of the union of convex-polytope-like confined spaces
 %$\mathcal{D}_{\star} = \mathcal{D}_1\cup \mathcal{D}_2 \cup\ldots$.
 \end{cor}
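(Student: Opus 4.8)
The plan is to derive the Corollary directly from Theorem~\ref{thm:union_polytope} by passing to set-theoretic complements within the covariate space, so that essentially no new analysis is required. First I would fix notation: let $S^{+}=\{\xv_i : P(y_i=1\given\xv_i)>p_0\}$ denote the confined space treated in Theorem~\ref{thm:union_polytope}, let $S^{-}=\{\xv_i : P(y_i=1\given\xv_i)\le p_0\}$ denote the confined space appearing in the Corollary, and let $\bar{\mathcal{D}}_k$ denote the complement of $\mathcal{D}_k$. Because $P(y_i=1\given\xv_i)=1-e^{-\lambda(\xv_i)}$ is a well-defined number in $[0,1]$ for every $\xv_i$---the finiteness and positivity of $\lambda(\xv_i)$ being guaranteed by Proposition~\ref{lem:finite1}---the events $\{P>p_0\}$ and $\{P\le p_0\}$ partition the covariate space, so $S^{-}=\overline{S^{+}}$.

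Theorem~\ref{thm:union_polytope} supplies the inclusion $S^{+}\supseteq\mathcal{D}_{\star}=\mathcal{D}_1\cup\mathcal{D}_2\cup\cdots$, where each $\mathcal{D}_k$ is the convex-polytope-like region cut out by~\eqref{eq:Union_convex_polytope}. I would take complements of both sides; since complementation reverses inclusions, this yields $S^{-}=\overline{S^{+}}\subseteq\overline{\mathcal{D}_{\star}}$. Applying De Morgan's law to the countable union then gives $\overline{\mathcal{D}_{\star}}=\overline{\bigcup_{k}\mathcal{D}_k}=\bigcap_{k}\bar{\mathcal{D}}_k=\bar{\mathcal{D}}_1\cap\bar{\mathcal{D}}_2\cap\cdots=\bar{\mathcal{D}}_{\star}$, and combining the two relations establishes $S^{-}\subseteq\bar{\mathcal{D}}_{\star}$, i.e.\ the confined space $\{P(y_i=1\given\xv_i)\le p_0\}$ is bounded by (contained in) $\bar{\mathcal{D}}_{\star}=\bar{\mathcal{D}}_1\cap\bar{\mathcal{D}}_2\cap\cdots$, as asserted.

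I do not anticipate a genuine obstacle, since the argument is purely a complementation layered on Theorem~\ref{thm:union_polytope}; the only points deserving a word of care are that De Morgan's identity must be invoked for a countably infinite index set, which is legitimate because the identity holds for arbitrary index families, and that the almost-sure qualifier attached in Theorem~\ref{thm:union_polytope} to violating the boundary inequalities carries through complementation unchanged. The latter holds because, conditioned on the gamma-process atoms $\{r_k,\betav_k^{(2:T+1)}\}_k$, membership of $\xv_i$ in each $\mathcal{D}_k$ and in its complement $\bar{\mathcal{D}}_k$ is a deterministic condition, so no additional probabilistic subtlety arises when the complements are assembled.
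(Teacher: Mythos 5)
Your proof is correct and matches the paper's intended argument: the paper states this result as a corollary of Theorem~\ref{thm:union_polytope} with no separate written proof, precisely because it follows by the complementation and De Morgan argument you spell out ($S^{-}=\overline{S^{+}}\subseteq\overline{\mathcal{D}_{\star}}=\bar{\mathcal{D}}_1\cap\bar{\mathcal{D}}_2\cap\cdots$). Your reading of ``bounded by'' as set inclusion is also consistent with the paper's usage in Theorem~\ref{thm:sum_polytope}, so nothing further is needed.
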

 
% \begin{prop}\label{thm:sum_polytope}
% For any data point $\xv_i$ that resides inside the convex polytope defined by % the set of solutions to countably infinite inequalities in 
% \eqref{eq:convex_polytope}, which means the $\xv_i$ satisfies at least one of the inequalities in \eqref{eq:convex_polytope} a.s.,
% it will be labeled under sum-softplus regression with $y_i=1$ with a probability greater than $p$, and $y_i=0$ with a probability no greater than $1-p$.
% \end{prop}
% 
%

\begin{prop}\label{prop:union}
For any data point $\xv_i$ that resides inside 
the union of countably infinite convex-polytope-like confined spaces $\mathcal{D}_{\star} = \mathcal{D}_1\cup \mathcal{D}_2 \cup\ldots$, which means $\xv_i$ satisfies at least one of the inequalities in \eqref{eq:Union_convex_polytope}, % for at least one $k\in\{1,2,\ldots\}$, 
it will be labeled under sum-stack-softplus regression with $y_i=1$ with a probability greater than $p_0$, and $y_i=0$ with a probability no greater than $1-p_0$.
 % relatively high probability (at least $p$).
 %, and assigning the label $y_i=0$ to an $\xv_i$ outside the union $\mathcal{D_{\star}}$ ($i.e.$, an $\xv_i$ that violates \eqref{eq:Union_convex_polytope} for all $k$) with a relatively low probability (less than $p$).
 \end{prop}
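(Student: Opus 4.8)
The plan is to reduce the membership condition for the union $\mathcal{D}_{\star}$ to a single scalar inequality on the BerPo rate, and then exploit the nonnegativity of each stack-softplus summand to dominate the whole rate by one active term.

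First I would record the algebraic equivalence underlying Theorem~\ref{thm:union_polytope}: for each fixed $k$, the defining inequality \eqref{eq:Union_convex_polytope} of $\mathcal{D}_k$ is equivalent to
$$
r_k\,\varsigma\big(\xv_i'\betav_k^{(2)},\ldots,\xv_i'\betav_k^{(T+1)}\big) > -\ln(1-p_0).
$$
This follows by undoing the nested construction of $\varsigma$, exactly as in the derivation leading to \eqref{eq:positive_convex_polytope} for a single expert: writing $\varsigma=\ln(1+S_k)$ with $S_k=e^{\xv_i'\betav_k^{(T+1)}}\ln\{1+\cdots\}$, the condition $r_k\varsigma>-\ln(1-p_0)$ becomes $1+S_k>(1-p_0)^{-1/r_k}$, hence $S_k>(1-p_0)^{-1/r_k}-1$, and taking logarithms returns exactly \eqref{eq:Union_convex_polytope}. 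Because every quantity inside the logarithms is strictly positive, each step is reversible, so the two conditions are genuinely equivalent.

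Next I would invoke the hypothesis $\xv_i\in\mathcal{D}_{\star}=\mathcal{D}_1\cup\mathcal{D}_2\cup\ldots$: there is at least one index $k^{\ast}$ with $\xv_i\in\mathcal{D}_{k^{\ast}}$, so by the equivalence above
$$
r_{k^{\ast}}\,\varsigma\big(\xv_i'\betav_{k^{\ast}}^{(2)},\ldots,\xv_i'\betav_{k^{\ast}}^{(T+1)}\big) > -\ln(1-p_0) > 0,
$$
where the last inequality uses $p_0\in(0,1)$. The key structural observation is that each summand of $\lambda(\xv_i)$ is nonnegative: by its recursive definition $\varsigma(x_1,\ldots,x_t)=\ln\big(1+(\text{positive})\big)>0$, and $r_k\ge 0$. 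Proposition~\ref{lem:finite1} guarantees the series converges, so $\lambda(\xv_i)$ is finite and well defined; dropping every term except $k^{\ast}$ then yields
$$
\lambda(\xv_i)=\sum_{k=1}^\infty r_k\,\varsigma\big(\xv_i'\betav_k^{(2)},\ldots,\xv_i'\betav_k^{(T+1)}\big)\ \ge\ r_{k^{\ast}}\,\varsigma\big(\xv_i'\betav_{k^{\ast}}^{(2)},\ldots,\xv_i'\betav_{k^{\ast}}^{(T+1)}\big)\ >\ -\ln(1-p_0).
$$

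Finally I would translate this bound on the rate into the claimed bound on the label probability. Since $p\mapsto 1-e^{-p}$ is strictly increasing, $\lambda(\xv_i)>-\ln(1-p_0)$ gives
$$
P(y_i=1\given \xv_i)=1-e^{-\lambda(\xv_i)} > 1-e^{\ln(1-p_0)}=p_0,
$$
and hence $P(y_i=0\given \xv_i)=1-P(y_i=1\given \xv_i)<1-p_0$, which is in particular no greater than $1-p_0$. I do not anticipate a genuine obstacle: the whole argument is a one-line domination of the infinite sum by a single active summand, and the only points needing care are the strict positivity required to take logarithms in the equivalence step and the appeal to Proposition~\ref{lem:finite1} to ensure $\lambda(\xv_i)$ is finite, so that the strict inequality is meaningful rather than vacuous.
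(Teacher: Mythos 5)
Your proof is correct and follows essentially the same route as the paper's: pick one satisfied inequality in \eqref{eq:Union_convex_polytope}, unwind it into the equivalent bound $r_k\,\varsigma(\xv_i'\betav_k^{(2)},\ldots,\xv_i'\betav_k^{(T+1)}) > -\ln(1-p_0)$, drop the remaining nonnegative summands to lower-bound $\lambda(\xv_i)$, and conclude by monotonicity of $1-e^{-\lambda}$. Your explicit reversibility check of the logarithmic equivalence and the appeal to Proposition~\ref{lem:finite1} for finiteness are just more detailed versions of steps the paper leaves implicit (and the finiteness appeal is not even needed, since $\lambda(\xv_i)=\infty$ would only strengthen the conclusion).
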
 

\subsubsection{Illustration for sum-stack-softplus regression}

Let us examine how SS-softplus regression performs on the same dataset used in Fig. \ref{fig:circle_1}. % to illustrate its geometric constraints. 
When Class $A$ is labeled as ``1,'' as shown in Figs. \ref{fig:circle_3} (b)-(c), SS-softplus regression infers about eight convex-polytope-like confined spaces, the intersection of six of which defines the boundary of the covariate space that separates %the points that satisfy at least one inequality in \eqref{eq:Union_convex_polytope} from
 the points that violate all inequalities in~\eqref{eq:Union_convex_polytope} from the ones that satisfy at least one inequality in \eqref{eq:Union_convex_polytope}. The union of these convex-polytope-like confined spaces defines a confined covariate space, %bounded within which is % inside which a data is labeled as ``1'' with a probability larger than 50\%, and 
 which is included within the covariate space satisfying $P(y_i\given \xv_i)>0.5$, as shown in Fig. \ref{fig:circle_3} (d). % is bounded within this confined covariate space. 
 
When Class $B$ is labeled as ``1,'' as shown in Fig. \ref{fig:circle_3} (f)-(g), SS-softplus regression infers about six convex-polytope-like confined spaces, %the intersections of two of 
one of which defines the boundary of the covariate space that separates the points that violate all inequalities in \eqref{eq:Union_convex_polytope} from the others for the covariate space show in Fig. \ref{fig:circle_3} (g). The union of two convex-polytope-like confined spaces defines a confined covariate space, which is included in the covariate space with $P(y_i\given \xv_i)>0.5$, as shown in Fig. \ref{fig:circle_3} (h).
Figs. \ref{fig:circle_3} (f)-(g) also indicate that except for two convex-polytope-like confined spaces, the boundaries of all the other convex-polytope-like confined spaces are far from any data points and tend to vote ``No'' for all training data.
%An advantage of using a Bayesian hierarchical model is that with collected MCMC samples, one may estimate not only the posterior means but also the posterior standard deviations of interested model outputs, as shown in Fig. \ref{fig:circle_ave_1}. 
The standard deviations shown in Figs. \ref{fig:circle_ave_3} (b), (d), (f), and (h) clearly indicate the uncertainties of SS-softplus regression on %binary 
classification decision boundaries and predictive probabilities. % in the covariate space, 

 \begin{figure}[!t]
\begin{center}
 \includegraphics[width=0.75\columnwidth]{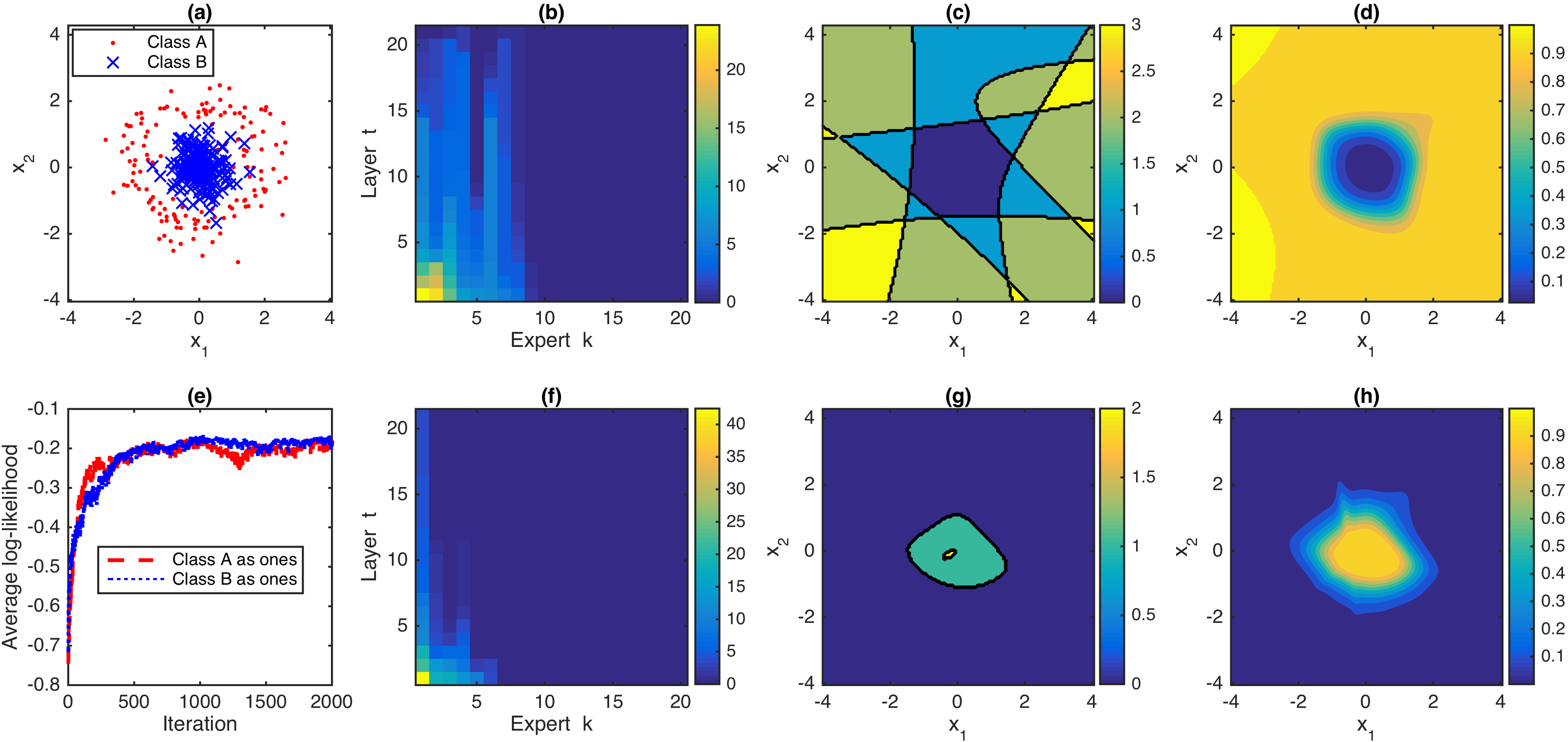}
 \vspace{-.2cm}
%\vspace{2.5mm}
% \includegraphics[width=0.75\columnwidth]{ICNBE_Toy_rk}
%MCE.png}
%\vspace{3cm}
\end{center}
\vspace{-4.9mm}
\caption{\small\label{fig:circle_3}
%Analogous figure to Fig. \ref{fig:banana} for SS-softplus regression, using the same dataset as in Fig.~\ref{fig:circle_1}.
%
Analogous figure % plots to those in
to Figs. \ref{fig:circle_1} and \ref{fig:circle_2} for SS-softplus regression with $K_{\max}=20$ experts and $T=20$ criteria for each expert, with the following differences: 
 (b) shows the average latent count per positive sample, $\sum_i m_{ik}^{(t)}\big /\sum_i \delta(y_i=1)$, as a function of both the expert index $k$ and layer index $t$, where the experts are ordered based on the values of $\sum_i m_{ik}^{(1)}$, (c) shows a contour map, the value of each point of which represents how many inequalities specified in \eqref{eq:Union_convex_polytope} are satisfied, and whose region with nonzero values corresponds to 
 the union of convex-polytope-like confined spaces, each of which corresponds to an inequality defined in \eqref{eq:Union_convex_polytope},
 %the convex polytope enclosed by the intersections of the hyperplanes defined in \eqref{eq:convex_polytope_recursive}, 
 and (f) and (g) are analogous plots to (b) and (c) under the opposite labeling setting where data in Class $B$ are labeled as ``1.'' 
 }
%\end{figure}

 %\begin{figure}[!t]
\begin{center}
 \includegraphics[width=0.75\columnwidth]{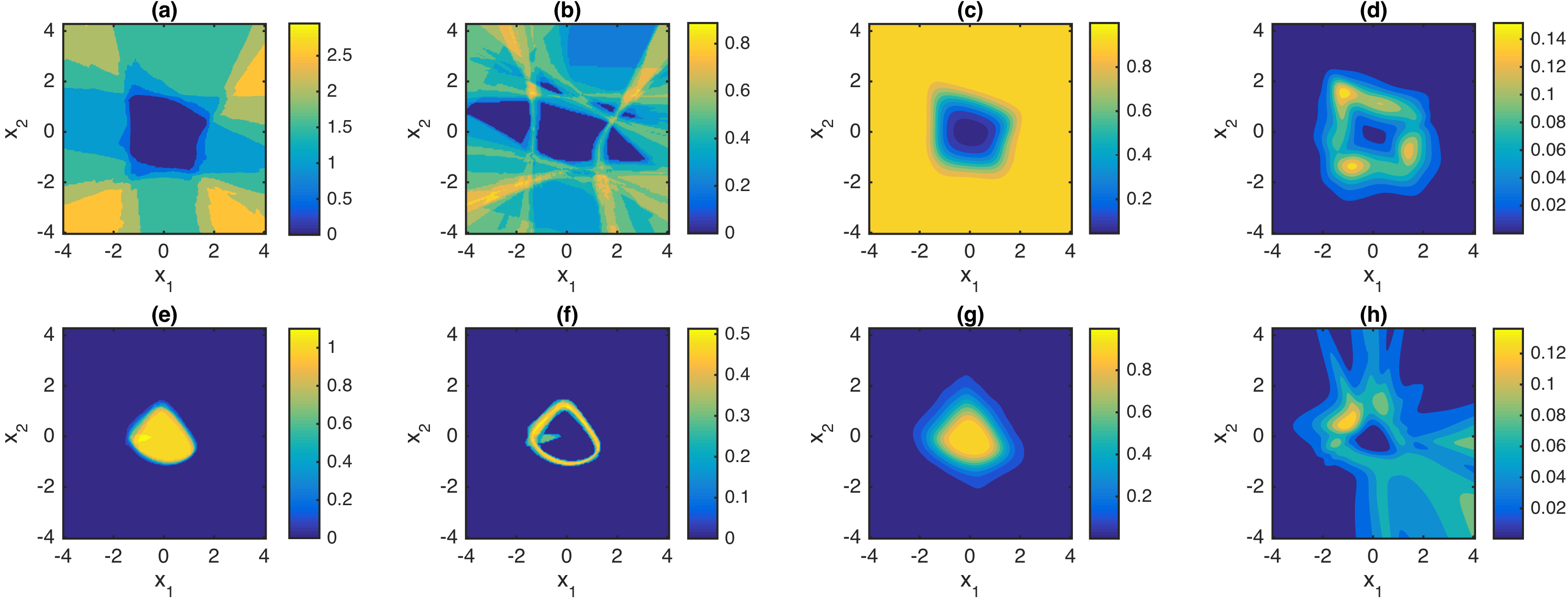}
 \vspace{-.2cm}
%\vspace{2.5mm}
% \includegraphics[width=0.75\columnwidth]{ICNBE_Toy_rk}
%MCE.png}
%\vspace{3cm}
\end{center}
\vspace{-5.9mm}
\caption{\small\label{fig:circle_ave_3}
%Analogous figure to Fig. \ref{fig:banana_ave} for SS-softplus regression, using the same dataset as in Fig.~\ref{fig:circle_1}.
Analogous figure %plots to those in 
to 
Fig. \ref{fig:circle_ave_1} for SS-softplus regression, with the following differences: 
%Visualization of the posteriors of stack-softplus regression based on 20 MCMC samples, with the same experimental setting used for Fig. \ref{fig:circle_2}.
(a) and (b) show the contour maps of the posterior means and standard deviations, respectively, of the number of inequalities specified in \eqref{eq:Union_convex_polytope} that are satisfied. (e)-(f) are analogous plots to (a)-(b).
 }
\end{figure}

\vspace{-4mm}
\section{%Bayesian inference via 
Gibbs sampling via data augmentation and marginalization %and Illustration
}\label{sec:inference}
\vspace{-2mm}
% interpretation for binary convolutional logistic regression} % of ICNBE}

%\subsection{Gibbs sampling 
%Model and inference 
%for %truncated
 %ICNBR 
% via data augmentation}

%\subsection{Hierarchical model } %Gibbs sampling via Data Augmentation}

Since logistic, softplus, sum-softplus, and stack-softplus regressions can all be considered as special cases of SS-softplus regression, below we will focus on presenting the nonparametric Bayesian hierarchical  model and Bayesian inference for SS-softplus regression. % and its via data augmentation. 

The gamma process $G\sim\Gamma{\mbox{P}}(G_0,1/c)$  has an inherent shrinkage mechanism, as in the prior the number of atoms with weights larger than $\epsilon>0$ follows %a Poisson distribution that has a finite mean as 
$
\mbox{Pois}\left(\gamma_0\int_{\epsilon}^\infty r^{-1}e^{-cr}dr\right),
$
whose mean is finite a.s., where $\gamma_0=G_0(\Omega)$ is the mass parameter of the gamma process.   % if $\epsilon > 0$. 
In practice, the atom with a tiny weight generally has a negligible impact on the final decision boundary of the model, % convolved distribution and the decision boundary, 
hence one may %it is common to
 truncate either the weight to be above $\epsilon$ or the number of atoms to be below $K$. One may also follow \citet{Wolp:Clyd:Tu:2011} to use a reversible jump MCMC \citep{green1995reversible} strategy to adaptively truncate the number of atoms for a gamma process, which often comes with a high computational cost. For the convenience of implementation, 
 we truncate the number of atoms in the gamma process to be $K$ by choosing a finite discrete base measure as $G_0=\sum_{k=1}^K \frac{\gamma_0} K \delta_{\alpha_{k}}$, where
% we use the finite truncation strategy and
$K$ will be set sufficiently large to achieve a good approximation to the truly countably infinite model.
 %We %use the convention to
 % denote $T$ as the depth of the gamma belief network in SS-softplus regression and
% We also notice that it is not necessary to use the same number of experts in each subcommittee, and hence we write $T_k$ for each $k$ instead of using the same $T$ for all $k$. 
 % We set $T_k$ using the same value for all $k$ in the beginning and try to truncate them. 
 
 We express the truncated SS-softplus regression model using \eqref{eq:DICLR_model} together with
 \begin{align} \small
%\mbox{NB}\left(\theta^{(1)}_{ik}, \frac{\exp(\xv_i'\betav_k)}{1+\exp(\xv_i'\betav_k)}\right),\notag\\
&r_k\sim\mbox{Gamma}(\gamma_0/K,1/c_0),~\gamma_0\sim\mbox{Gamma}(a_0,1/b_0),~c_0\sim\mbox{Gamma}(e_0,1/f_0),\notag\\
&~~~~~~~~~~~~~~~~~\betav^{(t)}_{k}\sim\prod_{v=0}^{V}\mathcal{N}(0,\alpha_{vtk}^{-1}),~\alpha_{vtk}\sim\mbox{Gamma}(a_t,1/b_t),
\label{eq:ICNBE_finite}
\end{align}
where $t\in\{2,\ldots,T+1\}$. Related to \citet{RVM}, the normal gamma construction in (\ref{eq:ICNBE_finite})  is used to promote sparsity on the regression coefficients $\betav^{(t)}_{k}$.
We derive Gibbs sampling %with closed-form update equations, 
%derived
 by exploiting local conjugacies under a series of data augmentation and marginalization techniques. 
We comment here that while the proposed Gibbs sampling algorithm is a batch learning algorithm that processes all training data samples in each  iteration, the local conjugacies  revealed under data augmentation and marginalization may be of significant value in developing efficient mini-batch based online learning algorithms, including those based on stochastic gradient MCMC \citep{welling2011bayesian,girolami2011riemann,patterson2013stochastic,ma2015complete} and stochastic variation inference \citep{hoffman2013stochastic}. We leave the maximum likelihood, maximum a posteriori, (stochastic) variational Bayes inference, and stochastic gradient MCMC for softplus regressions for future research. 

%, making it difficult to apply the Newton-Raphson algorithm to find a local minimum of the negative log-likelihood. 

%\subsection{Upward-downward Gibbs sampling via data augmentation}

%\subsection{Gibbs sampling via data augmentation}
For a model with ${T }=1$, 
we exploit the data augmentation techniques developed for the BerPo link in \citet{EPM_AISTATS2015} to sample $m_{i}$, these developed for the Poisson and multinomial distributions \citep{Dunson05bayesianlatent,BNBP_PFA_AISTATS2012} to sample $m_{ik}$, these developed for the NB distribution in \citet{NBP2012} to sample $r_{k}$ and $\gamma_0$, and these developed for logistic regression in \citet{LogitPolyGamma} and further generalized to NB regression in \citet{LGNB_ICML2012} and \citet{polson2013bayesian} to sample $\betav_{k}$. We exploit %the relationships between the Poisson and multinomial distributions \citep{Dunson05bayesianlatent,BNBP_PFA_AISTATS2012} to sample $m_{ik}$ and exploit 
local conjugacies to sample all the other model parameters. 
For a model with ${T}\ge 2$, we further generalize the inference technique developed for the gamma belief network in \citet{PGBN_NIPS2015} to sample the model parameters of deep hidden layers. 
Below we provide a theorem, related to Lemma~1 for the gamma belief network in \citet{PGBN_NIPS2015}, %to outline the key steps in inference, showing 
to show that each regression coefficient vector can be linked to latent counts under NB regression. 
Let $m\sim\mbox{SumLog}(n,p)$ represent the sum-logarithmic distribution described in \citet{NBP_CountMatrix}, Corollary \ref{cor:sumlog} further shows an alternative representation of   \eqref{eq:DICLR_model}, the hierarchical model of SS-softplus regression, where all the covariate-dependent gamma distributions are marginalized out.
%We present all Gibbs sampling update equations in the Appendix. 

%Note that if we let $p_{ik}^{(t)} = 1-e^{-{q}_{ik}^{(t)}}$, $i.e.$, ${q}_{ik}^{(t)} = -\ln(1-p_{ik}^{(t)})$, then we have: % the following corollary. 
\begin{thm} %[Augment-and-conquer the gamma belief network]
\label{cor:PGBN} Let us denote $p_{ik}^{(t)} = 1-e^{-{q}_{ik}^{(t)}}$, $i.e.$, ${q}_{ik}^{(t)} = -\ln(1-p_{ik}^{(t)})$,
and  $\theta_{ik}^{({T }+1)}=r_k$. With ${q}_{ik}^{(1)}: = 1$ %, $c_j^{(2)}:=(1-p_j)/p_j$ and 
and
\beq
{q}_{ik}^{(t+1)} := \ln\left( 1+ {q}_{ik}^{(t)}e^{\xv_i\betav_{k}^{(t+1)}} \right)  \label{eq:lambda}
\eeq
for $t=1,\ldots,T$, which means 
\beqs
\displaystyle
&{q}_{ik}^{(t+1)} = \varsigma\big(\xv_i'\betav_k^{(2)},\ldots,\xv_i'\betav_k^{(t+1)}\big)\notag\\
=
%\ln\left\{ 1+\ldots+ \ln\left[1+ \ln\left(1+e^{\xv_i\betav_{k}^{(2)}}\right)e^{\xv_i\betav_{k}^{(3)}}\right]\ldots e^{\xv_i\betav_{k}^{(t+1)}} \right\}
&\displaystyle \ln\left(\!1\!+\!e^{\xv_i'\betav_k^{({t}\!+\!1)}}\ln\!\bigg\{1\!+\!e^{\xv_i'\betav_k^{({t})}}\ln\Big[1\!+\!\ldots
\ln\big(1\!+\!e^{\xv_i'\betav_k^{(2)}}\big)\Big]\bigg\}\right),
 \label{eq:q_ik}
\eeqs
one may find latent counts $m_{ik}^{(t)}$
%, which are sequentially sampled as
%%\beqs
%%& \displaystyle m_i \sim y_i\emph{\mbox{Pois}}_+\left(\sum_{k=1}^{K}\theta^{(1)}_{ik}\right),\notag\\
%%& \displaystyle
%%\left\{\left(m_{i1}^{(1)},\ldots,m_{iK}^{(1)}\right)\,\Big| \,m_{i}, \{\theta_{ik}^{(1)}\}_{1,K}\right\}\sim\emph{\mbox{Mult}}\left(m_{i}, \frac{\theta^{(1)}_{i1}}{\sum_{{\tilde{k}}=1}^{K}\theta^{(1)}_{i{\tilde{k}}}},\ldots,\frac{\theta^{(1)}_{iK}}{\sum_{{\tilde{k}}=1}^{K}\theta^{(1)}_{i{\tilde{k}}}}\right), ~~\label{eq:step1} %\\
%%\eeqs
%%and
%\beq
%(m_{ik}^{(t)}\given m_{ik}^{(t-1)},\theta_{ik}^{(t)})\sim\emph{\mbox{CRT}}\big(m_{ik}^{(t-1)},~\theta_{ik}^{(t)}\big)
%\eeq
%for $t=2,\ldots,{T}+1$,
that are
 connected to the regression coefficient vectors as%under NB regression as
%to the product $\Phimat^{(t)}\thetav_j^{(t)}$ at layer $t$ under the Poisson likelihood as 
\beq
m_{ik}^{(t)}\sim\emph{\mbox{NB}}(\theta_{ik}^{(t+1)},~~ 1 - e^{-{q}_{ik}^{(t+1)}})
 = \emph{\mbox{NB}}\left(\theta_{ik}^{(t+1)}, \frac{1}{1+e^{-\xv_i'\betav_{k}^{(t+1)} - \ln ({q}_{ik}^{(t)})}}\right) .
%\xv_j^{(t)}\sim\emph{\mbox{Pois}}\left[-\Phimat^{(t)}\thetav_j^{(t)}\ln\left(1-p_j^{(t)}\right)\right].
\label{eq:deepPFA_aug1}
\eeq
\end{thm}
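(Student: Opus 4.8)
The plan is to establish the displayed distribution of $m_{ik}^{(t)}$ together with the two equalities by an upward induction on the layer index $t$, invoking two standard count-modeling tools and then checking that the recursion \eqref{eq:lambda} for ${q}_{ik}^{(t)}$ reproduces the softplus recursion. The first tool is Poisson--gamma (negative binomial) conjugacy: if $m\sim\mbox{Pois}(\theta)$ with $\theta\sim\mbox{Gamma}(a,b)$ of scale $b$, then marginally $m\sim\mbox{NB}\big(a,\,b/(1+b)\big)$, so its failure probability satisfies $-\ln(1-p)=\ln(1+b)$. The second tool is the negative-binomial-to-Poisson augmentation through the Chinese restaurant table (CRT) and sum-logarithmic representations \citep{NBP2012,NBP_CountMatrix}: for $m\sim\mbox{NB}(r,p)$, drawing a CRT count $\ell\sim\mbox{CRT}(m,r)$ produces a bivariate law that equivalently factors as $\ell\sim\mbox{Pois}\big(-r\ln(1-p)\big)$ and $m\mid\ell\sim\mbox{SumLog}(\ell,p)$. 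I will also use the elementary scaling property $c\,\theta\sim\mbox{Gamma}(a,cb)$ when $\theta\sim\mbox{Gamma}(a,b)$.

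For the base case $t=1$, I would apply the first tool to $m_{ik}^{(1)}\sim\mbox{Pois}(\theta_{ik}^{(1)})$ with $\theta_{ik}^{(1)}\sim\mbox{Gamma}\big(\theta_{ik}^{(2)},e^{\xv_i'\betav_k^{(2)}}\big)$, obtaining $m_{ik}^{(1)}\sim\mbox{NB}\big(\theta_{ik}^{(2)},\,1-e^{-{q}_{ik}^{(2)}}\big)$, since substituting ${q}_{ik}^{(1)}=1$ into \eqref{eq:lambda} gives ${q}_{ik}^{(2)}=\ln(1+e^{\xv_i'\betav_k^{(2)}})$. This is exactly the claimed law at $t=1$.

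The inductive step is the heart of the argument. Assume $m_{ik}^{(t)}\sim\mbox{NB}\big(\theta_{ik}^{(t+1)},\,1-e^{-{q}_{ik}^{(t+1)}}\big)$, whose shape $\theta_{ik}^{(t+1)}$ is itself the gamma variable $\theta_{ik}^{(t+1)}\sim\mbox{Gamma}\big(\theta_{ik}^{(t+2)},e^{\xv_i'\betav_k^{(t+2)}}\big)$. Applying the second tool with $r=\theta_{ik}^{(t+1)}$ and $p=1-e^{-{q}_{ik}^{(t+1)}}$ introduces the layer-$(t+1)$ latent count $m_{ik}^{(t+1)}$ as the CRT count, so that $m_{ik}^{(t+1)}\sim\mbox{Pois}\big(\theta_{ik}^{(t+1)}{q}_{ik}^{(t+1)}\big)$. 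The scaling property then gives $\theta_{ik}^{(t+1)}{q}_{ik}^{(t+1)}\sim\mbox{Gamma}\big(\theta_{ik}^{(t+2)},\,{q}_{ik}^{(t+1)}e^{\xv_i'\betav_k^{(t+2)}}\big)$, and a second use of the first tool yields $m_{ik}^{(t+1)}\sim\mbox{NB}\big(\theta_{ik}^{(t+2)},p^{\star}\big)$ with $-\ln(1-p^{\star})=\ln\big(1+{q}_{ik}^{(t+1)}e^{\xv_i'\betav_k^{(t+2)}}\big)$. The crucial observation is that the right-hand side is precisely ${q}_{ik}^{(t+2)}$ by the recursion \eqref{eq:lambda}, so $p^{\star}=1-e^{-{q}_{ik}^{(t+2)}}$ and the induction advances one layer; it runs for $t=1,\ldots,T$ and terminates at the top, where the shape $\theta_{ik}^{(T+1)}=r_k$ is fixed given $G$ and is not augmented further.

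Finally, the second equality in \eqref{eq:deepPFA_aug1} is pure algebra: substituting $e^{-{q}_{ik}^{(t+1)}}=(1+{q}_{ik}^{(t)}e^{\xv_i'\betav_k^{(t+1)}})^{-1}$ into $1-e^{-{q}_{ik}^{(t+1)}}$ and clearing factors gives $1/(1+e^{-\xv_i'\betav_k^{(t+1)}-\ln {q}_{ik}^{(t)}})$; and the identity \eqref{eq:q_ik} that ${q}_{ik}^{(t+1)}=\varsigma\big(\xv_i'\betav_k^{(2)},\ldots,\xv_i'\betav_k^{(t+1)}\big)$ follows by a short induction matching \eqref{eq:lambda} to the recursive definition $\varsigma(x_1,\ldots,x_t)=\ln[1+e^{x_t}\varsigma(x_1,\ldots,x_{t-1})]$ of the stack-softplus function, with base case ${q}_{ik}^{(2)}=\ln(1+e^{\xv_i'\betav_k^{(2)}})=\varsigma(\xv_i'\betav_k^{(2)})$. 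I expect the main obstacle to be the inductive step, specifically invoking the CRT/sum-logarithmic augmentation correctly and confirming that the Poisson rate $\theta_{ik}^{(t+1)}{q}_{ik}^{(t+1)}$ it produces, once the next gamma layer is re-mixed, reproduces the softplus recursion exactly; the remaining pieces are routine conjugacy and direct computation.
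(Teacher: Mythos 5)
Your proposal is correct and follows essentially the same route as the paper's proof: an induction on the layer index whose key step augments $m_{ik}^{(t)}\sim\mbox{NB}\big(\theta_{ik}^{(t+1)},p_{ik}^{(t+1)}\big)$ via the CRT/sum-logarithmic compound-Poisson equivalence (Theorem 1 of \citet{NBP2012}) to obtain $m_{ik}^{(t+1)}\sim\mbox{Pois}\big(\theta_{ik}^{(t+1)}q_{ik}^{(t+1)}\big)$, and then marginalizes the gamma-distributed $\theta_{ik}^{(t+1)}$ to advance the negative binomial form one layer up. Your write-up merely makes explicit two details the paper leaves implicit---the gamma scaling step and the verification that the resulting probability parameter matches the recursion \eqref{eq:lambda}---plus the routine algebra for the second equality in \eqref{eq:deepPFA_aug1}.
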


\begin{cor}\label{cor:sumlog}
With $q_{ik}^{(t)} = -\ln(1-p_{ik}^{(t)})$ defined as in \eqref{eq:q_ik} and hence $p_{ik}^{(t)}=1-e^{-q_{ik}^{(t)}}$, the  hierarchical model of sum-stack-softplus regression can also be expressed as
\small\begin{align} \small
&m_{ik}^{(T+1)}\sim\emph{\mbox{Pois}}(r_k q_{ik}^{(T+1)}),~r_k\sim\emph{\mbox{Gamma}}(\gamma_0/K,1/c_0),\notag\\&m_{ik}^{(T)}\sim\emph{\mbox{SumLog}}(m_{ik}^{(T+1)}, p_{ik}^{(T+1)}),~\betav^{({T }+1)}_{k}\sim\prod_{v=0}^{V}\mathcal{N}(0,\alpha_{v({T }+1)k}^{-1}),
%{q}_{ik}^{(T+1)} = \ln\left( 1+ {q}_{ik}^{(T)}e^{\xv_i\betav_{k}^{(T+1)}} \right),
\notag\\
&~~~~~~~~~~~~~~~~~~~~~~~~~~~~~~~~~\tiny{\cdots}\notag\\
&m_{ik}^{(t)}\sim\emph{\mbox{SumLog}}(m_{ik}^{(t+1)}, p_{ik}^{(t+1)}),~\betav^{(t+1)}_{k}\sim\prod_{v=0}^{V}\mathcal{N}(0,\alpha_{v(t+1)k}^{-1}),
%{q}_{ik}^{(t+1)} = \ln\left( 1+ {q}_{ik}^{(t)}e^{\xv_i\betav_{k}^{(t+1)}} \right), 
\notag\\
&~~~~~~~~~~~~~~~~~~~~~~~~~~~~~~~~~\tiny{\cdots}\notag\\
y_i = \delta(m_i&\ge 1),~m_i
%\sim\emph{\mbox{Pois}}\left(\sum_{k=1}^\infty \theta^{(1)}_{ik}\right)\notag\\
=\sum_{k=1}^K m^{(1)}_{ik},~m^{(1)}_{ik} \sim \emph{\mbox{SumLog}}(m_{ik}^{(2)}, p_{ik}^{(2)}),~\betav^{(2)}_{k}\sim\prod_{v=0}^{V}\mathcal{N}(0,\alpha_{v2k}^{-1}),
%{q}_{ik}^{(2)} = \ln\left( 1+ e^{\xv_i\betav_{k}^{(2)}} \right),
\label{eq:ICNBE_finite_1}
\end{align}\normalsize

\end{cor}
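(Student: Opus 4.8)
The plan is to transform the Poisson--gamma form \eqref{eq:DICLR_model} of the model into the sum-logarithmic form \eqref{eq:ICNBE_finite_1} by marginalizing the covariate-dependent gamma variables $\theta_{ik}^{(t)}$ one layer at a time, starting from the bottom layer $t=1$ and moving upward, while simultaneously introducing the compound-Poisson latent counts $m_{ik}^{(t+1)}$. Two ingredients drive the argument. The first is Poisson--gamma conjugacy together with the Poisson scaling property: if $m\sim\mbox{Pois}(\theta q)$ with deterministic $q>0$ and $\theta\sim\mbox{Gamma}(a,b)$ of scale $b$, then $\theta q\sim\mbox{Gamma}(a,qb)$ and marginally $m\sim\mbox{NB}(a,\,qb/(1+qb))$. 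The second is the compound-Poisson (augment-and-conquer) representation of the negative binomial: for $m\sim\mbox{NB}(r,p)$ and $q=-\ln(1-p)$, the augmentation $m\sim\mbox{SumLog}(\ell,p)$, $\ell\sim\mbox{Pois}(rq)$ has marginal $\mbox{NB}(r,p)$ over $\ell$. Since Theorem~\ref{cor:PGBN} already certifies the per-layer identity $m_{ik}^{(t)}\sim\mbox{NB}(\theta_{ik}^{(t+1)},p_{ik}^{(t+1)})$ with $p_{ik}^{(t+1)}=1-e^{-q_{ik}^{(t+1)}}$, I would lean on it so that no probability parameter needs to be recomputed from scratch.

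Concretely, I would prove by induction on $t\in\{1,\ldots,T\}$ the claim that, after marginalization and augmentation, $m_{ik}^{(t)}\sim\mbox{Pois}(\theta_{ik}^{(t)}q_{ik}^{(t)})$ with $\theta_{ik}^{(t)}\sim\mbox{Gamma}(\theta_{ik}^{(t+1)},e^{\xv_i'\betav_k^{(t+1)}})$. Because $q_{ik}^{(1)}=1$, the base case $t=1$ is exactly the bottom line of \eqref{eq:DICLR_model}. For the inductive step I would absorb the deterministic factor $q_{ik}^{(t)}$ into the gamma scale, marginalize $\theta_{ik}^{(t)}$ to obtain $m_{ik}^{(t)}\sim\mbox{NB}(\theta_{ik}^{(t+1)},p_{ik}^{(t+1)})$, where the recursion \eqref{eq:lambda} guarantees $q_{ik}^{(t)}e^{\xv_i'\betav_k^{(t+1)}}/(1+q_{ik}^{(t)}e^{\xv_i'\betav_k^{(t+1)}})=1-e^{-q_{ik}^{(t+1)}}=p_{ik}^{(t+1)}$, and then apply the compound-Poisson lemma to replace this negative binomial by $m_{ik}^{(t)}\sim\mbox{SumLog}(m_{ik}^{(t+1)},p_{ik}^{(t+1)})$ with $m_{ik}^{(t+1)}\sim\mbox{Pois}(\theta_{ik}^{(t+1)}q_{ik}^{(t+1)})$. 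When $t<T$ the shape $\theta_{ik}^{(t+1)}$ is itself $\mbox{Gamma}(\theta_{ik}^{(t+2)},e^{\xv_i'\betav_k^{(t+2)}})$, so the inductive hypothesis holds at layer $t+1$ and the recursion continues; when $t=T$ the shape $\theta_{ik}^{(T+1)}=r_k$ is a constant (endowed with its prior $\mbox{Gamma}(\gamma_0/K,1/c_0)$ from \eqref{eq:ICNBE_finite}), so the chain terminates with $m_{ik}^{(T+1)}\sim\mbox{Pois}(r_k q_{ik}^{(T+1)})$.

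Reading off the resulting cascade from top to bottom reproduces \eqref{eq:ICNBE_finite_1} line by line: the top Poisson $m_{ik}^{(T+1)}\sim\mbox{Pois}(r_k q_{ik}^{(T+1)})$, the sum-logarithmic links $m_{ik}^{(t)}\sim\mbox{SumLog}(m_{ik}^{(t+1)},p_{ik}^{(t+1)})$ for $t=T,\ldots,2$, and the untouched bottom line $m_{ik}^{(1)}\sim\mbox{SumLog}(m_{ik}^{(2)},p_{ik}^{(2)})$, $m_i=\sum_k m_{ik}^{(1)}$, $y_i=\delta(m_i\ge 1)$. Since each marginalization-plus-augmentation step preserves the joint law of all quantities that are not integrated out, both representations induce the same distribution on $y_i$, which is the assertion. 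I expect the main obstacle to be the clean statement and verification of the compound-Poisson/SumLog representation and, at each layer, the bookkeeping that folds the deterministic factor $q_{ik}^{(t)}$ into the next gamma scale so that the emerging probability parameter is exactly $p_{ik}^{(t+1)}$; this is precisely the link that couples the softplus recursion \eqref{eq:lambda} to the sum-logarithmic chain, and it is where most of the care is required, though Theorem~\ref{cor:PGBN} removes the need to re-derive the $p_{ik}^{(t+1)}$ afresh.
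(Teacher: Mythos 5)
Your proposal is correct and takes essentially the same route as the paper: the paper establishes this corollary via the inductive proof of Theorem \ref{cor:PGBN}, which likewise alternates Poisson--gamma marginalization (folding $q_{ik}^{(t)}$ into the gamma scale so that the recursion \eqref{eq:lambda} produces exactly $p_{ik}^{(t+1)}$) with the compound-Poisson/sum-logarithmic augmentation of the negative binomial from \citet{NBP2012}, working upward layer by layer until the chain terminates in $m_{ik}^{(T+1)}\sim\mbox{Pois}(r_k q_{ik}^{(T+1)})$. Reading off the resulting cascade then yields \eqref{eq:ICNBE_finite_1}, just as you describe.
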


%Note that the we may also define $p_{ik}^{(t)}$ in \eqref{eq:p} using $p_{ik}^{(t)} = 1-e^{-{q}_{ik}^{(t)}}$ ($i.e.$, ${q}_{ik}^{(t)} = -\ln(1-p_{ik}^{(t)})$), where
%${q}_{ik}^{(1)}: =1$ %, $c_j^{(2)}:=(1-p_j)/p_j$ and 
%and
%\beq
%{q}_{ik}^{(t+1)} := \ln\left( 1+ {q}_{ik}^{(t)}e^{\xv_i\betav_{k}^{(t+1)}} \right) \label{eq:lambda}
%\eeq
%for $t=1,\ldots,T_k$.

We outline Gibbs sampling  in Algorithm \ref{alg:1} of Appendix \ref{app:setting}, where to save computation, %for efficient implementation, 
we consider setting $K_{\max}$ as the upper-bound of the number of experts and
 %and $T_{\max}$ as the upper-bound of the number of criteria equipped by each expert, and 
 %pruning at certain iterations the inactive
 deactivating experts assigned with zero counts during MCMC iterations. 
 We provide several additional model properties in Appendix \ref{app:T} to describe how the latent counts propagate across layers, 
which may be used to decide how to set the number of layers $T$. %many layers is necessary. % and sufficient. 
 For simplicity, we consider the number of criteria for each expert as a parameter that determines the model capacity and we fix it as $T$ for all experts in this paper. 
 We defer the details of all Gibbs sampling update equations to Appendix \ref{app:sampling}, in which we also describe in detail how to ensure numerical stability in a finite precision machine. Note that except for the sampling of $\{m^{(1)}_{ik}\}_k$, the sampling of all the other parameters of different experts are embarrassingly parallel.

\vspace{-4mm}
\section{Example Results} \label{sec:results}
\vspace{-2mm}

We compare softplus regressions with logistic regression, %both linear and 
Gaussian radial basis function (RBF) kernel support vector machine (SVM) \citep{boser1992training,vapnik1998statistical,scholkopf1999advances}, relevance vector machine (RVM) \citep{RVM}, adaptive multi-hyperplane machine (AMM) \citep{wang2011trading}, and convex polytope machine (CPM) \citep{kantchelian2014large}. Except for logistic regression that is a linear classifier, both kernel SVM and RVM are widely used nonlinear classifiers relying on the kernel trick, and both AMM and CPM %are nonlinear classifiers using
use  the intersection of multiple hyperplanes to construct their decision boundaries. We discuss the connections between softplus regressions and previous work in Appendix \ref{sec:discussion}.

Following \citet{RVM}, we consider the following datasets: % with binary labels: %Pima Diabetes, U.S.P.S., 
banana, breast cancer, titanic, waveform, german, and image. %, which were %The Pima Diabetes dataset was used in \citet{ripley2007pattern} and \citet{williams1998bayesian}, the U.S.P.S. was widely used ($e.g.$, in \citet{williams1998bayesian}) as a benchmark dataset for classification, and all the other datasets
%used in \citet{ratsch2001soft}. As 
For each of these six datasets, we consider the first ten predefined random training/testing partitions, and report both the mean and standard deviation of the testing classification  errors. 
Since these datasets, originally provided by \citet{ratsch2001soft}, were no longer available on the authors' websites, we use the version provided by \citet{Diethe15}. 
We also consider two additional datasets---ijcnn1 and a9a---that come with a default training/testing partition, for which we report the results of logistic regression, SVM, and RVM based on a single trial, and report the results of all the other algorithms based on five independent trials with different random initiations. We summarize in Tab. \ref{tab:data} of Appendix \ref{app:setting} the basic information of these  benchmark datasets. % in the following table. %, where $V$ is the feature dimension.

 % in Table~\ref{tab:data}.

%Note that like neither logistic regression nor SVM, 
Since the decision boundaries of all % softplus and sum-, stack-, and SS-
softplus regressions %all use multiple regression coefficient vectors and their decision boundaries
depend on whether 
the two %positive and negative 
classes are labeled as ``1'' and ``0'' or labeled as ``0'' and ``1,'' %respectively, 
%Thus for all softplus regression models,
 we consider
%let 
%\beq
%y_i = \delta(b_{i1}=1)\delta(b_{i2}=0),~~b_{i1}+b_{i2} = 1
%\eeq
%
repeating the same softplus regression algorithm twice, using both 
$y_{i1}\sim\mbox{BerPo}[\lambda_1(\xv_i)] \text{ and } y_{i2}\sim\mbox{BerPo}[\lambda_2(\xv_i)],
$
where $y_{i1}$
and $y_{i2}:=1-y_{i1}$ are the labels under two opposite labeling settings.
We combine them to %$\lambda_1(\xv_i)$ and $\lambda_2(\xv_i)$ %the same model's outputs
%obtained under 
%two opposite labeling settings to construct %using 
the following predictive distribution
%\begin{align}
%y_i \given \xv_i &\sim % \mbox{BerPo}[\lambda(\xv_i)]=
%%\mbox{Bernoulli} \left[ \frac{(1-e^{-\lambda_1(\xv_i)})e^{-\lambda_2(\xv_i)}}{(1-e^{-\lambda_1(\xv_i)})e^{-\lambda_2(\xv_i)} + (1-e^{-\lambda_2(\xv_i)})e^{-\lambda_1(\xv_i)}}\right],\notag\\
%%&\sim % \mbox{BerPo}[\lambda(\xv_i)]=
%\mbox{Bernoulli} \left[ \frac{P(y_{i1}=1)}{P(y_{i1}=1) + P(y_{i2}=1)} \right] =
%\mbox{Bernoulli} \left[ \frac{1-e^{-\lambda_1(\xv_i)}}{2-e^{-\lambda_1(\xv_i)}-e^{-\lambda_2(\xv_i)}}\right],\label{eq:fusing}
%\end{align}
$ %\begin{align}
y_i \given \xv_i \sim % \mbox{BerPo}[\lambda(\xv_i)]=
%\mbox{Bernoulli} \left[ \frac{(1-e^{-\lambda_1(\xv_i)})e^{-\lambda_2(\xv_i)}}{(1-e^{-\lambda_1(\xv_i)})e^{-\lambda_2(\xv_i)} + (1-e^{-\lambda_2(\xv_i)})e^{-\lambda_1(\xv_i)}}\right],\notag\\
%&\sim % \mbox{BerPo}[\lambda(\xv_i)]=
%\mbox{Bernoulli} \left[ \frac{P(y_{i1}=1)+P(y_{i2}=0)}{2 } \right] =
\mbox{Bernoulli} \left[(1-e^{-\lambda_1(\xv_i)}+e^{-\lambda_2(\xv_i)})/2\right],\label{eq:fusing}
$ %\end{align}
%
% We label $\xv_i$ as $y_i=1$ if and only if $y_{i1}=1$ and $y_{i2}=0$ and as $y_i=0$ if and only if $y_{i1}=0$ and $y_{i2}=1$, and hence combining the same model's outputs under two opposite labeling settings leads to % $\lambda(\xv_i)$ as
%\begin{align}
%y_i &\sim % \mbox{BerPo}[\lambda(\xv_i)]=
%\mbox{Bernoulli} \left[ \frac{(1-e^{-\lambda_1(\xv_i)})e^{-\lambda_2(\xv_i)}}{(1-e^{-\lambda_1(\xv_i)})e^{-\lambda_2(\xv_i)} + (1-e^{-\lambda_2(\xv_i)})e^{-\lambda_1(\xv_i)}}\right],\notag\\
%&\sim % \mbox{BerPo}[\lambda(\xv_i)]=
%\mbox{Bernoulli} \left[ \frac{e^{\lambda_1(\xv_i)}-1}{e^{\lambda_1(\xv_i)}+e^{\lambda_2(\xv_i)}-2}\right],\label{eq:fusing}
%\end{align}
which no longer depends on how the data are labeled. If we set $p_0=0.5$ as the probability threshold to make binary decisions, then $y_i$ would be labeled as ``1'' if $\lambda_1(\xv_i)>\lambda_2(\xv_i)$ and labeled as ``0'' otherwise. This simple strategy to train the same asymmetric model under two opposite labeling settings and combine their results together % as in \eqref{eq:fusing} 
is related to the one used in \citet{kantchelian2014large}, which, however, lacks of %probability estimates. %
probabilistic interpretation.
We leave more sophisticated training and combination strategies %to combine the model outputs under the two opposite labeling settings 
to future study.

%\subsection{Illustration}
%
%
%
%%We suspect that increasing $T$ 
%
%
%We consider three additional datasets, where the two classes are not separable by the boundary of a single convex-polytope-like confined space and hence neither sum-softplus nor stack-softplus regressions are expected to have good performance; we use these three datasets to show that SS-softplus regression could use the union of multiple convex-polytope-like confined spaces to create a flexible nonlinear decision boundary. 

For all datasets, we consider 1) softplus regression, which generalizes logistic regression with $r\neq 1$, %an additional parameter, 
2) sum-softplus regression, which reduces to softplus regression if the number of experts is $K=1$, 3) stack-softplus regression, which reduces to softplus regression if the number of layers is $T=1$, and 4) sum-stack-softplus (SS-softplus) regression, which reduces to sum-softplus regression if $T=1$, to stack-softplus regression if $K=1$, and to softplus regression if $K=T=1$. For sum-softplus regresion, we set the upperbound on the number of experts as $K_{\max}=20$, for deep softplus regression, we consider $T\in\{1,2,3,5,10\}$, and % and set $T=10$ for stack-softplus regression. 
 for SS-softplus regression, we set $K_{\max}=20$ and consider $T\in\{1,2,3,5,10\}$. 
For all softplus regressions, we consider 5000 Gibbs sampling iterations and record the maximum likelihood sample found during the last 2500 iterations as the point estimates of $r_k$ and $\betav_{k}^{(t)}$, which are used for out-of-sample predictions. %and collect one sample per 50 iterations during the last 2500 iterations. 
We set $a_0=b_0=0.01$, $e_0=f_0=1$, and $a_t = b_t = 10^{-6}$ for $t\in\{2,\ldots,T+1\}$. As in %the parameters in 
Algorithm \ref{alg:1} shown in Appendix \ref{app:setting}, we %set $K_{\max}=20$ and
 deactivate inactive experts for iterations in $I_{Prune}=\{525,575,\ldots,4975\}$.
 For a fair comparison, to ensure that the same training/testing partitions are used for all algorithms for all datasets, we report the results by using either widely used open-source software packages or the code made public available by the original authors.  We describe in Appendix \ref{app:setting} the settings of all the algorithms that are used for comparison.

\subsection{Illustrations}

 \begin{figure}[!t]
\begin{center}
 \includegraphics[width=0.835
 \columnwidth]{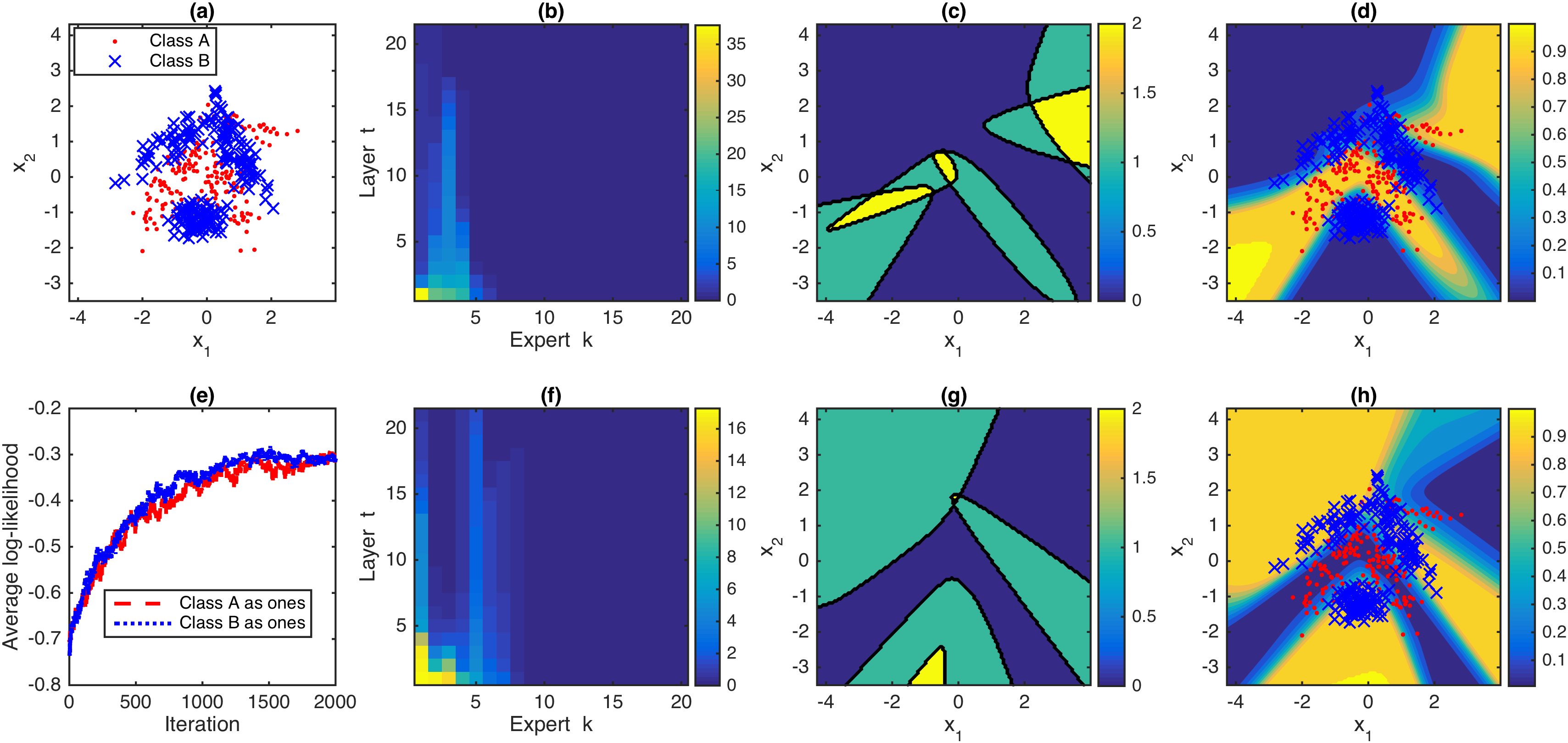}
%\vspace{2.5mm}
% \includegraphics[width=0.83\columnwidth]{ICNBE_Toy_quad_rk}
%ICNBE_toy_MCE.png}
%\vspace{3cm}
\vspace{-.3cm}
\end{center}
\vspace{-4.9mm}
\caption{\small\label{fig:banana}
Analogous figure to Fig. \ref{fig:circle_3} for the ``banana'' dataset.}
\begin{center}
 \includegraphics[width=0.83\columnwidth]{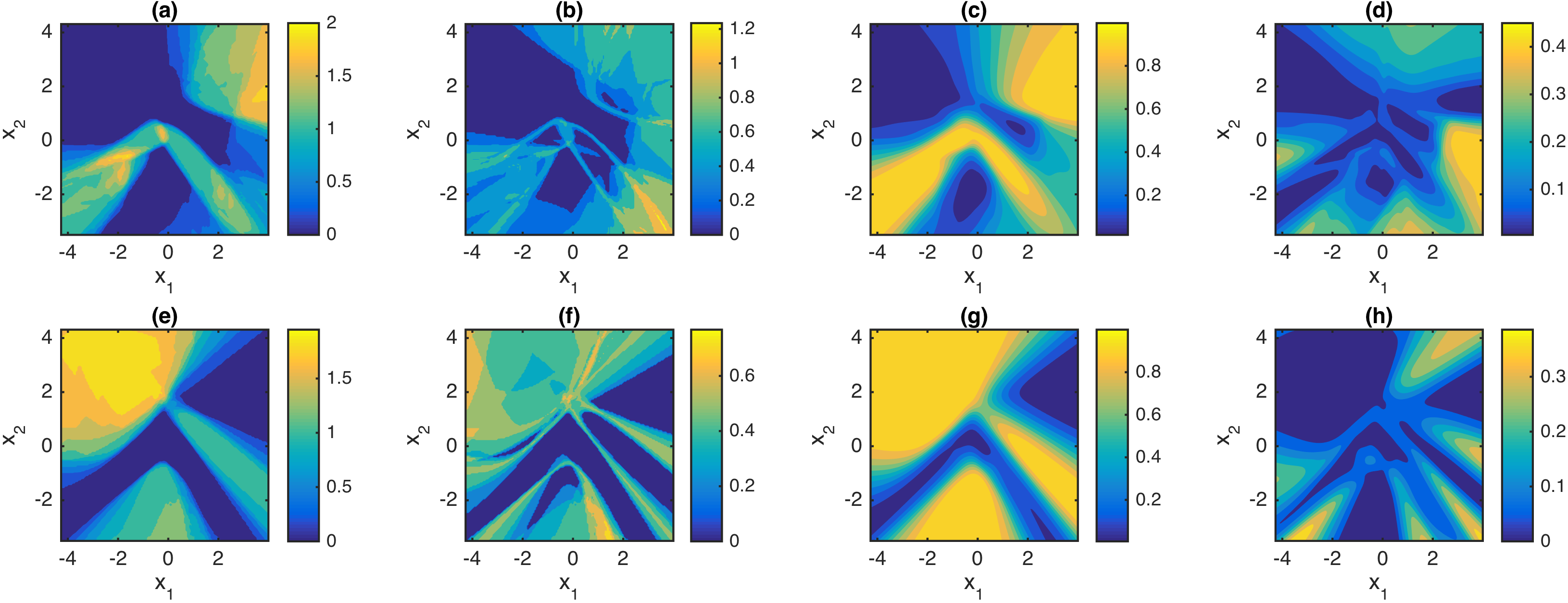}
%\vspace{2.5mm}
% \includegraphics[width=0.83\columnwidth]{ICNBE_Toy_rk}
%MCE.png}
\vspace{-.3cm}
\end{center}
\vspace{-4.9mm}
\caption{\small\label{fig:banana_ave}
Analogous figure to Fig. \ref{fig:circle_ave_3} for the ``banana'' dataset.
% Visualization of the posteriors of sum-statck-softplus regression based on 20 MCMC samples, collected once per every 50 iterations during the last 1000 MCMC iterations, with the same experimental setting used for Fig. \ref{fig:circle_ave_1}. With $p_0=0.5$,
%(a) and (b) show the contour maps of the posterior means and standard deviations, respectively, of the number of inequalities specified in \eqref{eq:Union_convex_polytope} that are satisfied, and (c) and (d) show the contour maps of the posterior means and standard deviations, respectively, of predicted class probabilities. 
% (e)-(h) are analogous plots to (a)-(d), with the data points in Classes $A$ and $B$ relabeled as ``0'' and ``1,'' respectively. 
}
\end{figure}

%To illustrate the distinctions and connections between the sum-, stack-, and SS-softplus regressions, we further compare in Appendix \ref{app:setting} their performance on a %two-dimensional 
%synthetic dataset, where one class resides within in circle while the other class resides in a ring surrounding the circle. As shown in Figs. \ref{fig:circle_1}-\ref{fig:circle_ave_3}, the sum- and stack-softplus regressions have distinct, but complementary, behaviors, which  is the key motivation for us to combine them into SS-softplus regression.
%While both sum- and stack-softplus could work well for this dataset if the two classes are labeled in their preferred ways, as shown in Figs. \ref{fig:circle_1} and \ref{fig:circle_2}, SS-softplus regression, as shown in Fig. \ref{fig:circle_3}, works well regardless of how the data are labeled. %under both labeling settings. 

With a  synthetic dataset, Figs. \ref{fig:circle_1}-\ref{fig:circle_ave_3} illustrate the distinctions and connections between the sum-, stack-, and SS-softplus regressions. While both sum- and stack-softplus could work well for the synthetic dataset if the two classes are labeled in their preferred ways, as shown in Figs. \ref{fig:circle_1} and \ref{fig:circle_2}, SS-softplus regression, as shown in Fig. \ref{fig:circle_3}, works well regardless of how the data are labeled. %under both labeling settings. 
To further illustrate how the distinct, but complementary, behaviors of the sum- and stack-softplus regressions are combined together in SS-softplus regression, let us examine how SS-softplus regression performs on the banana dataset shown in Fig.~\ref{fig:banana}~(a). % to illustrate its geometric constraints. 
When Class $A$ is labeled as ``1,'' as shown in Figs. \ref{fig:banana} (b)-(c), SS-softplus regression infers about six convex-polytope-like confined spaces, the intersection of five of which defines the boundary of the covariate space that separates %the points that satisfy at least one inequality in \eqref{eq:Union_convex_polytope} from
 the points that satisfy at least one inequality in \eqref{eq:Union_convex_polytope}  from the ones that violate all inequalities in \eqref{eq:Union_convex_polytope}. The union of these convex-polytope-like confined spaces defines a confined covariate space, %bounded within which is % inside which a data is labeled as ``1'' with a probability larger than 50\%, and 
 which is included within the covariate space satisfying $P(y_i\given \xv_i)>0.5$, as shown in Fig. \ref{fig:banana} (d). % is bounded within this confined covariate space. 
 
When Class $B$ is labeled as ``1,'' as shown in Fig. \ref{fig:banana} (f)-(g), SS-softplus regression infers about eight convex-polytope-like confined spaces, %the intersections of two of 
three of which define the boundary of the covariate space that separates the points that satisfy at least one inequality in \eqref{eq:Union_convex_polytope}  from the others for the covariate space show in Fig. \ref{fig:banana} (g). The union of four convex-polytope-like confined spaces defines a confined covariate space, which is included in the covariate space with $P(y_i\given \xv_i)>0.5$, as shown in Fig. \ref{fig:banana} (h).
Figs. \ref{fig:banana} (f)-(g) also indicate that except for four convex-polytope-like confined spaces, %the boundaries of 
all the other inferred convex-polytope-like confined spaces are far away from and tend to vote ``No'' for all training data.
%An advantage of a Bayesian hierarchical model is that with collected MCMC samples, one may  not only estimate the posterior means but also quantify posterior uncertainties, % the posterior standard deviations 
 %as in Fig. \ref{fig:banana_ave}. 
 The standard deviations shown in Figs. \ref{fig:banana_ave} (b), (d), (f), and (h) %, which
  indicate the uncertainties of SS-softplus regression on classification decision boundaries and predictive probabilities in the covariate space.
  %, may be used to help decide how to sequentially query the labels of unlabeled data in an active learning setting \citep{cohn1996active,settles2010active}. 

%An advantage of using a Bayesian hierarchical model is that with collected MCMC samples, one may estimate not only the posterior means but also the posterior standard deviations of interested model outputs, as shown in Fig. \ref{fig:circle_ave_1}. 
%The standard deviations shown in Figs. \ref{fig:banana_ave} (b), (d), (f), and (h) clearly indicate the uncertainties of SS-softplus regression on %binary 
%classification decision boundaries and predictive probabilities. % in the covariate space, 
%may help sequentially query unlabeled data in an active learning setting. %\citep{cohn1996active,settles2010active}. 

%The distinct, yet complementary, behaviors of sum- and stack-softplus regressions is the key motivation for us to combine them. % into SS-softplus regression.
%While both sum- and stack-softplus could work well for this dataset if the two classes are labeled in their preferred ways, SS-softplus regression, as shown in Fig. \ref{fig:circle_3}, works well regardless of how the data are labeled. %under both labeling settings. 

% in Appendix \ref{app:setting}, 
In Figs. \ref{fig:xor}-\ref{fig:dbmoon_ave} of Appendix \ref{app:setting}, we  further illustrate SS-softplus regression   %\ref{fig:xor_ave} 
%its performance 
on an exclusive-or (XOR) dataset %, and show in Figs. \ref{fig:dbmoon}-\ref{fig:dbmoon_ave} its performance on 
and a double-moon dataset used in \citet{haykin2009neural}. %, and show in Figs. \ref{fig:banana}-\ref{fig:banana_ave} its performance on the banada dataset that will be further described in Section \ref{sec:results}. %experimental results sect . 
For the banana, XOR, and double-moon datasets, where the two classes cannot be well separated by a single convex-polytope-like confined space, %both of which are suitable for 
neither sum- nor stack-softplus regressions work well regardless of how the data are labeled,  whereas SS-softplus regression infers the union of multiple convex-polytope-like confined spaces that successfully separates the two classes.

\subsection{Classification performance on benchmark data}

\begin{table}[t!] 
\footnotesize
\caption{\small Comparison of classification errors of logistic regression (LR), RBF kernel support vector machine (SVM), relevance vector machine (RVM), adaptive multi-hyperplane machine (AMM), convex polytope machine (CPM), softplus regression, sum-softplus (sum-$\varsigma$) regression with $K_{\max}=20$, stack-softplus (stack-$\varsigma$) regression with $T=5$, and SS-softplus (SS-$\varsigma$) regression with $K_{\max}=20$ and $T=5$. 
Displayed in each column of the last row is the average of the classification errors of an algorithm normalized by
those of kernel SVM.
%\vspace{1mm}
}\label{tab:Error} 
\centering 
\begin{tabular}{c|ccccccccc} 
\toprule 
 Dataset & LR & SVM & RVM & AMM & CPM & \!softplus &\! sum-$\varsigma$\! & \!\!stack-$\varsigma$ ($T$=5)\! & \!\!SS-$\varsigma$ ($T$=5) \\ 
\midrule 
\footnotesize 
banana & $47.76$ & $\mathbf{10.85}$ & $11.08$ & $18.76$ & $21.39$ & $47.87$ & $30.78$ & $33.21$ & $11.89$ \\ 
 & $\pm 4.38$ & $\pm 0.57$ & $\pm 0.69$ & $\pm 4.09$ & $\pm 1.72$ & $\pm 4.36$ & $\pm 8.68$ & $\pm 5.76$ & $\pm 0.61$ \\ 
\midrule 
 breast  & $28.05$ & $28.44$ & $31.56$ & $31.82$ & $32.08$ & $28.70$ & $30.13$ & $\mathbf{27.92}$ & $28.83$ \\ 
cancer & $\pm 3.68$ & $\pm 4.52$ & $\pm 4.66$ & $\pm 4.47$ & $\pm 4.29$ & $\pm 4.76$ & $\pm 4.23$ & $\pm 3.31$ & $\pm 3.40$ \\ 
\midrule 
titanic & $22.67$ & $22.33$ & $23.20$ & $28.85$ & $22.37$ & $22.53$ & $22.48$ & $22.71$ & $\mathbf{22.29}$ \\ 
 & $\pm 0.98$ & $\pm 0.63$ & $\pm 1.08$ & $\pm 8.56$ & $\pm 0.45$ & $\pm 0.43$ & $\pm 0.25$ & $\pm 0.70$ & $\pm 0.80$ \\
\midrule 
waveform & $13.33$ & $\mathbf{10.73}$ & $11.16$ & $11.81$ & $12.76$ & $13.62$ & $11.51$ & $12.25$ & $11.69$ \\ 
 & $\pm 0.59$ & $\pm 0.86$ & $\pm 0.72$ & $\pm 1.13$ & $\pm 1.17$ & $\pm 0.71$ & $\pm 0.65$ & $\pm 0.69$ & $\pm 0.69$ \\
\midrule 
german & $23.63$ & $23.30$ & $23.67$ & $25.13$ & $25.03$ & $24.07$ & $23.60$ & $\mathbf{22.97}$ & $24.23$ \\ 
 & $\pm 1.70$ & $\pm 2.51$ & $\pm 2.28$ & $\pm 3.73$ & $\pm 2.49$ & $\pm 2.11$ & $\pm 2.39$ & $\pm 2.22$ & $\pm 2.46$ \\
\midrule 
image & $17.53$ & $2.84$ & $3.82$ & $3.82$ & $3.25$ & $17.55$ & $3.50$ & $7.97$ & $\mathbf{2.73}$ \\ 
 & $\pm 1.05$ & $\pm 0.52$ & $\pm 0.59$ & $\pm 0.87$ & $\pm 0.41$ & $\pm 0.75$ & $\pm 0.73$ & $\pm 0.52$ & $\pm 0.53$ \\
\bottomrule
\begin{tabular}{@{}c@{}}\scriptsize Mean of SVM\\ \scriptsize normalized errors\end{tabular} & $2.472$ & $1$ & $1.095$ & $1.277$ & $1.251$ & $2.485$ & $1.370$ & $1.665$ & $1.033$ \\ 
\end{tabular}

\footnotesize 
\centering 
\setlength{\tabcolsep}{5.5pt} 
\caption{\small Analogous table to Tab. \ref{tab:Error} for comparing the number of experts (times the number of hyperplanes per expert), % used by various algorithms shown in Table \ref{tab:Error}, 
where an expert contains $T$ hyperplanes for both stack- and SS-softplus regressions and contains a single hyperplane/support vector for all the others. The computational complexity for out-of-sample prediction is about linear in the number of hyperplanes/support vectors. 
Displayed in each column of the last row is the average of the number of experts (times the number of hyperplanes per expert) of an algorithm normalized by
those of RBF kernel SVM.
 %\vspace{1mm}
 } 
\label{tab:K} 
\begin{tabular}{c|ccccccccc} 
\toprule 
 Dataset & LR & SVM & RVM & AMM & CPM & softplus & sum-$\varsigma$ & stack-$\varsigma$\! ($T$=5) & SS-$\varsigma$\! ($T$=5) \\ 
\midrule 
 
banana & $1$ & $129.20$ & $22.30$ & $9.50$ & $14.60$ & $2$ & $3.70$ & $2~(\times 5)$ & $7.60~(\times 5)$ \\ 
 & & $\pm 32.76$ & $\pm 26.02$ & $\pm 2.80$ & $\pm 7.49$ & & $\pm 0.95$ & & $\pm 1.17~(\times 5)$ \\ 
\midrule 
breast & $1$ & $115.10$ & $24.80$ & $13.40$ & $12.00$ & $2$ & $3.10$ & $2~(\times 5)$ & $6.40~(\times 5)$ \\ 
 cancer & & $\pm 11.16$ & $\pm 28.32$ & $\pm 0.84$ & $\pm 8.43$ & & $\pm 0.74$ & & $\pm 1.43~(\times 5)$ \\
\midrule 
titanic & $1$ & $83.40$ & $5.10$ & $14.90$ & $5.20$ & $2$ & $2.30$ & $2~(\times 5)$ & $4.00~(\times 5)$ \\ 
 & & $\pm 13.28$ & $\pm 3.03$ & $\pm 3.14$ & $\pm 2.53$ & & $\pm 0.48$ & & $\pm 0.94~(\times 5)$ \\ 
\midrule 
waveform & $1$ & $147.00$ & $21.10$ & $9.50$ & $6.40$ & $2$ & $4.40$ & $2~(\times 5)$ & $8.90~(\times 5)$ \\ 
 & & $\pm 38.49$ & $\pm 10.98$ & $\pm 1.18$ & $\pm 2.27$ & & $\pm 0.84$ & & $\pm 2.33~(\times 5)$ \\
\midrule 
german & $1$ & $423.60$ & $11.00$ & $18.80$ & $8.80$ & $2$ & $6.70$ & $2~(\times 5)$ & $14.70~(\times 5)$ \\ 
 & & $\pm 55.02$ & $\pm 3.20$ & $\pm 1.81$ & $\pm 7.79$ & & $\pm 0.95$ & & $\pm 1.77~(\times 5)$ \\ 
\midrule 
image & $1$ & $211.60$ & $35.80$ & $10.50$ & $23.00$ & $2$ & $11.20$ & $2~(\times 5)$ & $17.60~(\times 5)$ \\ 
 & & $\pm 47.51$ & $\pm 9.19$ & $\pm 1.08$ & $\pm 6.75$ & & $\pm 1.32$ & & $\pm 1.90~(\times 5)$ \\ 
\bottomrule 
\begin{tabular}{@{}c@{}}\scriptsize Mean of SVM\\ \scriptsize normalized $K$\end{tabular} & $0.007$ & $1$ & $0.131$ & $0.088$ & $0.075$ & $0.014 $ & $0.030$ & $0.014~(\times 5)$ & $0.057~(\times 5)$ \\ 
\end{tabular}

\end{table}

We summarize in Tab. \ref{tab:Error} the results for the first six benchmark datasets described in Tab.~\ref{tab:data}, for each of which we report the results based on the first ten predefined random training/testing partitions. Overall for these six datasets, the RBF kernel SVM has the highest average out-of-sample prediction accuracy, followed closely by SS-softplus regression, whose mean of the errors normalized by these of of the SVM is as small as 1.033, and then by RVM, whose mean of normalized errors is 1.095. Overall, logistic regression does not perform well, which is not surprising as it is a linear classifier that uses a single hyperplane to partition the covariate space into two halves to separate one class from the other. Softplus regression, which uses an additional parameter over logistic regression, % and trains the model twice under two opposite labeling settings,
 fails to reduce the classification errors of logistic regression; both sum-softplus regression, a multi-hyperplane generalization using the convolution operation, and stack-softplus regression, a multi-hyperplane generalization using the stacking operation, clearly reduce the classification errors;  and SS-regression that combines both the convolution and stacking operations further improves the overall performance. Both CPM and AMM  perform similarly to sum-softplus regression, which is not surprising given their  connections   %that is related to them, as 
discussed in Appendix \ref{sec:CPM}. %, also has a clear performance gain over logistic regression. 

For out-of-sample prediction, the computation of a classification algorithm generally increases linearly in the number of used hyperplanes or support vectors. We summarize the number of experts (times the number of hyperplanes  per expert if that number is not one) in Tab. \ref{tab:K}, which indicates that in comparison to SVM that consistently requires the most number of experts (each expert corresponds to a support vector for SVM), the RVM, AMM, CPM, and the three proposed multi-hyperplane softplus regressions all require significantly less time for predicting the class label of a new data sample. It is also interesting to notice that the number of hyperplanes automatically inferred from the data by sum-softplus regression is %is only about 30\% and 40\% of
generally smaller than
 the ones of AMM and CPM, both of which are selected through cross validations. Note that the number of active experts, defined as the value of $\sum_{k}\delta(\sum_i m^{(1)}_{ik}  >0)$, inferred by both sum- and SS-softplus regressions shown in Tab. \ref{tab:K} will be further reduced if we only take into consideration the experts whose weights are larger than a certain threshold, such as those with $r_k>0.001$ for $k\in\{1,\ldots,K_{\max}\}$. 

Except for banana, a two-dimensional dataset, sum-softplus regression performs similarly to both AMM and CPM; and except for banana and image, stack-softplus regression performs similarly to both AMM and CPM. These results are not surprising as CPM, closely related to AMM, uses a convex polytope, defined as the intersection of multiple hyperplanes, to enclose one class, whereas the classification decision boundaries of sum-softplus regression, defined by the interactions of multiple hyperplanes via the sum-softplus function, can be bounded within a convex polytope that encloses negative examples, and that of stack-softplus regression can be related to a convex-polytope-like confined space that encloses positive examples. Note that while both sum- and stack-softplus regressions can partially remedy their sensitivity to how the data are labeled by combining the results obtained under two opposite labeling settings, %as described in \eqref{eq:fusing}, 
the decision boundaries of them and those of both AMM and CPM  are still restricted to a confined space related to a single convex polytope, which may be used to explain why on both banana and image, as  well as on the XOR and double-moon datasets shown in Appendix \ref{app:setting}, %sum- and stack-softplus regressions, AMM, and CPM 
they all clearly underperform SS-softplus regression, which separates two classes using the union of  convex-polytope-like confined spaces. %SS-regression, on the other hand, does not suffer from that restriction. 

For breast cancer, titanic, and german, all classifiers have comparable classification errors, suggesting minor or no advantages of using a  nonlinear classifier on them. For these three datasets, it is interesting to notice that, as shown in Figs. \ref{fig:rk}-\ref{fig:SS-softplus_rk}, sum- and SS-softplus regressions infer no more than two and three experts, respectively, with non-negligible weights under both labeling settings. 
These interesting connections imply that for two linearly separable classes,  while providing no obvious benefits but also no clear harms, both sum- and SS-softplus regressions tend to infer a few active experts, and both stack- and SS-softplus regressions exhibit no clear sign of overfitting as  the number of expert criteria $T$ increases. 

Whereas for banana, waveform, and image, all nonlinear  classifiers clearly outperform logistic regression, and as shown in Figs. \ref{fig:rk}-\ref{fig:SS-softplus_rk}, sum- and SS-softplus regressions infer at least two and four experts, respectively, with non-negligible weights under at least one of the two  labeling settings.  These interesting connections imply that  for two classes not linearly separable, both sum- and SS-softplus regressions may significantly outperform logistic regression by inferring a sufficiently large number of active experts, and both stack- and SS-softplus regressions may significantly outperform logistic regression by setting the number of expert criteria  as $T\ge 2$,  exhibiting no clear sign of overfitting as $T$ further increases. 

%These two  implications can be further confirmed by examining how the testing errors tend to reduce and then fluctuate as $T$ increases from $1$ to $10$ for both stack-softplus regression, shown in Tab. \ref{table:4}, and SS-softplus regression, shown in  Tab. \ref{tab:5}, and confirmed by  examining how  %the testing errors  reduce and then fluctuate and how
% the number of inferred experts tends to increase as $T$ increases from $1$ to $10$ for SS-softplus regression, shown in  Tab. \ref{table:6}.

 \begin{figure}[!t]
\begin{center}
 \includegraphics[width=0.77\columnwidth]{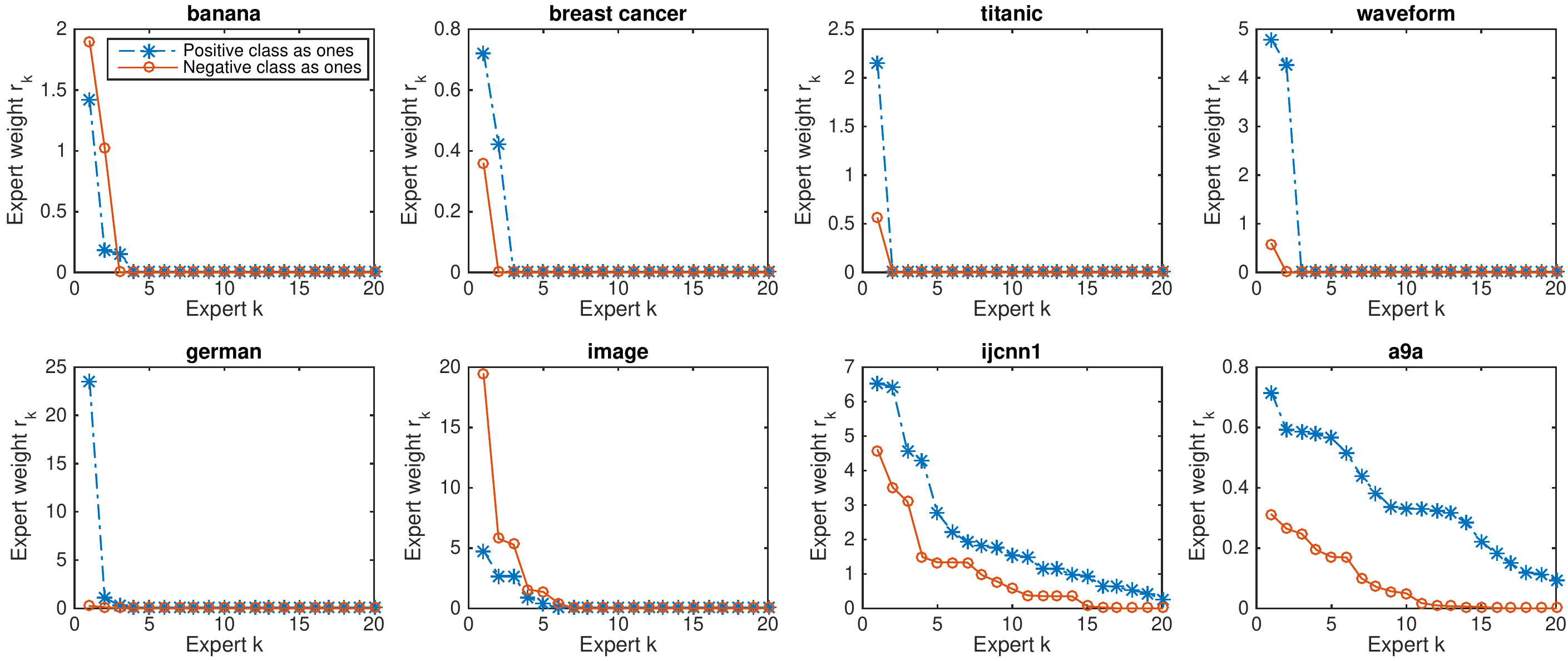}
\end{center}
\vspace{-7.9mm}
\caption{\small\label{fig:rk}
The inferred weights of the $K_{\max}=20$ experts of sum-softplus regression, ordered from left to right according to their weights, on all datasets shown in Tab. \ref{tab:data}, based on the maximum likelihood sample of a single random trial.
}
%\end{figure}

% \begin{figure}[!t]
\begin{center}
 \includegraphics[width=0.77\columnwidth]{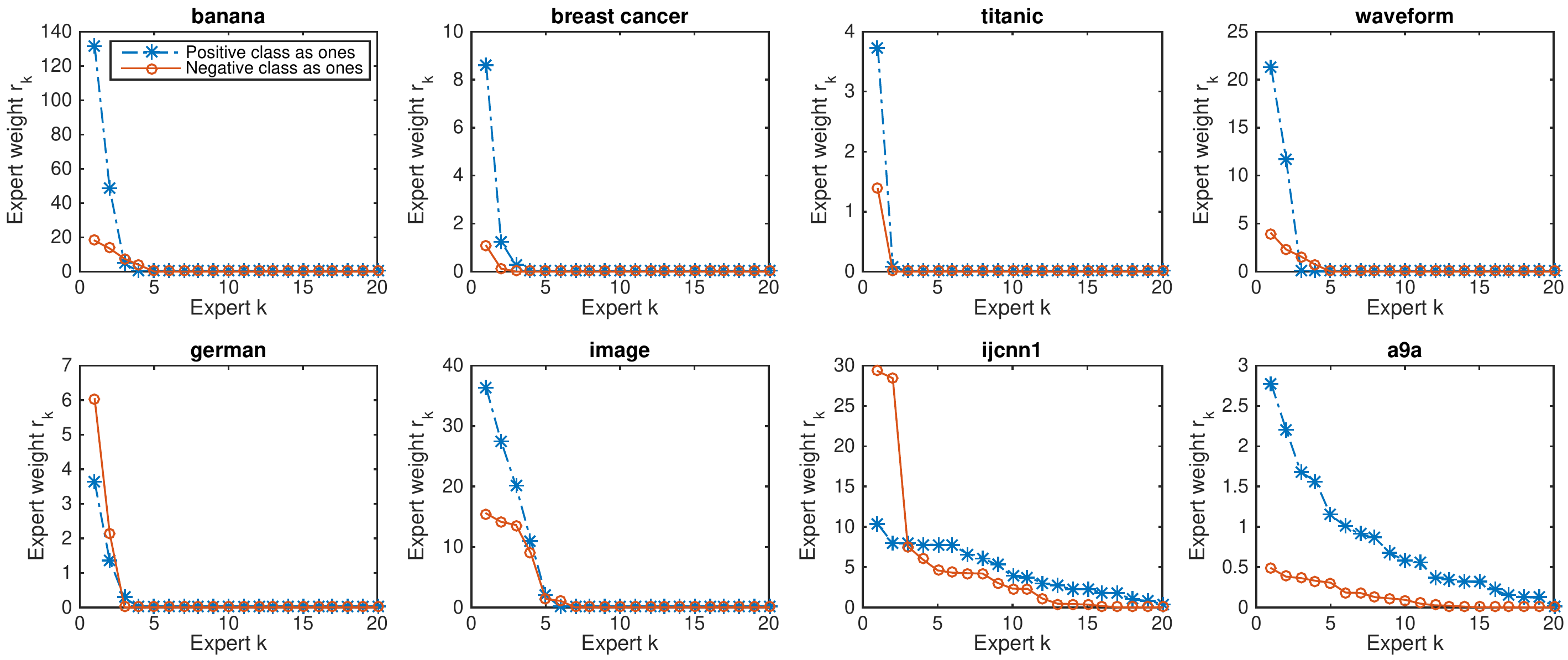}
\end{center}
\vspace{-7.9mm}
\caption{\small\label{fig:SS-softplus_rk}
Analogous to % plots to those in
 Fig. \ref{fig:rk} for SS-softplus regression with $K_{\max}=20$ and $T=5$. }
 \vspace{-3mm}
\end{figure}

 For both stack- and SS-softplus regressions, the computational complexity in both training and out-of-sample prediction increases linearly in $T$, the depth of the stack. To understand how increasing $T$ affects the performance, we show in Tabs. \ref{table:4}-\ref{tab:5} of Appendix \ref{app:setting} the classification errors of stack- and SS-softplus regressions, respectively, for $T\in\{1,2,3,5,10\}$. It is clear that increasing $T$ from 1 to 2 generally leads to the most significant improvement if there is a clear advantage of increasing $T$, and once $T$ is sufficiently large, further increasing $T$ leads to small fluctuations of the performance %, and making $T$ as large as $T=10$ 
 but
 does not appear to lead to clear overfitting. It is also interesting to examine the number of active experts inferred by SS-softplus regression, where each expert is equipped with $T$ hyperplanes, as $T$ increases. As shown in Tab. \ref{table:6} of Appendix \ref{app:setting}, this number has a clear increasing trend as $T$ increases. This is not surprising as each expert is able to fit more complex geometric structure as $T$ increases, and hence SS-softplus regression can employ more of them to more detailedly describe the decision boundaries. This phenomenon is also clearly visualized in comparing the inferred experts and decision boundaries for SS-softplus regression, as shown in Fig. \ref{fig:circle_3}, with those for sum-softplus regression, as shown in Fig. \ref{fig:circle_1}.
 
 In addition to comparing softplus regressions with related algorithms on the six benchmark datasets used in \citet{RVM}, we also consider ijcnn1 and a9a, two larger-scale benchmark datasets that have also been used in \citet{chang2010training,wang2011trading} and \citet{kantchelian2014large}. In Appendix \ref{app:setting}, we report results on both datasets, whose training/testing partition is predefined, based on a single random trial for logistic regression, SVM, and RVM, and five independent random trials for AMM, CPM, and all softplus regressions. As shown in Tabs. \ref{tab:Error1}-\ref{table:10} of Appendix \ref{app:setting}, we observe similar relationships between the classification errors and the number of expert criteria for both stack- and SS-softplus regressions, and both sum- and SS-softplus regressions provide a good comprise between the classification accuracies and amount of computation required for out-of-sample predictions.
 %We do not observe clear signs of overfitting 
 
 % of both stack- and SS-softplus regressions, we 

% given its covariates. 

 %provides an overall performance that is 
%performs almost as bad as logistic regression does. 

%From the table it is clear that 
%As shown in Table \ref{tab:Error}, 
%SS-softplus regression performs similarity to 

As shown in Figs. \ref{fig:rk}-\ref{fig:SS-softplus_rk}, with the upper-bound of the number of experts set as $K_{\max}=20$, for each of the first six datasets, both sum- and SS-softplus regressions shrink the weights of most of the 20 experts to be close to zero, clearly inferring the number of experts with non-negligible weights under both labeling settings. For both ijcnn1 and a9a, at one of the two labeling setting for both sum- and SS-softplus regressions, $K_{\max}=20$ does not seem to be large enough to accommodate all experts with non-negligible weights. Thus we have also tried setting $K_{\max}=50$, which is found to more clearly show the ability of the model to shrink the weights of unnecessary experts for both ijcnn1 and a9a, % and slightly improve the classification accuracies, 
but at the expense of clearly increased computational complexity in both training and testing. The automatic shrinkage mechanism of the gamma process based sum- and SS-softplus regressions is attractive for both computation and implementation, as it allows setting $K_{\max}$ as large as permitted by the computational budget, without the need to worry about overfitting. Having the ability to support countably infinite %differently-weighted
 experts in the prior and inferring a finite number of experts with non-negligible weights in the posterior is an attractive property of the proposed nonparametric Bayesian softplus regression models. 

We comment that while we choose a fixed truncation to approximate a countably infinite nonparametric Bayesian model, it is possible to %construct more sophisticated algorithms that 
adaptive truncate the number of experts for the proposed gamma process based models, %nonparametric Bayesian models, 
using strategies such as marginalizing out the underlying stochastic processes \citep{HDP,lijoi2007controlling}, performing reversible-jump MCMC \citep{green1995reversible,Wolp:Clyd:Tu:2011}, and using slice sampling \citep{walker2007sampling,neal2003slice}, which would be interesting topics for future research.

\vspace{-4mm}
\section{Conclusions}\label{sec:conclusion}
\vspace{-2mm}
To regress a binary response variable on its covariates, we propose sum-, stack-, and sum-stack-softplus regressions that use, respectively, %a single hyperplane to separate two classes, 
a convex-polytope-bounded confined space to enclose the negative class, a convex-polytope-like confined space to enclose the positive class, and a union of convex-polytope-like confined spaces to enclose the positive class. Sum-stack-softplus regression,  including logistic regression and all the other  softplus regressions as special examples, %exploits both the convolution and stacking operations to
 constructs a highly flexible nonparametric Bayesian predictive distribution %,.  where the weights of the countably infinite stacked gamma distributions used in convolution are inferred from the data. 
%This predictive distribution is derived 
by %marginalizing out the latent variables of a nonparametric Bayesian hierarchical model that mixes
mixing the convolved and stacked covariate-dependent gamma distributions with %covariate-dependent gamma belief networks with 
the Bernoulli-Poisson distribution. 
The predictive distribution is deconvolved and demixed by inferring the parameters of the underlying nonparametric Bayesian hierarchical model using  a series of data augmentation and marginalization techniques. 
%The conditional posteriors for all model parameters can be represented in closed form, using several recently proposed data augmentation and marginalization techniques, and 
In the proposed Gibbs sampler that has closed-form update equations, 
the parameters of different stacked gamma distributions can be updated in parallel within each iteration. 
Example results demonstrate that the proposed softplus regressions can achieve classification accuracies comparable to those of kernel support vector machine, but consume significant less computation for out-of-sample predictions, provide probability estimates, quantify uncertainties, and place interpretable geometric constraints on its classification decision boundaries directly in the original covariate space. 
It is of great interest to investigate how to generalize the proposed softplus regressions to model % various types of response variables, including 
count, categorical, ordinal, and continuous response variables, and to model observed or latent multivariate discrete vectors. For example, to introduce covariate-dependence into a stick-breaking process mixture model \citep{ishwaran2001gibbs}, one may consider replacing the normal cumulative distribution function used in the probit stick-breaking process of \citet{PSBP_Dunson} or the logistic function used in the logistic stick-breaking process of \citet{Lu_LSBP} with the proposed softplus functions. 

\small
\begin{spacing}{1.125} %225}
\setlength{\bibsep}{1pt plus 0ex}
\bibliographystyle{abbrvnat}
\bibliography{References052016}

\normalsize

\end{spacing}
\newpage
\appendix
\normalsize

\begin{center}
%\textbf{\LARGE{Sum Stack Softplus Regression \vspace{3mm}\\ and Union of Convex Polytopes: Appendix}}
{\LARGE{Softplus Regressions and Convex Polytopes:
\vspace{3mm}\\ Supplementary Materials}}

\end{center}

\section{Polya-Gamma distribution}

To infer the regression coefficient vector for each NB regression, we use the Polya-Gamma random variable $X\sim \mbox{PG}(a,c)$, defined in \citet{LogitPolyGamma} as the weighted sum of infinite independent, and identically distributed ($i.i.d.$) gamma random variables as %has a Polya-Gamma distribution if
\beq
X = \frac{1}{2\pi^2}\sum_{k=1}^\infty \frac{g_k}{(k-1/2)^2+c^2/(4\pi^2)}, ~~g_k\sim\mbox{Gamma}(a,1). \label{eq:PGdefine}
\eeq
As in \citet{polson2013bayesian}, the moment generating function of the Polya-Gamma random variable $X\sim\mbox{PG}(a,c)$ can be expressed as
\beq
\E[e^{sX}] = {\cosh^a\left(\frac{c}{2}\right)}{\cosh^{-a}\left(\sqrt{\frac{c^2/2-s}{2}}\right)}. \notag
\eeq
%\subsection{Mean and variance}
Let us denote $f(s) = \sqrt{\frac{c^2/2-s}{2}}$ and hence $f'(s) = -1/[4f(s)]$. Since
$$
\frac{d\E[e^{sX}] }{ds} = \frac{a}{4}\E[e^{sX}]  \frac{\tanh[f(s)]}{f(s)}, %\text{ and }
$$
$$
\frac{d^2\E[e^{sX}] }{ds^2} %&= \frac{a}{4}\frac{d\E[e^{sX}] }{ds} \frac{\tanh[f(s)]}{f(s)} + \frac{a}{4}\E[e^{sX}] \cosh^{-2}[f(s)]\frac{\sinh[2f(s)]-2f[s]}{[2f(s)]^3},
%\notag\\
= \frac{1}{\E[e^{sX}]}  \left(\frac{d\E[e^{sX}] }{ds}\right)^2 +
 \frac{a}{4}\E[e^{sX}] \cosh^{-2}[f(s)]\frac{\sinh[2f(s)]-2f[s]}{[2f(s)]^3} ,\notag
$$
the mean  can be expressed as
\beq
\E[X\given a, c] = \frac{ d\E[e^{sX}] }{ds}(0) = \frac{a}{2|c|}\tanh\left(\frac{|c|}{2}\right), % = \frac{a}{|c|}\left(\frac{1}{1+e^{-|c|}}-\frac{1}{2}\right),
\label{eq:PGmean}
\eeq
and the variance   can be expressed as
\begin{align} \label{eq:varPG}
\mbox{var}[X\given a, c] &= \frac{ d^2\E[e^{sX}] }{ds^2}(0)  - (\E[X])^2=\frac{a\cosh^{-2}\big({|c|}/{2}\big) {[\sinh(|c|)-|c|]}}{4|c|^3}=\frac{a}{2|c|^3} \frac{{\sinh(|c|)-|c|}}{\cosh(|c|)+1} \notag\\
&= \frac{a}{2|c|^3} \frac{1-e^{-2|c|}-2|c|e^{-|c|}}{1+e^{-2|c|}+2e^{-|c|}} = \frac{a\cosh^{-2}\left(\frac{|c|}{2}\right) }{4} 
 \left(\frac{1}{6} + \sum_{n=1}^\infty \frac{|c|^{2n}}{(2n+3)!}\right),
%\sum_{n=1}^\infty \frac{|c|^{2n-2}}{(2n+1)!}.
%& = \left(\frac{1}{|c|^{2}} - \frac{1}{|c|\sinh(|c|)}\right)\E[X\given a, c].
\end{align}
which matches the variance shown in \citet{glynn2015bayesian} but with a much simpler derivation.

 \begin{figure}[!t]
\begin{center}
\includegraphics[width=0.66\columnwidth]{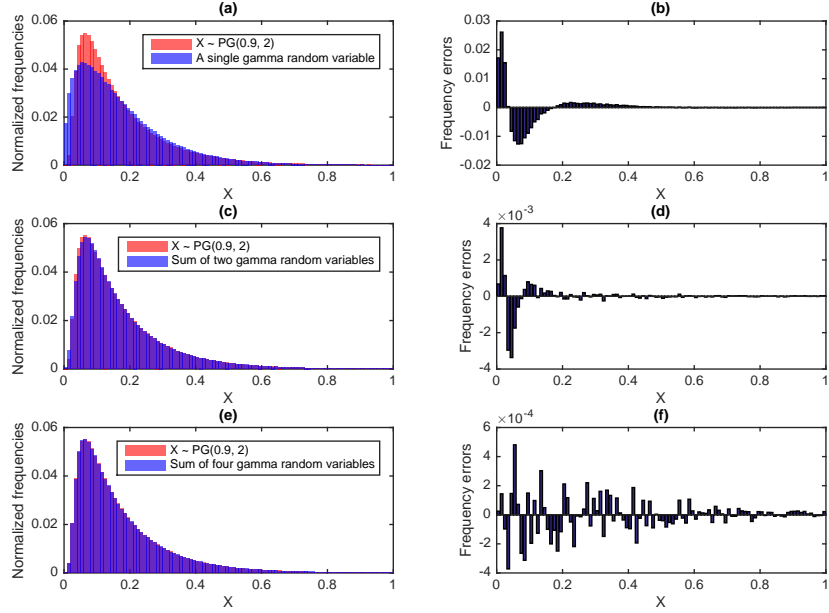}
\vspace{-.3mm}
\end{center}
\vspace{-5.9mm}
\caption{\small\label{fig:PG_Gamma}
(a) Comparison of the normalized histogram of  $10^6$ independent random samples following $X\sim\mbox{PG}(0.9,2)$ (simulated using the truncated sampler at a truncation level of 1000)   and that of $10^6$ ones %independent random samples 
simulated with the approximate sampler truncated at one, $i.e.$, simulated from
 $X\sim\mbox{Gamma}\left( {\mu_{\triangle}^2}/{\sigma^2_{\triangle}}, {\sigma^2_{\triangle}}/{\mu_{\triangle}}\right)$, where $\mu_{\triangle}=0.1714$ and $\sigma^2_{\triangle}=0.0192$ are chosen to match the mean and variance of $X\sim\mbox{PG}(0.9,2)$;
 (b) the differences between the normalized frequencies of these two histograms.
 (c)-(d): analogous plots to (a)-(b), %with the difference in that 
 with the truncated sampler truncated at two.
 (e)-(f): analogous plots to (a)-(b), with %the difference in that 
 the truncated sampler truncated at four. 
 }
\end{figure}
%\vspace{-3mm}
%\subsection{Approximate samplers and properties}
As in \eqref{eq:PGdefine}, a $\mbox{PG}$ distributed random variable can be generated from an infinite sum of weighted $i.i.d.$ gamma random variables. 
%A conventional sampling method is to approximate $X$ with $\hat{X}$ that truncates the infinite sum at a large number $K$ 
%using \eqref{eq:Xhat}, 
%as
%\beq
%\hat{X} = \frac{1}{2\pi^2}\sum_{k=1}^K \frac{g_k}{(k-1/2)^2+c^2/(4\pi^2)}
%\eeq
%which is guaranteed to be left biased and is hard to determine an appropriate size of $K$.
In \citet{polson2013bayesian}, when $a$ is an integer, $X\sim \mbox{PG}(a,c)$ is sampled exactly using a rejection sampler, but when $a$ is a positive real number, it is sampled approximately by truncating the infinite sum in \eqref{eq:PGdefine}. However, the mean of $X$ approximately generated in this manner is guaranteed to be left biased. An improved approximate sampler is proposed in \citet{LGNB_ICML2012} to compensate the bias of the mean, but not the bias of the variance. 

We present in Proposition \ref{prop:PGtruncate} %provide in Appendix \ref{app:samplePG} 
an approximate sampler that is unbiased in both the mean and variance, using the summation of a finite number of gamma random variables. As shown in Fig. \ref{fig:PG_Gamma}, % of Appendix \ref{app:samplePG}, 
the approximate sampler is quite accurate even only using two %the number of
 gamma random variables. % used in the summation is as small as two.  
We also provide some additional propositions, whose proofs are deferred to Appendix~\ref{sec:proof}, to describe %both $X$ and $|c|X$ as $|c|\rightarrow \infty$, which will be used in Gibbs sampling, 
some important properties that will be used in inference.

\begin{prop}\label{prop:PGtruncate}
Denoting $K\in\{1,2,\ldots\}$ as a truncation level, if
\beq\label{eq:Xhat}
\hat{X} = \frac{1}{2\pi^2}\sum_{k=1}^{K-1} \frac{g_k}{(k-1/2)^2+c^2/(4\pi^2)},~~g_k\sim\emph{\mbox{Gamma}}(a,1)
\eeq
and
$ %\beq\label{eq:Xtria}
X_{\triangle}\sim\emph{\mbox{Gamma}}\left( {\mu_{\triangle}^2}{\sigma^{-2}_{\triangle}}, {\mu^{-1}_{\triangle}}{\sigma^2_{\triangle}}\right),
$ %\eeq
where
\beqs
\displaystyle\mu_{\triangle} = %\E[X] - E[\hat{X}] =
\frac{a}{2|c|}\tanh\left(\frac{|c|}{2}\right) -\E[\hat{X}],~~~
% \frac{1}{2\pi^2}\sum_{k=1}^K \frac{a}{(k-1/2)^2+c^2/(4\pi^2)}
%and 
\displaystyle\sigma^2_{\triangle} = %\mbox{var}[X]- \mbox{var}[\hat{X}] =
 \frac{a}{2|c|^3} \frac{{\sinh(|c|)-|c|}}{\cosh(|c|)+1} - \emph{\mbox{var}}[\hat{X}], \notag
%\frac{1}{4\pi^4}\sum_{k=1}^K \frac{a}{\left[(k-1/2)^2+c^2/(4\pi^2)\right]^2}.
\eeqs
then 
$\hat{X}+X_{\triangle}$ has the same mean and variance as those of $X\sim\emph{\mbox{PG}}(a,c)$, and 
the difference between their cumulant generating functions % (CGF) of $X\sim\emph{\mbox{PG}}(a,c)$ and that of 
%$\hat{X}+X_{\triangle}$ and 
can be expressed as
\beq
\ln \E[e^{sX}] - \ln \E[e^{s(\hat{X}+X_{\triangle})}] = \sum_{n=3}^\infty \frac{s^n }{n} \left[ \left( a\sum_{k=K}^\infty d_k^{-n}\right)- \frac{\mu_{\triangle}^2}{\sigma^2_{\triangle}}
 \left(\frac{\sigma^2_{\triangle}}{\mu_{\triangle}}\right)^{n} \right],\notag
\eeq
where
$d_k=2\pi^2(k-1/2)^2+c^2/2.$
\end{prop}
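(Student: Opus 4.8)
The plan is to carry out the entire argument at the level of the cumulant generating function (CGF) $\ln\E[e^{sX}]$, exploiting the fact that, by \eqref{eq:PGdefine}, $X\sim\mbox{PG}(a,c)$ is a sum of independent scaled gamma variables. Setting $d_k=2\pi^2(k-1/2)^2+c^2/2$, the $k$th summand of \eqref{eq:PGdefine} is exactly $g_k/d_k\sim\mbox{Gamma}(a,1/d_k)$, whose CGF is $-a\ln(1-s/d_k)$. By independence I would therefore record the three closed forms $\ln\E[e^{sX}]=-a\sum_{k=1}^\infty\ln(1-s/d_k)$, $\ln\E[e^{s\hat X}]=-a\sum_{k=1}^{K-1}\ln(1-s/d_k)$ (from \eqref{eq:Xhat}), and $\ln\E[e^{sX_{\triangle}}]=-(\mu_{\triangle}^2/\sigma_{\triangle}^2)\ln(1-(\sigma_{\triangle}^2/\mu_{\triangle})s)$. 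Adding the last two and subtracting from the first collapses the common prefix, leaving $\ln\E[e^{sX}]-\ln\E[e^{s(\hat X+X_{\triangle})}]=-a\sum_{k=K}^\infty\ln(1-s/d_k)+(\mu_{\triangle}^2/\sigma_{\triangle}^2)\ln(1-(\sigma_{\triangle}^2/\mu_{\triangle})s)$.

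Next I would pin down the tail residuals. Reading off the first two cumulants of a gamma, the mean and variance of $X$ from the sum representation are $a\sum_{k=1}^\infty d_k^{-1}$ and $a\sum_{k=1}^\infty d_k^{-2}$, which by \eqref{eq:PGmean} and \eqref{eq:varPG} equal the stated closed forms $\frac{a}{2|c|}\tanh(|c|/2)$ and $\frac{a}{2|c|^3}\frac{\sinh(|c|)-|c|}{\cosh(|c|)+1}$. Since $\E[\hat X]=a\sum_{k=1}^{K-1}d_k^{-1}$ and $\mbox{var}[\hat X]=a\sum_{k=1}^{K-1}d_k^{-2}$ are the corresponding finite partial sums, the definitions of $\mu_{\triangle}$ and $\sigma_{\triangle}^2$ give precisely the tails $\mu_{\triangle}=a\sum_{k=K}^\infty d_k^{-1}$ and $\sigma_{\triangle}^2=a\sum_{k=K}^\infty d_k^{-2}$, both strictly positive, so $X_{\triangle}$ is a genuine gamma variable. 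The mean/variance matching is then immediate from independence: $\E[\hat X+X_{\triangle}]=\E[\hat X]+\mu_{\triangle}=\E[X]$ and $\mbox{var}[\hat X+X_{\triangle}]=\mbox{var}[\hat X]+\sigma_{\triangle}^2=\mbox{var}[X]$.

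For the CGF identity I would expand both logarithms using $-\ln(1-u)=\sum_{n=1}^\infty u^n/n$, obtaining $a\sum_{n=1}^\infty\frac{s^n}{n}\sum_{k=K}^\infty d_k^{-n}$ from the tail term and $-\sum_{n=1}^\infty\frac{s^n}{n}\frac{\mu_{\triangle}^2}{\sigma_{\triangle}^2}(\sigma_{\triangle}^2/\mu_{\triangle})^n$ from $X_{\triangle}$. Collecting the coefficient of $s^n/n$ yields $a\sum_{k=K}^\infty d_k^{-n}-\frac{\mu_{\triangle}^2}{\sigma_{\triangle}^2}(\sigma_{\triangle}^2/\mu_{\triangle})^n$, which is exactly the bracket in the claimed formula. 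The $n=1$ coefficient is $a\sum_{k=K}^\infty d_k^{-1}-\mu_{\triangle}=0$ and the $n=2$ coefficient is $a\sum_{k=K}^\infty d_k^{-2}-\sigma_{\triangle}^2=0$, by the tail identities just derived; hence the series starts at $n=3$, giving the statement.

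The main technical point to handle carefully is the interchange of the double summation over $k$ and the Taylor index $n$, together with the region of $s$ in which every series is valid. For $s>0$ all summands of the $-\ln(1-s/d_k)$ expansion are nonnegative (as $d_k>0$), so Tonelli's theorem justifies the rearrangement unconditionally; for general $s$ I would restrict to $|s|<\min(d_K,\,\mu_{\triangle}/\sigma_{\triangle}^2)$, where both logarithms are analytic and the rearrangement is legitimate by absolute convergence. Everything else is bookkeeping with the gamma CGF.
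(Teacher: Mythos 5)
Your proof is correct and follows essentially the same route as the paper's: write the Polya-Gamma, truncated, and gamma cumulant generating functions via the gamma CGF $-a\ln(1-s/d_k)$, Taylor-expand, and use the definitions of $\mu_{\triangle}$ and $\sigma^2_{\triangle}$ to cancel the $n=1,2$ terms, leaving the stated $n\ge 3$ series. Your explicit derivation of the tail identities $\mu_{\triangle}=a\sum_{k=K}^\infty d_k^{-1}$ and $\sigma^2_{\triangle}=a\sum_{k=K}^\infty d_k^{-2}$ (which also shows $X_{\triangle}$ is a well-defined gamma variable) and your justification of the summation interchange are careful refinements that the paper's proof leaves implicit, but they do not constitute a different approach.
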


\begin{prop}\label{prop:PG}
If $X\sim\emph{\mbox{PG}}(a,c)$, then $\lim_{|c|\rightarrow \infty}X= 0$ and $\lim_{|c|\rightarrow \infty}|c|X= a/2$.
\end{prop}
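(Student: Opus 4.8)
The plan is to read both limits directly off the closed-form mean \eqref{eq:PGmean} and variance \eqref{eq:varPG} of the Polya-Gamma distribution, interpreting the stated limits as convergence in $L^2$ (hence in probability, since both limiting values are deterministic constants). The whole argument reduces to two elementary asymptotic evaluations, and no new machinery is needed beyond the formulas already established.

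First I would handle $\lim_{|c|\to\infty}X=0$. By \eqref{eq:PGmean}, $\E[X\mid a,c]=\frac{a}{2|c|}\tanh(|c|/2)$, and since $0<\tanh(|c|/2)<1$ this is bounded above by $\frac{a}{2|c|}\to 0$. For the variance, \eqref{eq:varPG} gives $\mathrm{var}[X\mid a,c]=\frac{a}{2|c|^3}\,\frac{\sinh(|c|)-|c|}{\cosh(|c|)+1}$; rewriting the hyperbolic functions via $e^{\pm|c|}$ (exactly as in the final equality of \eqref{eq:varPG}) shows that the fraction tends to $1$, so $\mathrm{var}[X]\sim\frac{a}{2|c|^3}\to 0$. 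Therefore $\E[X^2]=\mathrm{var}[X]+(\E X)^2\to 0$, i.e., $X\to 0$ in $L^2$ and in probability.

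Next I would treat $\lim_{|c|\to\infty}|c|X=a/2$ by rescaling. From \eqref{eq:PGmean}, $\E[|c|X]=|c|\,\E[X]=\frac{a}{2}\tanh(|c|/2)\to \frac{a}{2}$. For the variance, $\mathrm{var}[|c|X]=c^2\,\mathrm{var}[X]=\frac{a}{2|c|}\,\frac{\sinh(|c|)-|c|}{\cosh(|c|)+1}\to 0$, using again that the fraction tends to $1$ while the prefactor $\frac{1}{|c|}\to 0$. Hence $\E[(|c|X-a/2)^2]=\mathrm{var}[|c|X]+(\E[|c|X]-a/2)^2\to 0$, so $|c|X\to a/2$ in $L^2$ and in probability.

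The only real subtlety is pinning down the mode of convergence and justifying the two hyperbolic limits cleanly, but both are routine. An alternative route bypasses moments entirely: starting from $\E[e^{sX}]=\cosh^a(c/2)\,\cosh^{-a}\!\big(\sqrt{(c^2/2-s)/2}\big)$, a first-order expansion of $\ln\cosh$ for large argument (using $\sqrt{(c^2/2-s)/2}=\tfrac{|c|}{2}-\tfrac{s}{2|c|}+o(|c|^{-1})$) gives $\ln\E[e^{sX}]\to 0$ and $\ln\E[e^{s|c|X}]\to as/2$, identifying the limiting laws as the point masses at $0$ and $a/2$; this yields convergence in distribution, which for constant limits again coincides with convergence in probability. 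I would present the moment argument as the main line and mention the generating-function computation as a cross-check.
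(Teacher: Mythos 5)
Your proof is correct and follows essentially the same route as the paper's: both read the limits off the closed-form mean \eqref{eq:PGmean} and variance \eqref{eq:varPG}, showing the mean tends to the claimed constant and the (rescaled) variance tends to zero. Your version is slightly more careful in making the mode of convergence explicit ($L^2$, hence in probability) and in offering the generating-function cross-check, but these are refinements of, not departures from, the paper's argument.
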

\begin{prop}\label{prop:PG1}
If $X\sim\emph{\mbox{PG}}(a,0)$, then $\E[X\given a, 0]= a/4$ and $\emph{\mbox{var}}[X\given a, 0]= a/24$.
\end{prop}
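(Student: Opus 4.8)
The cleanest route is to read both moments directly off the definition \eqref{eq:PGdefine} specialized to $c=0$. Setting $c=0$ there gives $X = \frac{1}{2\pi^2}\sum_{k=1}^\infty \frac{g_k}{(k-1/2)^2}$ with $g_k\sim\mbox{Gamma}(a,1)$ independent, so $\E[g_k]=a$ and $\mbox{var}[g_k]=a$. First I would take expectations term by term, which is justified by Tonelli since every summand is nonnegative, to obtain $\E[X\given a,0] = \frac{a}{2\pi^2}\sum_{k=1}^\infty (k-1/2)^{-2}$. The key input is the classical Euler sum $\sum_{k=1}^\infty (2k-1)^{-2} = \pi^2/8$, equivalently $\sum_{k=1}^\infty (k-1/2)^{-2} = \pi^2/2$; substituting collapses the prefactor to $a/4$.

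For the variance I would use independence of the $g_k$, so that the variance of the sum is the sum of the variances, giving $\mbox{var}[X\given a,0] = \frac{1}{4\pi^4}\sum_{k=1}^\infty \frac{\mbox{var}[g_k]}{(k-1/2)^4}=\frac{a}{4\pi^4}\sum_{k=1}^\infty (k-1/2)^{-4}$. The needed identity here is $\sum_{k=1}^\infty (2k-1)^{-4} = \pi^4/96$, i.e. $\sum_{k=1}^\infty (k-1/2)^{-4} = \pi^4/6$, which reduces the prefactor to $\frac{a}{4\pi^4}\cdot\frac{\pi^4}{6}=\frac{a}{24}$, as claimed.

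An equally short alternative, which I would mention as a consistency check, is to pass to the limit $|c|\to 0$ in the already-derived formulas \eqref{eq:PGmean} and \eqref{eq:varPG}. For the mean, write $\frac{a}{2|c|}\tanh(|c|/2) = \frac{a}{4}\cdot\frac{\tanh(|c|/2)}{|c|/2}$ and use $\tanh(x)/x\to 1$ as $x\to 0$; for the variance, note $\cosh^{-2}(|c|/2)\to 1$ while every term of the series $\sum_{n\ge 1}|c|^{2n}/(2n+3)!$ vanishes, leaving $\frac{a}{4}\cdot\frac{1}{6}=\frac{a}{24}$. I would present the direct computation as the main argument and relegate this limit calculation to a remark, since it simply re-verifies \eqref{eq:varPG} at $c=0$.

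There is no genuine obstacle here: both moments are finite because the two series converge absolutely, being dominated by $\sum k^{-2}$ and $\sum k^{-4}$ respectively, so the only real content is the term-by-term interchange and the two Euler sums. The single point worth stating carefully is the absolute convergence that licenses summing expectations and variances term by term; once that is noted, everything else is substitution.
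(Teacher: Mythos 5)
Your proposal is correct, but your main argument takes a genuinely different route from the paper. The paper proves both identities by passing to the limit $|c|\rightarrow 0$ in the general formulas \eqref{eq:PGmean} and \eqref{eq:varPG}: for the mean it rewrites $\frac{a}{2|c|}\tanh\big(\frac{|c|}{2}\big)$ as $\frac{a}{2}\,\frac{e^{|c|}-1}{|c|}\,\frac{1}{e^{|c|}+1}$ and lets $c\rightarrow 0$, and for the variance it evaluates the series form in \eqref{eq:varPG} at $c=0$, where only the $\frac{1}{6}$ term survives. This is exactly the argument you relegate to a consistency-check remark. Your primary argument instead works directly from the definition \eqref{eq:PGdefine} at $c=0$, using independence of the $g_k$, Tonelli to interchange expectation and summation, and the classical Euler sums $\sum_{k\ge 1}(2k-1)^{-2}=\pi^2/8$ and $\sum_{k\ge 1}(2k-1)^{-4}=\pi^4/96$; the arithmetic checks out, since $\sum_{k\ge1}(k-1/2)^{-2}=\pi^2/2$ and $\sum_{k\ge1}(k-1/2)^{-4}=\pi^4/6$ collapse the prefactors to $a/4$ and $a/24$ respectively. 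What each approach buys: the paper's limit argument is shorter \emph{given} that \eqref{eq:PGmean} and \eqref{eq:varPG} have already been derived (it reuses that work and treats $c=0$ as a continuity point of the one-parameter family), whereas your direct computation is self-contained, avoids the $0/0$ form that forces the paper to argue via limits at all, and makes the value $a/24$ transparent as a zeta-type sum; its only overhead is the justification of term-by-term summation for the mean and of countable additivity of variance for the independent series, both of which you correctly flag and which follow from the absolute convergence you note.
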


\begin{prop}\label{prop:PG1_1}
If $X\sim\emph{\mbox{PG}}(a,c)$, then $\displaystyle\emph{\mbox{var}}[X\given a, c] \ge \frac{a}{24}{\cosh^{-2}\left(\frac{|c|}{2}\right) }
%\sum_{n=1}^\infty \frac{|c|^{2n-2}}{(2n+1)!}.
%& = \left(\frac{1}{|c|^{2}} - \frac{1}{|c|\sinh(|c|)}\right)\E[X\given a, c].
$, where the equality holds if and only if $c=0$.
\end{prop}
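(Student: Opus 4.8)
The plan is to deduce the inequality as an immediate consequence of the closed-form series expansion of the variance already established in~\eqref{eq:varPG}. Recall that there we showed
\[
\mbox{var}[X\given a, c] = \frac{a\cosh^{-2}\!\left(\frac{|c|}{2}\right)}{4}\left(\frac{1}{6} + \sum_{n=1}^\infty \frac{|c|^{2n}}{(2n+3)!}\right).
\]
The crucial observation is purely arithmetic: since $\frac{a}{24} = \frac{a}{4}\cdot\frac{1}{6}$, the claimed lower bound $\frac{a}{24}\cosh^{-2}(|c|/2)$ is precisely the contribution of the leading constant term $\tfrac{1}{6}$ inside the parentheses. Thus the whole statement will reduce to showing that the remaining tail of the series is nonnegative.

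First I would factor the strictly positive prefactor $\frac{a}{4}\cosh^{-2}(|c|/2)>0$ (positive because $a>0$ for $X\sim\mbox{PG}(a,c)$ and $\cosh^{-2}$ is everywhere positive) out of both sides. The inequality $\mbox{var}[X\given a,c]\ge \frac{a}{24}\cosh^{-2}(|c|/2)$ then becomes equivalent to
\[
\frac{1}{6} + \sum_{n=1}^\infty \frac{|c|^{2n}}{(2n+3)!} \;\ge\; \frac{1}{6},
\qquad\text{i.e.}\qquad
\sum_{n=1}^\infty \frac{|c|^{2n}}{(2n+3)!} \;\ge\; 0,
\]
which holds trivially because every summand $\frac{|c|^{2n}}{(2n+3)!}$ is nonnegative.

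For the equality characterization, I would note that each term $\frac{|c|^{2n}}{(2n+3)!}$ is strictly positive whenever $|c|>0$, so the tail sum vanishes if and only if $|c|=0$, i.e. $c=0$; in that case the bound is attained with equality (consistent with $\mbox{var}[X\given a,0]=a/24$ from Proposition~\ref{prop:PG1}), and for $c\neq 0$ the inequality is strict. The main analytical work for this proposition is therefore not in the proof itself but was already carried out in deriving the cumulant-based variance formula~\eqref{eq:varPG}; the only point requiring a moment's care is confirming strict positivity of the prefactor so that factoring it out preserves the direction of the inequality and yields the sharp equality condition.
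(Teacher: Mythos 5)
Your proof is correct and follows essentially the same route as the paper: both invoke the series form of the variance in \eqref{eq:varPG}, observe that the claimed bound is exactly the contribution of the constant term $\tfrac{1}{6}$, and note that the remaining tail $\sum_{n=1}^\infty \frac{|c|^{2n}}{(2n+3)!}$ is nonnegative and vanishes if and only if $c=0$. You merely spell out the factoring of the positive prefactor and the equality case more explicitly than the paper's one-line argument.
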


\begin{prop}\label{prop:PG2}
$X\sim\emph{\mbox{PG}}(a,c)$ has a variance-to-mean ratio as % are related as
\begin{align} %\label{eq:varPG}
\frac{\emph{\mbox{var}}[X\given a, c] }{\E[X\given a, c]} %&= \frac{ d^2\E[e^{sX}] }{ds^2}(0) - (\E[X])^2\notag\\
%&=\frac{a\cosh^{-2}\left(\frac{|c|}{2}\right) {[\sinh(|c|)-|c|]}}{4|c|^3}\notag\\
%&=\frac{a}{2|c|^3} \frac{{\sinh(|c|)-|c|}}{\cosh(|c|)+1}\notag\\
%& = \frac{a}{2|c|^3} \frac{1-e^{-2|c|}-2|c|e^{-|c|}}{1+e^{-2|c|}+2e^{-|c|}}
 = \frac{1}{|c|^{2}} - \frac{1}{|c|\sinh(|c|)} \notag
\end{align}
%\end{prop}
%\begin{prop}\label{prop:PG3}
%$X\sim\emph{\mbox{PG}}(a,c)$ 
%and
and is always under-dispersed, since 
$ \emph{\mbox{var}}[X\given a, c]\le\E[X\given a, c]/6$ almost surely, where the equality holds if and only if $c=0$.
\end{prop}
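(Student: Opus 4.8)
The plan is to work directly from the closed-form mean in \eqref{eq:PGmean} and variance in \eqref{eq:varPG} and form the ratio $\mbox{var}[X\given a,c]/\E[X\given a,c]$. Writing $u=|c|$, the common $a/2$ prefactors cancel, leaving $\frac{1}{u^{2}}\cdot\frac{\sinh u - u}{\cosh u + 1}\cdot\frac{1}{\tanh(u/2)}$. First I would clean up the two hyperbolic factors with the half-angle identities $\cosh u + 1 = 2\cosh^{2}(u/2)$ and $\sinh u = 2\sinh(u/2)\cosh(u/2)$, which give $\frac{1}{(\cosh u + 1)\tanh(u/2)} = \frac{1}{2\cosh(u/2)\sinh(u/2)} = \frac{1}{\sinh u}$. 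This collapses the ratio to $\frac{\sinh u - u}{u^{2}\sinh u}$, and separating the two terms yields exactly $\frac{1}{u^{2}}-\frac{1}{u\sinh u}$, establishing the stated variance-to-mean ratio for $c\neq 0$; the value at $c=0$ is then recovered by continuity and agrees with Proposition~\ref{prop:PG1}.

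For the under-dispersion bound I would pass to power series. Using $\sinh u - u = \sum_{k=0}^{\infty} u^{2k+3}/(2k+3)!$ and $u^{2}\sinh u = \sum_{k=0}^{\infty} u^{2k+3}/(2k+1)!$, the common factor $u^{3}$ cancels and the ratio becomes $\big(\sum_{k\ge 0} u^{2k}/(2k+3)!\big)\big/\big(\sum_{k\ge 0} u^{2k}/(2k+1)!\big)$. These are exactly the two series already visible in the power-series form of \eqref{eq:varPG} and in the analogous expansion $\E[X]=\tfrac{a}{4}\cosh^{-2}(u/2)\sum_{k\ge 0}u^{2k}/(2k+1)!$ of the mean, so the $\tfrac{a}{4}\cosh^{-2}(u/2)$ prefactors cancel cleanly, which is a useful consistency check. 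Since both series have strictly positive coefficients for $u\ge 0$, proving $\mbox{var}[X\given a,c]\le \E[X\given a,c]/6$ is equivalent to the term-by-term inequality $6/(2k+3)!\le 1/(2k+1)!$, that is $(2k+2)(2k+3)\ge 6$, which holds for every $k\ge 0$.

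The only point requiring care — and the closest thing here to an obstacle — is the equality analysis, since everything else is routine. The inequality $(2k+2)(2k+3)\ge 6$ is an equality precisely at $k=0$ and is strict for $k\ge 1$, so $\tfrac{1}{6}D-N=\sum_{k\ge 1}\big(\tfrac{1}{6(2k+1)!}-\tfrac{1}{(2k+3)!}\big)u^{2k}$ (up to the positive factor $6$) is a sum of strictly positive coefficients against $u^{2k}$, and hence vanishes if and only if $u^{2k}=0$ for all $k\ge 1$, i.e. $u=|c|=0$. Dividing back by the positive denominator series transfers this to the ratio, giving $\mbox{var}[X\given a,c]\le \E[X\given a,c]/6$ with equality iff $c=0$; in particular $X$ is always under-dispersed because $1/6<1$. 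I would close by noting that this boundary value is exactly consistent with Proposition~\ref{prop:PG1}, where $\E[X]=a/4$ and $\mbox{var}[X]=a/24$ give the ratio $1/6$.
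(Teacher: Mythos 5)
Your proof is correct and takes essentially the same route as the paper's: both reduce the variance-to-mean ratio to $(\sinh|c|-|c|)/(|c|^{2}\sinh|c|)$ from the closed-form mean \eqref{eq:PGmean} and variance \eqref{eq:varPG}, expand numerator and denominator as the power series $\sum_{k\ge 0}|c|^{2k+3}/(2k+3)!$ and $\sum_{k\ge 0}|c|^{2k+3}/(2k+1)!$, and conclude under-dispersion from the term-by-term comparison $3!/(2k+3)!\le 1/(2k+1)!$, which is an equality only at $k=0$, giving the bound $1/6$ with equality iff $c=0$. You merely spell out two steps the paper leaves implicit — the half-angle simplification yielding $(\cosh|c|+1)\tanh(|c|/2)=\sinh|c|$, and the handling of the equality case via the positive-coefficient series $\tfrac{1}{6}D-N$ rather than invoking Proposition \ref{prop:PG1} at $c=0$ — which is fine.
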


\section{Proofs}\label{sec:proof}
\begin{proof}[Proof for Definition \ref{thm:1}]
For the hierarchical model in \eqref{eq:Softplus_model}, we have $P(y_i=0\given \theta_i) = e^{-\theta_i}$. Further using the moment generating function of the gamma distribution,  we have
\beq
  P(y_i =0 \given \xv_i, \betav)  = \E_{\theta_i}[e^{-\theta_i}] = (1 +  e^{\xv_i'\betav})^{-1}. \notag
\eeq
As $\lambda(\xv_i) =-\ln[P(y_i =0 \given \xv_i, \betav)] $ by definition, we have $\lambda(\xv_i) = \ln(1+e^{\xv_i'\betav})$.
\end{proof}

%\begin{proof}[Proof of Corollary  \ref{cor:2}]
%Marginalizing out $\theta_i$ from the Poisson distribution in \eqref{eq:Softplus_model} leads to $m_{i}\sim{\mbox{NB}}\left[1,
%{1}/{(1+e^{-\xv_i'\betav})}\right] = {\mbox{Geometric}}\left[{1}/{(1+e^{\xv_i'\betav})}\right].$
%\end{proof}

\begin{proof}[Proof for Definition \ref{thm:CLR2CNB}]
For the hierarchical model in \eqref{eq:CNB}, we have $P(y_i=0\given \theta_i) = e^{-\theta_i} = \prod_{k=1}^\infty e^{-\theta_{ik}}$. Using the moment generating function of the gamma distribution,  we have
\beq
  P(y_i =0 \given \xv_i, \{\betav_k\}_k)  = \E_{\theta_{i}}\left[\prod_{k=1}^\infty e^{-\theta_{ik}}\right]  = \prod_{k=1}^\infty \E_{\theta_{ik}}[e^{-\theta_{ik}}] = \prod_{k=1}^\infty (1 +  e^{\xv_i'\betav_k})^{-r_k}.\notag
\eeq
As $\lambda(\xv_i) = -\ln[P(y_i =0 \given \xv_i, \{\betav_k\}_k) ]$ by definition, we obtain 
\eqref{eq:sum_softplus_reg}.
% have $\lambda(\xv_i) = \sum_{k=1}^{\infty} r_k \ln(1+e^{\xv_i'\betav_{k}})$.
\end{proof}

\begin{proof}[Proof for Definition \ref{thm:stack-softplus}]
For the hierarchical model in \eqref{eq:BerPo_recursive_softplus_reg_model}, we have $P(y_i=0\given \theta^{(1)}_i) = e^{-\theta_i^{(1)}} $. Using the moment generating function of the gamma distribution,  we have
\beq
  P(y_i =0 \given \xv_i, \betav^{(2)},\theta_i^{(2)})  = \E_{\theta_{i}^{(1)}}\left[e^{-\theta_{i}^{(1)}}\right]  =  (1 +  e^{\xv_i'\betav^{(2)}})^{-\theta_i^{(2)}} = e^{-\theta_i^{(2)} \ln(1+e^{\xv_i'\betav^{(2)}}) }.\notag
\eeq
Marginalizing out $\theta_i^{(2)}$ leads to
\beq
  P(y_i =0 \given \xv_i, \betav^{(2:3)},\theta_i^{(3)})  = \E_{\theta_{i}^{(2)}}\left[e^{-\theta_i^{(2)} \ln(1+e^{\xv_i'\betav^{(2)}})}\right]  =  e^{-\theta_i^{(3)} \ln[1+e^{\xv_i'\betav^{(3)}}\ln(1+e^{\xv_i'\betav^{(2)}})] }.\notag
\eeq
Further marginalizing out $\theta_i^{(3)},\ldots,\theta_i^{(T)}$ and with 
 $\lambda(\xv_i) =-\ln[P(y_i =0 \given \xv_i, r,\betav^{(2:T+1)})] $ by definition, we obtain 
\eqref{eq:recurssive_softplus_reg}.
% have $\lambda(\xv_i) = \sum_{k=1}^{\infty} r_k \ln(1+e^{\xv_i'\betav_{k}})$.
\end{proof}
%
%
%\begin{proof}[Proof of Proposition \ref{prop:deepsoft_bound}]
%Since $$\ln(1+e^{x_1})\ge \mbox{rectifier}(x_1),$$ where the equality is true if and only if $x_1=-\infty$,  we have
%\begin{align}
%\ln[1+e^{x_2}\ln(1+e^{x_1})]&\ge \ln[1+e^{x_2}\mbox{rectifier}(x_1)] \notag\\
% & \ge 
%%\mbox{stack-rectifier}(\xv_i'\betav^{(2)},\xv_i'\betav^{(3)} )
%\mbox{rectifier}[x_2+\ln \mbox{rectifier}(x_1)].\notag
%\end{align}
%By continuing the recursion, the proof for the left hand side of % the inequality in
% \eqref{eq:deepsoft_bound} is complete.
%
%Since 
%%$$e^{e^x} = 1+e^x + \sum_{n=2}^\infty \frac{e^{nx}}{n!} \ge 1+e^x,$$ 
%%where the equality is true if and only if $x=-\infty$, we have 
%$ %\beq
%\ln(1+e^x)\le e^{x},
%$
%as shown in the proof for Proposition \ref{lem:finite}, %\eeq
%we have
%$$
%\ln[1+e^{x_2}\ln(1+e^{x_1})] \le \ln(1+e^{x_2+x_1}).
%$$
%By continuing the recursion, the proof for the right hand side of % the inequality in
% \eqref{eq:deepsoft_bound} is complete.
%%a.s. for all $x\in\mathbb{R}$.
%\end{proof}

\begin{proof}[Proof for Definition \ref{thm:SS-softplus}]
For the hierarchical model in \eqref{eq:DICLR_model}, we have $P(y_i=0\given \{\theta^{(1)}_{ik}\}_k) = e^{-\sum_{k=1}^\infty \theta_{ik}^{(1)}} $. Using the moment generating function of the gamma distribution,  we have
\beq
  P(y_i =0 \given \xv_i, \{\betav_k^{(2)},\theta_{ik}^{(2)}\}_k)  = \prod_{k=1}^\infty e^{-\theta_{ik}^{(2)} \ln\big(1+e^{\xv_i'\betav_k^{(2)}}\big) }.\notag
\eeq
Further marginalizing out $\{\theta_{ik}^{(2)}\}_k,\ldots,\{\theta_{ik}^{({T})}\}_k$  %using the gamma distribution moment generating function
 and  by definition
with 
 $\lambda(\xv_i) =-\ln[P(y_i =0 \given \xv_i, \{r_k,\betav_k^{(2:{T}+1)}\}_k) ]$, we obtain 
\eqref{eq:SRS_regression}.
% have $\lambda(\xv_i) = \sum_{k=1}^{\infty} r_k \ln(1+e^{\xv_i'\betav_{k}})$.
\end{proof}

\begin{proof}[Proof of Proposition \ref{lem:finite}]
By construction, the infinite product would be equal or small than one. We need to further make sure that the infinite product would not degenerate to zero. 
Using the  L\'evy-Khintchine theorem \citep{kallenberg2006foundations}, we have
\begin{align}
%\E_G\left[\prod_{k=1}^\infty\left(\frac{1}{1+\exp(\xv_i'\betav_k)}\right)^{r_k}\right]=
-\ln\left\{\E_G\left[e^{{-\sum_{k=1}^\infty r_k \ln[1+\exp(\xv_i'\betav_k)]}}\right]\right\}&=\int_{\mathbb{R}_+\times\Omega} \left[ 1-\left(\frac{1}{1+e^{\xv_i'\betav}}\right)^{r} \right] \nu(drd\betav).\notag
\end{align}
where $\nu(drd\betav)=r^{-1}e^{-cr}drG_0(d\betav)$ is  the L\'evy measure of the gamma process. %
Since if $c\ge 0$, then $1-e^{-c x}\le c x$ for all $x\ge 0$, the right-hand-side term of the above equation would be bounded below
\begin{align}
&\int_{\mathbb{R}_+\times\Omega} r\ln[1+e^{\xv_i'\betav}] \nu(drd\betav)= \frac{\gamma_0}{c} \int_{\Omega} \ln[1+e^{\xv_i'\betav}] g_0(d\betav).\label{eq:normalint}
\end{align}
 %, then $t : = \xv'_i\betav \sim\mathcal{N}(0,\sigma^2)$, where $\sigma^2=\xv_i'\Sigmamat\xv$. 
Since $e^{e^x} = 1+e^x + \sum_{n=2}^\infty \frac{e^{nx}}{n!} \ge 1+e^x,$
we have 
\beq
\ln(1+e^x)\le e^{x},
\eeq
where the equality is true if and only if $x=-\infty$.
Assuming $\betav\sim\mathcal{N}(0,\Sigmamat)$, we have
%It is easy to verify that $\ln(1+e^{\xv'_i\betav})<e^{\xv'_i\betav}$ a.s., and hence 
\beq
%\E_{\betav}\{\ln[1+\exp(\xv_i'\betav)]\} 
\int_{\Omega} \ln[1+e^{\xv_i'\betav}] g_0(d\betav)
\le  \int_{\Omega} e^{\xv'_i\betav} \mathcal{N}(\betav;0,\Sigmamat)d\betav = e^{\frac{1}{2}\xv_i'\Sigmamat\xv_i}\notag
%=1+\frac \sigma {\sqrt{2\pi}}
\eeq
%
%It is easy to verify that $\ln(1+e^t)<1+\max\{t,0\}$ a.s., and hence 
%\beq
%%\E_{\betav}\{\ln[1+\exp(\xv_i'\betav)]\} 
%\int_{\Omega} \ln[1+\exp(\xv_i'\betav)] g_0(d\betav)
%< 1 + \int_{\Omega} t \frac{1}{\sqrt{2\pi \sigma^2}}e^{-\frac{t^2}{2\sigma^2}}dt
%=1+\frac \sigma {\sqrt{2\pi}}
%\eeq
%
%using Taylor series and the moment generating function of the normal distribution, we have
%\beq
%\E_{\betav}\{\ln[1+\exp(\xv_i'\betav)]\} = \sum_{n=1}^\infty (-1)^{n+1}\frac{ \E_{\betav}[e^{n\xv_i'\betav}]}{n!} = \sum_{n=1}^\infty (-1)^{n+1}\frac{ e^{\frac{n^2\sigma^2}{2}}}{n!} = \E_{t}[\ln(1+e^t)] 
%\eeq
% and
%%If we let $t = \xv_i'\betav$, then marginally 
%$t\sim\mathcal{N}(0,\sigma^2)$. We further have 
%\begin{align}
%\int_{-\infty}^\infty\ln(1+e^t)\frac{1}{\sqrt{2\pi \sigma^2}}e^{-\frac{t^2}{2\sigma^2}}dt  & =%\frac{1}{\sqrt{2\pi \sigma^2}}\int_0^\infty t e^{-\frac{t^2}{2\sigma^2}}dt  =\
%\frac{ \sigma}{\sqrt{2\pi}} + \int_{0}^\infty\ln(1+e^{-t})\frac{1}{\sqrt{2\pi \sigma^2}}e^{-\frac{t^2}{2\sigma^2}}dt \notag\\
%&< \frac{ \sigma}{\sqrt{2\pi}} + \int_{0}^\infty e^{-t}\frac{1}{\sqrt{2\pi \sigma^2}}e^{-\frac{t^2}{2\sigma^2}}dt \notag\\
%&< \frac{ \sigma}{\sqrt{2\pi}} + \int_{-\infty}^\infty e^{-t}\frac{1}{\sqrt{2\pi \sigma^2}}e^{-\frac{t^2}{2\sigma^2}}dt \notag\\
%&=\frac{ \sigma}{\sqrt{2\pi}} + e^{\sigma^2/2} 
%\end{align}
Thus  the integral in the right-hand-side of (\ref{eq:normalint}) is finite and hence the infinite product $\prod_{k=1}^\infty\left[{1+e^{\xv_i'\betav_k}}\right]^{-r_k}$ has a finite expectation that is greater than zero. % under the gamma process.
\end{proof}

  \begin{proof}[Proof of Theorem \ref{thm:sum_polytope}]
 Since $\sum_{k'\neq k}r_{k'} \ln(1+e^{\xv_i' \betav_{k'}}) \ge 0$ a.s., if \eqref{eq:sum_ineuqality} is true, then 
$
 r_k \ln(1+e^{\xv_i' \betav_{k}}) \le -\ln(1-p_0)
$
 a.s. for all $k\in\{1,2,\ldots\}$. Thus if \eqref{eq:sum_ineuqality} is true, then \eqref{eq:convex_polytope} is true a.s., which means the set of solutions to  \eqref{eq:sum_ineuqality} is included in the set of solutions to \eqref{eq:convex_polytope}.
%  
%  then
%  \beq
%  1 - \left(\frac{1}{1+e^{\xv_i\betav_{k}}}\right)^{r_{k}} \prod_{k'\neq k}\left(\frac{1}{1+e^{\xv'_i\betav_{k'}}}\right)^{r_{k'}} <p
%  \eeq
%  Since $\left(\frac{1}{1+e^{\xv'_i\betav_{k'}}}\right)^{r_{k'}}\le 1$ a.s., we have
%  \beq
%  \eeq
  \end{proof}

\begin{proof}[Proof of Proposition \ref{prop:sum_polytope}]
    Assuming $\xv_i$ violates at least the $k$th inequality, which means 
   $
    \xv_i' \betav_k > \ln\big[(1-p_0)^{-\frac{1}{r_k}}-1\big],
   $
   then we have $$\lambda(\xv_i) =r_k\ln(1+e^{\xv_i' \betav_{k}}) +  \sum_{k'\neq k}r_{k'} \ln(1+e^{\xv_i' \betav_{k'}}) \ge r_k\ln(1+e^{\xv_i' \betav_{k}}) > -\ln(1-p_0) $$ and hence $P(y_i=1\given \xv_i)=1-e^{-\lambda(\xv_i)}> p_0$ and $P(y_i=0\given \xv_i)\le 1- p_0$.
    \end{proof}

\begin{proof}[Proof of Proposition \ref{lem:finite1}]
By construction, the infinite product would be equal or small than one. We need to further make sure that the infinite product would not degenerate to zero. 
Using the  L\'evy-Khintchine theorem \citep{kallenberg2006foundations} and $1-e^{-c x}\le c x$ for all $x\ge 0$ if $c\ge 0$, we have
\footnotesize{
\begin{align}
&-\ln\left\{\E_G\exp \left[ { - \sum_{k=1}^\infty r_k
\ln\left(1+e^{\xv_i'\betav^{({T}+1)}_k}\ln\Bigg\{1+e^{\xv_i'\betav^{({T})}_k}\ln\bigg[1+\ldots%+e^{\xv_i'\betav^{(3)}_k}
\ln\Big(1+e^{\xv_i'\betav^{(2)}_k}\Big)\bigg]\Bigg\}\right) }\right]\right\}\notag\\
&=\int %_{\mathbb{R}_+\times\Omega} 
\left[ 1-\left(1+e^{\xv_i'\betav^{({T}+1)}}\ln\Bigg\{1+e^{\xv_i'\betav^{({T})}}\ln\bigg[1+\ldots
e^{\xv_i'\betav^{(3)}}\ln\Big(1+e^{\xv_i'\betav^{(2)}}\Big)\bigg]\Bigg\}\right)^{-r} \right] \nu(drd\betav^{({T}+1:2)})\notag\\
&\le \int %_{\mathbb{R}_+\times\Omega} 
r\ln \left(1+e^{\xv_i'\betav^{({T}+1)}}\ln\Bigg\{1+e^{\xv_i'\betav^{({T})}}\ln\bigg[1+\ldots
e^{\xv_i'\betav^{(3)}}\ln\Big(1+e^{\xv_i'\betav^{(2)}}\Big)\bigg]\Bigg\}\right) \nu(drd\betav^{({T}+1:2)}).%\label{eq:normalint1}
\label{eq:Levy_Khin}
\end{align}}\normalsize
%Since if $c\ge 0$, then $1-e^{-c x}\le c x$ for all $x\ge 0$, the right-hand-side term of the above equation would be bounded below
%\small{
%\begin{align}
%&\int %_{\mathbb{R}_+\times\Omega} 
%r\ln \left(1+e^{\xv_i'\betav^{({T}+1)}}\ln\Bigg\{1+e^{\xv_i'\betav^{({T})}}\ln\bigg[1+\ldots
%e^{\xv_i'\betav^{(3)}}\ln\Big(1+e^{\xv_i'\betav^{(2)}}\Big)\bigg]\Bigg\}\right) \nu(drd\betav^{({T}+1:2)}).\label{eq:normalint1}
%\end{align}} \normalsize
%%Since $$e^{e^x} = 1+e^x + \sum_{n=2}^\infty \frac{e^{nx}}{n!} > 1+e^x,$$ we have 
%%Following Proposition \ref{prop:deepsoft_bound} and its proof, we have
%%Since 
Since
$
\ln(1+e^x)\le e^{x},
$
we have
\begin{align}
~&\ln \left(1+e^{\xv_i'\betav^{({T}+1)}}\ln\Bigg\{1+e^{\xv_i'\betav^{({T})}}\ln\bigg[1+\ldots
e^{\xv_i'\betav^{(3)}}\ln\Big(1+e^{\xv_i'\betav^{(2)}}\Big)\bigg]\Bigg\}\right)\notag\\
\le~& \ln \left(1+e^{\xv_i'\betav^{({T}+1)}}\ln\Bigg\{1+e^{\xv_i'\betav^{({T})}}\ln\bigg[1+\ldots
e^{\xv_i'\betav^{(4)}}\ln\Big(1+e^{\xv_i'(\betav^{(3)}+\betav^{(2)})}\Big)\bigg]\Bigg\}\right)\notag\\
%&\cdots\notag\\
\le~& \ln \left(1+e^{\xv_i'\betav^{(\star)}}\right)
\le e^{\xv_i'\betav^{(\star)}}\notag
\end{align}
where $\betav^{(\star)}:=\betav^{({T}+1)}+\betav^{({T})}+\ldots+\betav^{(2)}$. Assuming $\betav^{(t)} \sim\mathcal{N}(0,\Sigmamat_t)$, the right hand side of \eqref{eq:Levy_Khin} would be bound below
$ %\begin{align}
\int %_{\mathbb{R}_+\times\Omega} 
re^{\xv_i'\betav^{(\star)}} \nu(drd\betav^{(\star)}) = \gamma_0 c^{-1}e^{\frac{1}{2} \xv_i' (\sum_{t=2}^{T+1} \Sigmamat_t)\xv_i }. \notag
$ %\end{align}
Therefore, the integral in the right-hand-side of (\ref{eq:Levy_Khin}) is finite and hence the infinite product 
%$\prod_{k=1}^\infty\left[{1+e^{\xv_i'\betav_k}}\right]^{-r_k}$ 
in Proposition \ref{lem:finite1}
has a finite expectation that is greater than zero under the gamma process.
%The reminder of the proof can be completed by directly following the proof for Proposition \ref{lem:finite}, omitted here for brevity.
\end{proof}

  \begin{proof}[Proof of Theorem \ref{thm:union_polytope}]
 Since $$\sum_{k'\neq k} r_{k'}
\ln\left(1+e^{\xv_i'\betav^{({T}+1)}_{k'}}\ln\Bigg\{1+e^{\xv_i'\betav^{({T})}_{k'}}\ln\bigg[1+\ldots%+e^{\xv_i'\betav^{(3)}_k}({T}_k
\ln\Big(1+e^{\xv_i'\betav^{(2)}_{k'}}\Big)\bigg]\Bigg\}\right) \ge 0$$ a.s., if %\eqref{eq:SS-softplus_ineuqality}
\eqref{eq:Union_convex_polytope} is true for at least one $k\in\{1,2,\ldots\}$, then 
%$$
% \sum_{k=1}^\infty r_k
%\ln\left(1+e^{\xv_i'\betav^{({T}+1)}_k}\ln\Bigg\{1+e^{\xv_i'\betav^{({T})}_k}\ln\bigg[1+\ldots%+e^{\xv_i'\betav^{(3)}_k}({T}_k
%\ln\Big(1+e^{\xv_i'\betav^{(2)}_k}\Big)\bigg]\Bigg\}\right) > -\ln(1-p_0)
%$$
% a.s. for all $k\in\{1,2,\ldots\}$. Thus if \eqref{eq:SS-softplus_ineuqality} is true, then 
%\eqref{eq:Union_convex_polytope} 
\eqref{eq:SS-softplus_ineuqality}
is true a.s., which means the set of solutions to $\eqref{eq:SS-softplus_ineuqality}$ encompass $\mathcal{D}_{\star}$. 
%  
%  then
%  \beq
%  1 - \left(\frac{1}{1+e^{\xv_i\betav_{k}}}\right)^{r_{k}} \prod_{k'\neq k}\left(\frac{1}{1+e^{\xv'_i\betav_{k'}}}\right)^{r_{k'}} <p
%  \eeq
%  Since $\left(\frac{1}{1+e^{\xv'_i\betav_{k'}}}\right)^{r_{k'}}\le 1$ a.s., we have
%  \beq
%  \eeq
  \end{proof}

  \begin{proof}[Proof of Proposition \ref{prop:union}]
    Assume $\xv_i$ satisfies at least the $k$th inequality, which means 
   \eqref{eq:Union_convex_polytope} is true,
   then
   $$\small
\lambda(\xv_i)\ge
 r_k
\ln\left(1+e^{\xv_i'\betav^{({T}+1)}_k}\ln\Bigg\{1+e^{\xv_i'\betav^{({T})}_k}\ln\bigg[1+\ldots%+e^{\xv_i'\betav^{(3)}_k}({T}_k
\ln\Big(1+e^{\xv_i'\betav^{(2)}_k}\Big)\bigg]\Bigg\}\right)> -\ln(1-p_0) $$ and hence $P(y_i=1\given \xv_i)=1-e^{-\lambda(\xv_i)}> p$ and $P(y_i=0\given \xv_i)\le 1- p_0$.
    \end{proof}

\begin{proof}[Proof of Theorem \ref{cor:PGBN}]
By construction (\ref{eq:deepPFA_aug1}) is true for $t=1$. % and (\ref{eq:step1}) can be justified using Lemma 4.1 of \citet{BNBP_PFA_AISTATS2012}. %the relationship between the Poisson and multinomial distributions, 
Suppose (\ref{eq:deepPFA_aug1}) is also true for $t\ge 2$,
% then we can link the counts $\xv_j^{(t)}$ to the model parameters at layer $t$ as
%\beq
%\xv_j^{(t)}\sim\mbox{Pois}\left(-\Phimat^{(t)}\thetav_j^{(t)}\ln(1-p_j^{(t)})\right)
%\eeq
then we can augment each $m_{ik}^{(t)}$ under its compound Poisson representation as % Denote $\phiv_{v:}^{(t+1)}$ as the $v$th row of $\Phimat^{(t+1)}$, where $v\in\{1,\ldots,K_{t}\}$.
%We can augment $m^{(t)(t+1)}_{ vj}$ as %:=x^{(t)}_{\cdotv vj}$ as
%Since the joint distribution of $x_{kj}^{(t+1)}$ $x_{kj}^{(t+1)}\sim\mbox{CRT}(x^{(t)(t+1)}_{ kj},\phiv_{k:}^{(t+1)}\thetav_j^{(t+1)})$ and $x^{(t)(t+1)}_{ kj} \sim\mbox{NB}\left[\phiv_{k:}^{(t+1)}\thetav_j^{(t+1)}, p_j^{(t+1)}\right]$ is the same as t
\beq
m_{ik}^{(t)} \given m_{ik}^{(t+1)}\sim \mbox{SumLog}(m_{ik}^{(t+1)},~p_{ik}^{(t+1)}),~~
%= \sum_{\ell=1}^{m_{ik}^{(t+1)}} u_{\ell},~~u_{\ell}\sim\mbox{Log}(p_{ik}^{(t+1)}), ~~
m_{ik}^{(t+1)}\sim\mbox{Pois}\left(\theta_{ik}^{(t+1)} q_{ik}^{(t+1)}\right), \label{eq:CompP}%~l^{(0)(1)}_{kj}\sim\mbox{Pois}(-\phi_{k_0k_1}^{(1)}\theta_{k_1j}^{(1)}\ln(1-p_j^{(1)}))
\eeq
where the joint distribution of $m_{ik}^{(t)}$ and $m_{ik}^{(t+1)}$, according to Theorem 1 of \citet{NBP2012}, is the same as that in 
\beq
m_{ik}^{(t+1)} \given m_{ik}^{(t)} \sim\mbox{CRT}(m_{ik}^{(t)}, ~\theta_{ik}^{(t+1)}),~~m_{ik}^{(t)}\sim{\mbox{NB}}(\theta_{ik}^{(t+1)},~ p_{ik}^{(t+1)}) ,\notag
\eeq
where CRT refers to the Chinese restaurant table distribution described in \citet{NBP2012}.
Marginalizing $\theta_{ik}^{(t+1)}$ from the Poisson distribution in \eqref{eq:CompP} leads to
$m_{ik}^{(t+1)}\sim{\mbox{NB}}(\theta_{ik}^{(t+2)},$ $p_{ik}^{(t+2)}) $.
Thus if (\ref{eq:deepPFA_aug1}) is true for layer $t$, then %(\ref{eq:deepPFA_aug}) 
it is also true for layer $t+1$. 
%Therefore, (\ref{eq:deepPFA_aug}) is also true for layer $t+1$. The proof is complete by induction. 
%where %$u\sim\mbox{Log}(p)$ represents the logarithmic distribution with probability parameter $p$ and
%\beq
%\xv_j^{(t+1)}\sim\mbox{Pois}\left(-\Phimat^{(t+1)}\thetav_j^{(t+1)}\ln(1-p_j^{(t+1)})\right).
%\eeq
%Note that when $t=1$, since $p_j^{(1)} = 1-e^{-1}$, we have (\ref{eq:deepPFA_aug}) according to the definition of the model. Therefore, by induction,
% $(\ref{eq:deepPFA_aug})$ is true for $t=1,\ldots,T$.
\end{proof}

\begin{proof}[Proof of Proposition \ref{prop:PGtruncate}]
Since $\hat{X}$ and $X_{\triangle}$ are independent to each other, 
with \eqref{eq:PGmean} and \eqref{eq:varPG}, we have
and $
\E[X]  = \E[\hat{X}] + \mu_{\triangle} = \E[\hat{X} + X_{\triangle}]$
and 
$
\mbox{var}[X] =\mbox{var}[\hat{X}] + \sigma^2_{\triangle} = \mbox{var}[\hat{X} + X_{\triangle} ]$.
Using Taylor series expansion, we have
\begin{align}
\ln \E[e^{sX}] &= -\sum_{k=1}^\infty a\ln(1-sd_k^{-1}) = a \sum_{k=1}^\infty \sum_{n=1}^\infty \frac{s^n d_k^{-n}}{n} %\notag\\
%& = as  \sum_{k=1}^\infty d_k^{-1}+ as^2 \sum_{k=1}^\infty \frac{d_k^{-2} }{2} + a \sum_{n=3}^\infty \sum_{k=1}^\infty  \frac{s^n d_k^{-n}}{n}\notag\\
 = s \E[X] + s^2 \frac{\mbox{var}[X] }{2}+a \sum_{n=3}^\infty \sum_{k=1}^\infty  \frac{s^n d_k^{-n}}{n},\notag %\\
\end{align}
\begin{align}
\ln \E[e^{s\hat{X}}] &= -\sum_{k=1}^{K-1} a\ln(1-sd_k^{-1})
% a \sum_{k=1}^{K-1} {s d_k^{-1}}
%+a \sum_{k=1}^{K-1} \frac{s^2 d_k^{-2}}{2}
%+a \sum_{k=1}^{K-1} \sum_{n=3}^\infty \frac{s^n d_k^{-n}}{n}\notag\\
 = s \E[\hat{X}] + s^2 \frac{\mbox{var}[\hat{X}]}{2} +a \sum_{n=3}^\infty \sum_{k=1}^{K-1}  \frac{s^n d_k^{-n}}{n},\notag\\
%\end{align}
%\begin{align}
\ln \E[e^{sX_{\triangle}}] &= - \frac{\mu_{\triangle}^2}{\sigma^2_{\triangle}} \ln\left(1-s \frac{\sigma^2_{\triangle}}{\mu_{\triangle}} \right)  =  \frac{\mu_{\triangle}^2}{\sigma^2_{\triangle}}\sum_{n=1}^\infty \frac{s^n (\frac{\sigma^2_{\triangle}}{\mu_{\triangle}})^{n}}{n} = s\mu_{\triangle} + s^2 \frac{\sigma^2_{\triangle}}{2}  +\frac{\mu_{\triangle}^2}{\sigma^2_{\triangle}}\sum_{n=3}^\infty \frac{s^n (\frac{\sigma^2_{\triangle}}{\mu_{\triangle}})^{n}}{n}.\notag
\end{align}
The proof is completed with $\ln\E[e^{sX}] - \ln \E[e^{s(\hat{X}+X_{\triangle})}]  =\ln \E[e^{sX}] - \ln \E[e^{s(\hat{X})}]  - \ln \E[e^{X_{\triangle})}] .$
\end{proof}
%Thus we have
%\begin{align}
%\ln\E[e^{sX}] - \ln \E[e^{s(\hat{X}+X_{\triangle})}] & = \E[e^{sX}] - \ln \E[e^{s(\hat{X})}]  - \ln \E[e^{X_{\triangle})}]  \notag\\
%& =\sum_{n=3}^\infty  \frac{s^n }{n}  \left[ \left( a\sum_{k=K+1}^\infty   d_k^{-n}\right)- \frac{\mu_{\triangle}^2}{\sigma^2_{\triangle}}
% \left(\frac{\sigma^2_{\triangle}}{\mu_{\triangle}}\right)^{n} \right]. \notag
%\end{align}

\begin{proof}[Proof of Proposition \ref{prop:PG}]
For the Polya-Gamma random variable $X\sim\mbox{PG}(a,c)$, since 
$
\E[X] = \frac{a}{2|c|}\tanh\big(\frac{|c|}{2}\big)
$
and $\lim_{|c|\rightarrow  \infty}\tanh\big(\frac{|c|}{2}\big)=1$, %where $\mbox{sign}(c)=1$ if $c>0$ and $\mbox{sign}(c)=-1$ if $c<0$,
we have $\lim_{|c|\rightarrow \infty}\E[X]=0$. 
With the expression of $\mbox{var}[X]$ shown in \eqref{eq:varPG},  %\eqref{eq:PGVar} %,  since 
%\beq \mbox{var}[X] = 
% a\frac{-\frac{2+a}{\cosh(c)}+a+\frac{2\tanh (c)}c}{8c^2\left[\frac{\cosh(c/2)}{\sqrt{\cosh(c)}}\right]^2} - \E[X]^2 \label{eq:PGVar},
% \eeq
% and $$\lim_{|c|\rightarrow \infty}\left[\frac{\cosh(c/2)}{\sqrt{\cosh(c)}}\right]^2 = 1/2,$$
we have $\lim_{|c|\rightarrow \infty}\mbox{var}[X]=0$.
Therefore, we have $X\rightarrow 0$ as  $|c|\rightarrow \infty$.

Since $\lim_{|c|\rightarrow  \infty} \E[|c|X] =a/2$ and 
$$\mbox{var}[|c|X] = (|c|)^2 \mbox{var}[X] =
\frac{a}{2|c|} \frac{{\sinh(|c|)-|c|}}{\cosh(|c|)+1} = \frac{a}{2}  \frac{{\frac{\tanh(|c|)}{|c|}-\cosh^{-1}(|c|)}}{1+ \cosh^{-1}(|c|)}.
% a\frac{-\frac{2+a}{\cosh(c)}+a+\frac{2\tanh (c)}c}{8\left[\frac{\cosh(c/2)}{\sqrt{\cosh(c)}}\right]^2} - \frac{a^2}{4}\tanh^2\left(\frac{c}{2}\right),%\label{eq:PGVar1},
$$
 we have $\lim_{|c|\rightarrow \infty}\mbox{var}[|c|X] = 0$ and hence $\lim_{|c|\rightarrow \infty} |c|X = a/2$.
\end{proof}

\begin{proof}[Proof of Proposition \ref{prop:PG1}]
Using \eqref{eq:PGmean}, we have
\beq
\E[X\given a, 0] = \lim_{c\rightarrow 0} \E[X\given a, c] = \lim_{c\rightarrow 0 }\frac{a}{2|c|}\tanh\left(\frac{|c|}{2}\right)  = 
\lim_{c\rightarrow 0 } \frac{a}{2} \frac{e^{|c|}-1}{|c|}\frac{1}{e^{|c|}+1} = \frac{a}{4}.\notag
\eeq
Using \eqref{eq:varPG}, we have \newline
$\displaystyle
~~~~~~~\mbox{var}[X\given a, 0] = \lim_{c\rightarrow 0} \mbox{var}[X\given a, c]   %\lim_{c\rightarrow 0} \frac{a\cosh^{-2}\left(\frac{|c|}{2}\right) }{4} \frac{  \sinh(|c|)-|c|}{|c|^3} \notag\\
%&= \lim_{c\rightarrow 0} \frac{a\cosh^{-2}\left(\frac{|c|}{2}\right) }{4} \sum_{n=1}^\infty \frac{|c|^{2n-2}}{(2n+1)!}\notag\\
%& = \frac{a}{4} \lim_{c\rightarrow 0} \left(\frac{1}{3!} + \sum_{n=1}^\infty \frac{|c|^{2n}}{(2n+3)!}\right)\notag\\
= \lim_{c\rightarrow 0} \frac{a\cosh^{-2}\left(\frac{|c|}{2}\right) }{4}  \left(\frac{1}{6} + \sum_{n=1}^\infty \frac{|c|^{2n}}{(2n+3)!}\right) = \frac{a}{24}.\notag
$
\end{proof}

\begin{proof}[Proof of Proposition \ref{prop:PG1_1}]
Using \eqref{eq:varPG}, we have $
\mbox{var}[X\given a, c] = \frac{a\cosh^{-2}\left(\frac{|c|}{2}\right) }{4} 
 \left(\frac{1}{6} + \sum_{n=1}^\infty \frac{|c|^{2n}}{(2n+3)!}\right)  \ge \frac{a}{24} \cosh^{-2}\left(\frac{|c|}{2}\right) 
$, with the equality holds if and only if $c=0$.
\end{proof}

\begin{proof}[Proof of Proposition \ref{prop:PG2}]
With \eqref{eq:PGmean} and \eqref{eq:varPG}, the variance-to-mean ratio is %can be expressed as
\beq\small
\frac{\mbox{var}[X\given a, c] }{\E[X\given a, c]}  =  \frac{1}{|c|^{2}} - \frac{1}{|c|\sinh(|c|)} =  \frac{\sinh(|c|)-|c|}{|c|^2\sinh(|c|)}
%& = \frac{\frac{|c|^3}{3!} + \frac{|c|^5}{5!} + \frac{|c|^7}{7!} + \cdots}{{|c|^3} + \frac{|c|^5}{3!} + \frac{|c|^7}{5!} + \cdots}\notag\\
%& = \frac{\displaystyle\sum_{n=0}^{\infty} \frac{|c|^{2n+3}}{(2n+3)!}}{\displaystyle \sum_{n=0}^{\infty} \frac{|c|^{2n+3}}{(2n+1)!}}\notag\\
=\left[{\frac{1}{6}\displaystyle\sum_{n=0}^{\infty} \frac{|c|^{2n+3} \, 3!}{(2n+3)!}}\right]\left/\left[{\displaystyle \sum_{n=0}^{\infty} \frac{|c|^{2n+3}}{(2n+1)!}}\right]\right..\notag
\eeq
When $c=0$, with Proposition \ref{prop:PG1}, we have $ {\mbox{var}[X\given a, c] } \big / {\E[X\given a, c]} =1/6$.
When $c\neq 0$, since $3!/(2n+3)! <1/(2n+1)!$ a.s. for all $n\in\{1,\ldots\}$, we have $$\sum_{n=0}^{\infty} \frac{|c|^{2n+3} \, 3!}{(2n+3)!} <  \sum_{n=0}^{\infty} \frac{|c|^{2n+3}}{(2n+1)!}$$ a.s.
and hence $ {\mbox{var}[X\given a, c] } \big / {\E[X\given a, c]} <1/6$ a.s.
\end{proof}

\section{Gibbs sampling for sum-stack-softplus regression}\label{app:sampling}

For %the model consists of % the hierarchical model for 
SS-softplus regression, %with 
%\eqref{eq:DICLR_model} and \eqref{eq:ICNBE_finite},
Gibbs sampling via data augmentation and marginalization proceeds as follows.\\
\textbf{\emph{Sample $m_{i}$.}} Denote $\theta_{i\cdotv} = \sum_{k=1}^K \theta^{(1)}_{ik}$\,. 
Since $m_{i}=0$ a.s. given $y_{i}=0$ and $m_{i}\ge 1$ given $y_{i}=1$, and in the prior we have $m_{i}\sim\mbox{Pois}(\theta_{i\cdotv})$, following the inference for the Bernoulli-Poisson link in \citet{EPM_AISTATS2015}, we can sample $m_{i}$ as % from a truncated Poisson distribution as
\beq
(m_{i}\,|\,-)\sim y_{i}\mbox{Pois}_+\left(\theta_{i\cdotv}\right),\label{eq:m_i}
\eeq 
where $m \sim \mbox{Pois}_+(\theta)$ denotes a draw from the truncated Poisson distribution, with PMF 
$f_M(m\,|\,y_i=1,\theta) = (1-e^{-\theta})^{-1}{\theta^m e^{-\theta}}/{m!}$, where $m\in\{1,2,\ldots\}$. To draw truncated Poisson random variables, we use an efficient rejection sampler  described in \cite{EPM_AISTATS2015}, whose smallest acceptance rate, which happens when the Poisson rate is one, is 63.2\%. \\
\textbf{\emph{Sample $m^{(1)}_{ik}$.}} Since letting $m_{i}=\sum_{k=1}^K m^{(1)}_{ik},~m^{(1)}_{ik}\sim\mbox{Pois}(\theta^{(1)}_{ik})$ is equivalent in distribution to letting $(m^{(1)}_{i1},\ldots,m^{(1)}_{iK})\,|\,m_{i}\sim\mbox{Mult}\left(m_{i},{\theta^{(1)}_{i1}}/{\theta_{i\cdotv}},\ldots,{\theta^{(1)}_{iK}}/{\theta_{i\cdotv}}\right),~m_{i}\sim \mbox{Pois}\left( \theta_{i\cdotv}\right)$, similar to \citet{Dunson05bayesianlatent} and \citet{BNBP_PFA_AISTATS2012}, we sample $m^{(1)}_{ik}$ as
\beq
(m^{(1)}_{i1},\ldots,m^{(1)}_{iK}\,|\,-)\sim\mbox{Mult}\left(m_{i},{\theta^{(1)}_{i1}}/{\theta_{i\cdotv}},\ldots,{\theta^{(1)}_{iK}}/{\theta_{i\cdotv}}\right).\label{eq:m_i_1}
\eeq
\textbf{\emph{Sample $m^{(t)}_{ik}$ for $t\ge 2$}.}  %With $\theta_{ik}^{({T}+1)}: = r_k$, 
As in Theorem \ref{cor:PGBN}'s proof, we sample $m^{(t)}_{ik}$ for $t=2,\ldots,{T}+1$ as
\beq
(m_{ik}^{(t)}\given m_{ik}^{(t-1)},\theta_{ik}^{(t)})\sim{\mbox{CRT}}\big(m_{ik}^{(t-1)},~\theta_{ik}^{(t)}\big).\label{eq:m_i_t}
\eeq
%\textbf{\emph{Sample $\omega_{ij}$.}}
\textbf{\emph{Sample $\betav^{(t)}_{k}$.}}
%Since $m_{ik} \sim\mbox{NB}\left(r_k, \frac{\exp(\xv_i'\betav_k)}{1+\exp(\xv_i'\betav_k)}\right)$ in the prior, 
Using data augmentation for NB regression, as in \citet{LGNB_ICML2012} and \citet{polson2013bayesian}, % for $t=2,\ldots,{T}+1$,
we denote $\omega^{(t)}_{ik}$ as a random variable drawn from the Polya-Gamma ($\mbox{PG}$) distribution \citep{LogitPolyGamma} as
$
\omega^{(t)}_{ik}\sim\mbox{PG}\left(m^{(t-1)}_{ik}+\theta^{(t)}_{ik},~0 \right), %\label{eq:PG} 
\notag
$
under which we have 
$\E_{\omega^{(t)}_{ik}} \left[\exp(-\omega^{(t)}_{ik}(\psi^{(t)}_{ik})^2/2)\right]  = {\cosh^{-(m^{(t-1)}_{ik}+\theta^{(t)}_{ik})}(\psi_{ik}^{(t)}/2)}$. 
Thus the likelihood of $\psi_{ik}^{(t)}:= \xv_i\betav_{k}^{(t)} + \ln {q}_{ik}^{(t-1)}= \ln\big(e^{q_{ik}^{(t)}}-1\big) $ in (\ref{eq:deepPFA_aug1}) can be expressed as
\begin{align}
\mathcal{L}(\psi^{(t)}_{ik})&\propto \frac{{(e^{\psi^{(t)}_{ik}})}^{m^{(t-1)}_{ik}}}{{(1+e^{\psi^{(t)}_{ik}})}^{m^{(t-1)}_{ik}+\theta^{(t)}_{ik}}}  = \frac{2^{-(m^{(t-1)}_{ik}+\theta^{(t)}_{ik})}\exp({\frac{m^{(t-1)}_{ik}-\theta^{(t)}_{ik}}{2}\psi^{(t)}_{ik}})}{\cosh^{m^{(t-1)}_{ik}+\theta^{(t)}_{ik}}(\psi^{(t)}_{ik}/2)}\nonumber\\ &\propto \exp\left({\frac{m^{(t-1)}_{ik}-\theta^{(t)}_{ik}}{2}\psi_i}\right)\E_{\omega^{(t)}_{ik}}\left[\exp[-\omega^{(t)}_{ik}(\psi^{(t)}_{ik})^2/2]\right]. \notag
\end{align} %\normalsize
%Given the values of $\{\omega_i\}_{i=1,N}$ and the prior in $(\ref{eq:psiv})$, the conditional posterior of $\psiv$ can be expressed as
%\begin{align}
%\label{eq:post_psiv}(\psiv|-) \propto \mathcal{N}(\psiv;\Xmat\betav,\varphi^{-1}\Imat)\prod_{i=1}^N e^{ -\frac{\omega_i}{2}\left(\psi_i-\frac{y_i-r}{2\omega_i}\right)^2}
%\end{align}
%and given the values of $\psiv$ and the prior in (\ref{eq:PG}), the conditional posterior of $\omega_i$ can be expressed as
%
%\vspace{-5mm}\small\begin{align}
%\label{eq:post_omega}(\omega_i|-) \propto \exp(-\omega_i\psi_i^2/2)\mbox{PG}(\omega_i;y_i+r,0).
%\end{align}\normalsize
Combining the likelihood
$
\mathcal{L}(\psi^{(t)}_{ik}, \omega^{(t)}_{ik})\propto  \exp\left({\frac{m^{(t-1)}_{ik}-\theta^{(t)}_{ik}}{2}\psi_i}\right)\exp[-\omega^{(t)}_{ik}(\psi^{(t)}_{ik})^2/2]
$ and the prior, 
we sample auxiliary Polya-Gamma random variables $\omega^{(t)}_{ik}$ as
%\beq
%(\omega^{(t)}_{ik}\,|\,-)\sim\mbox{PG}\left(m^{(t-1)}_{ik}+\theta^{(t)}_{ik},~\xv_i'\betav_{k}^{(t)} + \ln \big[-\ln(1-p_{ik}^{(t-1)})\big]\right),\label{eq:omega_i}
%\eeq
\beq
(\omega^{(t)}_{ik}\,|\,-)\sim\mbox{PG}\left(m^{(t-1)}_{ik}+\theta^{(t)}_{ik},~\xv_i'\betav_{k}^{(t)} + \ln{q}_{ik}^{(t-1)} \right),\label{eq:omega_i}
\eeq
%\beq
%(\omega^{(t)}_{ik}\,|\,-)\sim\mbox{PG}\left[m^{(t-1)}_{ik}+\theta^{(t)}_{ik},~\ln\big(e^{q_{ik}^{(t)}}-1\big)\right],\label{eq:omega_i}
%\eeq
conditioning on which we sample $\betav^{(t)}_{k}$ as
%\beq
%(\betav^{(t)}_{k}\,|\,-)\sim\mathcal{N}(\muv^{(t)}_{k}, \Sigmamat^{(t)}_{k}), \label{eq:beta}
%\eeq
%where
\beqs
&(\betav^{(t)}_{k}\,|\,-)\sim\mathcal{N}(\muv^{(t)}_{k}, \Sigmamat^{(t)}_{k}),~~~\displaystyle\Sigmamat^{(t)}_{k} = \left( \mbox{diag}(\alpha_{0tk},\ldots,\alpha_{Vtk}) + \sum \nolimits_{i} \omega^{(t)}_{ik}\xv_i\xv_i'  \right)^{-1}, \notag\\
&\displaystyle \muv^{(t)}_{k} = \Sigmamat^{(t)}_{k}\left[ \sum \nolimits_{i} \left(  -\omega_{ik}^{(t)}\ln {q}_{ik}^{(t-1)}  +   \frac{m^{(t-1)}_{ik}-\theta^{(t)}_{ik}}{2}\right)\xv_i\right]. \label{eq:beta}
\eeqs
Once we update $\betav^{(t)}_k$, we calculate $q^{(t)}_{ik}$ using \eqref{eq:lambda}. % (or calculate $p^{(t)}_{ik}$ using \eqref{eq:p} and then calculate $q^{(t)}_{ik}$ using the definition).
 To draw Polya-Gamma random variables, we %improve the mean unbiased truncation sampler in \cite{LGNB_ICML2012} by making it also be unbiased in its variance, as
 use the approximate sampler described in
 Proposition \ref{prop:PGtruncate}, which is unbiased in both its mean and its variance. The approximate sampler is found to be highly accurate even for a truncation level as small as one, for various combinations of the two Polya-Gamma parameters. 
 %Appendix \ref{app:samplePG}. 
 Unless stated otherwise, we set the truncation level of drawing a Polya-Gamma random variable as six, which means the summation of six independent gamma random variables is used  to approximate a Polya-Gamma random variable. \\
 \textbf{\emph{Sample $\theta^{(t)}_{ik}$.}} Using the gamma-Poisson conjugacy, we sample $\tau^{(t)}_{ik} := \theta^{(t)}_{ik}q_{ik}^{(t)}$ %for $t=T,T-1,\ldots,1$ 
 as
% \beq
%(\tau^{(t)}_{ik}\,|\,-)\sim\mbox{Gamma}\left(\theta^{(t+1)}_{ik}+m^{(t)}_{ik},\, \frac{q_{ik}^{(t)}e^{\xv'_i \betav_k^{(t+1)}}}{1+ {q}_{ik}^{(t)} e^{\xv'_i \betav_k^{(t+1)}}} % - \ln(1-p_{ik}^{(t)})} 
%%\frac{\exp(\xv_i'\betav_{sk})}{1+\exp(\xv_i'\betav_{sk})}
%\right).\label{eq:theta}
%\eeq
 \beq
(\tau^{(t)}_{ik}\,|\,-)\sim\mbox{Gamma}\left(\theta^{(t+1)}_{ik}+m^{(t)}_{ik},\, 1-e^{-q_{ik}^{(t+1)}} % - \ln(1-p_{ik}^{(t)})} 
%\frac{\exp(\xv_i'\betav_{sk})}{1+\exp(\xv_i'\betav_{sk})}
\right).\label{eq:theta}
\eeq
%
%\beq
%(\theta^{(t)}_{ik}\,|\,-)\sim\mbox{Gamma}\left(\theta^{(t+1)}_{ik}+m^{(t)}_{ik},\, \frac{1}{e^{-\xv'_i \betav_k^{(t+1)}}+ {q}_{ik}^{(t)} } % - \ln(1-p_{ik}^{(t)})} 
%%\frac{\exp(\xv_i'\betav_{sk})}{1+\exp(\xv_i'\betav_{sk})}
%\right).\label{eq:theta}
%\eeq
%\beq
%(\theta^{(t)}_{ik}\,|\,-)\sim\mbox{Gamma}\left(\theta^{(t+1)}_{ik}+m^{(t)}_{ik},\, e^{\xv'_i \betav_k^{(t+1)}- {q}_{ik}^{(t+1)}  }% - \ln(1-p_{ik}^{(t)})} 
%%\frac{\exp(\xv_i'\betav_{sk})}{1+\exp(\xv_i'\betav_{sk})}
%\right).\label{eq:theta}
%\eeq
 \textbf{\emph{Sample $\alpha_{vtk}$.}} We sample $\alpha_{vtk}$ as
 \beq
 %(\alpha_{tk} \,|\,-)\sim\mbox{Gamma}\left(e_0+  \frac{V+1}{2},\frac{1}{f_0+\frac{1}{2}(\betav^{(t)}_{k})'\betav^{(t)}_{k}}\right).\label{eq:alpha}
 (\alpha_{vtk} \,|\,-)\sim\mbox{Gamma}\left(e_0+  \frac{1}{2},\frac{1}{f_0+\frac{1}{2}(\beta^{(t)}_{vk})^2}\right).\label{eq:alpha}
 \eeq
 \textbf{\emph{Sample $c_0$.}} %Using the conjugacy between gamma distributions respect to the scale parameter, 
 We sample $c_0$ as
\beq
(c_0\,|\,-)\sim\mbox{Gamma}\left(e_0+\gamma_0,\frac{1}{f_0+ \sum_k r_k} 
%\frac{\exp(\xv_i'\betav_{sk})}{1+\exp(\xv_i'\betav_{sk})}
\right).\label{eq:c0}
\eeq
 \textbf{\emph{Sample  $\gamma_0$ and $r_{k}$.}} 
 Let us denote 
% $$%p_{ik}:=\frac{\exp(\xv_i'\betav_{k})}{1+\exp(\xv_i'\betav_{k})},~~
%\tilde{p}_{k} :=  \frac{-\sum_{i}\ln(1-p^{({T}+1)}_{ik})}{c_0- \sum_{i}\ln(1-p^{{T}+1}_{ik})}.
%%\tilde{\tilde{p}}_{k} :=\frac{-\sum_{s}\ln(1-\tilde{p}_{sk})}{c_0-\sum_{s}\ln(1-\tilde{p}_{sk})}.
%$$
$$%p_{ik}:=\frac{\exp(\xv_i'\betav_{k})}{1+\exp(\xv_i'\betav_{k})},~~
\tilde{p}_{k} :=  {\sum \nolimits_{i}q^{({T}+1)}_{ik}}\Big/{\big(c_0+ \sum \nolimits_{i}q^{({T}+1)}_{ik}\big)}.
%\tilde{\tilde{p}}_{k} :=\frac{-\sum_{s}\ln(1-\tilde{p}_{sk})}{c_0-\sum_{s}\ln(1-\tilde{p}_{sk})}.
$$
% Since $m_{ik} \sim\mbox{NB}\left(r_{k}, p_{ik}\right)$ in the prior,   using the data augmentation technique for the NB distribution as in \cite{NBP2012},  we first sample auxiliary Chinese restaurant table random variables $l_{ik}$  as
%\beq
%(l_{ik}\,|\,-)\sim\mbox{CRT}(m_{ik},r_{k}),
%\eeq
%Note since $l_{ik}=0$ a.s. if $y_{i}=0$, we only need to sample $l_{ik}$ if $y_{i}=1$.
Given $l_{\cdotv k} = \sum_{i }m^{({T}+1)}_{ik}$, %repeatedly using the same ``augment-and-conquer'' technique for the NB distribution \citep{MingyuanNIPS2012}, 
we first sample 
\beq
(\tilde{l}_{k}\,|\,-) \sim\mbox{CRT}(l_{\cdotv k} , \gamma_0/K).
\eeq
With these latent counts, 
we then sample $\gamma_0$ and $r_{k}$ as
\begin{align}
%&(\gamma_0\,|\,-)\sim\mbox{Gamma}\left(a_0+\tilde{\tilde{l}}_{\cdotv},\,\frac{1}{b_0-\frac{1}{K}\sum_{k}\ln(1-\tilde{\tilde{p}}_k )}\right),\notag\\
&(\gamma_0 \,|\, -) \sim\mbox{Gamma}\left(a_0+{\tilde{l}}_{\cdotv},\,\frac{1}{b_0-\frac{1}{K}\sum_{k}\ln(1-\tilde{p}_{k})}
\right),
\notag\\
%&(r_{sk}\,|\,-)\sim\mbox{Gamma}\left(\tau_k+ l_{s\cdotv k}, \frac{1}{c_s+\sum_i\ln\left[1+\exp(\xv_i'\betav_{sk})]\right)}
&(r_{k}\,|\,-)\sim\mbox{Gamma}\left(\frac{\gamma_0}{K}+ l_{\cdotv k}, \, \frac{1}{c_0
+\sum_i q^{({T}+1)}_{ik}
%-\sum_i\ln(1-p^{({T}+1)}_{ik})
}\label{eq:r_k}
\right).
\end{align}

\subsection{Numerical stability}

For stack-softplus and SS-softplus regressions with $T>1$, if for some data point $\xv_i$, the inner product $\xv_i'\betav_k^{(t)}$ takes such a large negative number that $e^{-\xv_i'\betav_k^{(t)}}=0$ under %a  machine with 
a finite numerical precision, %which means criterion $t$ of expert $k$ is strongly dissatisfied  for $\xv_i$,
 then ${q}_{ik}^{(t)}=0$ and $\ln {q}_{ik}^{(t)} = -\infty$. For example, in both 64 bit Matlab (version R2015a)  and 64 bit R (version 3.0.2), if $\xv_i'\betav_k^{(t)}\le-745.2$, then $e^{\xv_i'\betav_k^{(t)}}=0$ and hence ${q}_{ik}^{(t)}=0$, $p_{ik}^{(t)}=0$, and $\ln {q}_{ik}^{(t)} = -\infty$. 

If  ${q}_{ik}^{(t)}=0$, then with (\ref{eq:omega_i}), we let  $\omega^{(t+1)}_{ik} = 0$, and with Proposition \ref{prop:PG}, we let
$$-\omega^{(t+1)}_{ik}\ln {q}_{ik}^{(t)} + \frac{m_{ik}^{(t)}- \theta_{ik}^{(t+1)}}{2} = \frac{m_{ik}^{(t)}+\theta_{ik}^{(t+1)}}{2} + \frac{m_{ik}^{(t)}- \theta_{ik}^{(t+1)}}{2} =m_{ik}^{(t)}, $$ and with \eqref{eq:lambda}, we let ${q}_{ik}^{(\tilde{t})}=0$ for all $\tilde{t}\ge t$. %, regardless of the values .
Note that if ${q}_{ik}^{(t)}=0$, drawing $\omega_{ik}^{(\tilde{t})}$ for $\tilde{t}\in\{ t+1,\ldots,T+1\}$ becomes unnecessary.
To avoid the numerical issue of calculating $\theta_{ik}^{(t)}$ with $\tau_{ik}^{(t)}/{q}_{ik}^{(t)}$ when ${q}_{ik}^{(t)}=0$, we let
\beq
\theta_{ik}^{(t)} ={ \tau_{ik}^{(t)}}\big/{\max\big\{\epsilon,{q}_{ik}^{(t)}\big\}},
\eeq
where we set $\epsilon=10^{-10}$ to for illustrations %produce all the figures
 and $\epsilon=10^{-6}$ to produce the results in the tables. 
To ensure that the covariance matrix for $\betav_k^{(t)}$ is positive definite, we bound %the diagonal of the inverse of the covariance matrix $\Sigmamat_k^{(t)}$  
$\alpha_{vtk}$
above $10^{-3}$.

\subsection{The propagation of latent counts across layers}\label{app:T}

%\vspace{-3mm}
As the number of tables occupied by the customers is in the same order as the logarithm of the number of customers in a Chinese restaurant process, $m^{(t+1)}_{ik}$ in \eqref{eq:ICNBE_finite_1} is in the same order as $\ln\big( m^{(t)}_{ik}\big)$ and hence often quickly decreases as $t$ increases, especially when $t$ is small. In addition, since $m_{ik}^{(t+1)}\le m_{ik}^{(t)}$ almost surely (a.s.), $m_{ik}^{(t)}=0$ a.s. if $m_{ik}^{(1)}=0$, $m_{ik}^{(t)}\ge 1$ a.s. if $m_{ik}^{(1)}\ge 1$, and $m_{i} \ge 1$ a.s. if $y_i=1$, %one may show that $m_{\cdotv k}^{(t)} = \sum_{i} m_{ik}^{(t)}$  monotonically decreases towards $\sum_{i}\delta(m_{ik}^{(1)}\ge 1)$ as $t$ increases, and $m_{\cdotv \cdotv}^{(t)} = \sum_{k} m_{\cdotv k}^{(t)}$ monotonically decreases towards  $\sum_{i} \delta(y_i=1)$ as $t$ increases.
we have the following two corollaries.  % As $ \delta(m_{ik}^{(t)}\ge 1) \le 
%Thus the total count of layer $t+1$ as $\sum_j x^{(t+1)}_{\cdotv j}$ would often be much smaller than that of layer $t$ as $\sum_j x^{(t)}_{\cdotv j}$, and hence one may use the total count $\sum_j x^{({T})}_{\cdotv j}$ as a simple criterion to decide whether it is necessary to add more layers to the GBN. 

\begin{cor}\label{cor:mono1}
The latent count %both 
$m_{\cdotv k}^{(t)} = \sum_{i} m_{ik}^{(t)}$ % \text{   and   } m_{\cdotv \cdotv}^{(t)} = \sum_{k} m_{\cdotv k}^{(t)}$$
monotonically decreases as $t $ increases and %as a function of $t\in\{1,\ldots,{T}+1\}$ for all $k$ and 
$
m_{\cdotv k}^{(t)}\ge \sum_{i}\delta(m_{ik}^{(1)}\ge 1).
$
%for all  $k$ and  $t\in\{1,\ldots,{T}+1\}$.
\end{cor}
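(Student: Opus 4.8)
The plan is to reduce everything to a single structural fact visible in the alternative representation \eqref{eq:ICNBE_finite_1}: each lower-layer count is a sum-logarithmic random variable whose number of summands equals the count one layer above, i.e. $m_{ik}^{(t)}\sim\mbox{SumLog}(m_{ik}^{(t+1)},p_{ik}^{(t+1)})$ for $t\in\{1,\ldots,T\}$. First I would invoke that a logarithmic random variable is supported on $\{1,2,\ldots\}$, so each of its summands is at least one. Reading $m_{ik}^{(t)}$ as a sum of exactly $m_{ik}^{(t+1)}$ such terms then yields two almost-sure consequences at once: the empty sum forces $m_{ik}^{(t)}=0$ when $m_{ik}^{(t+1)}=0$, while a nonempty sum forces $m_{ik}^{(t)}\ge m_{ik}^{(t+1)}\ge 1$. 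These are precisely the two facts quoted just before the corollary, namely $m_{ik}^{(t+1)}\le m_{ik}^{(t)}$ a.s. and the equivalence $\{m_{ik}^{(t)}\ge 1\}\Leftrightarrow\{m_{ik}^{(t+1)}\ge 1\}$.

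The monotone-decrease claim would then follow by summing the pointwise bound $m_{ik}^{(t+1)}\le m_{ik}^{(t)}$ over $i$ to obtain $m_{\cdotv k}^{(t+1)}\le m_{\cdotv k}^{(t)}$, so that $m_{\cdotv k}^{(t)}$ is non-increasing in $t$.

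For the lower bound I would chain the zero/nonzero equivalence across all layers. Iterating $\{m_{ik}^{(t)}\ge 1\}\Leftrightarrow\{m_{ik}^{(t+1)}\ge 1\}$ from layer $1$ up to layer $t$ shows $\{m_{ik}^{(t)}\ge 1\}\Leftrightarrow\{m_{ik}^{(1)}\ge 1\}$ for every $t\in\{1,\ldots,T+1\}$, hence $\delta(m_{ik}^{(t)}\ge 1)=\delta(m_{ik}^{(1)}\ge 1)$ a.s. Since any nonnegative integer dominates its own indicator, $m_{ik}^{(t)}\ge\delta(m_{ik}^{(t)}\ge 1)=\delta(m_{ik}^{(1)}\ge 1)$, and summing over $i$ delivers $m_{\cdotv k}^{(t)}\ge\sum_i\delta(m_{ik}^{(1)}\ge 1)$.

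I expect no genuine obstacle: the entire argument is carried by the support of the logarithmic distribution. The only step needing a moment's care is fixing the direction of the inequality---because the SumLog recursion in \eqref{eq:ICNBE_finite_1} runs from the top layer $T+1$ downward, the counts grow as $t$ decreases, so the assertion ``$m_{\cdotv k}^{(t)}$ decreases as $t$ increases'' must be read off from $m_{ik}^{(t+1)}\le m_{ik}^{(t)}$ and not from its reverse.
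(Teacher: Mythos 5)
Your proof is correct and matches the paper's own justification: the paper establishes the corollary by exactly the same pointwise facts ($m_{ik}^{(t+1)}\le m_{ik}^{(t)}$ a.s. and preservation of zero/nonzero status across layers, both read off from the layered SumLog/CRT structure of \eqref{eq:ICNBE_finite_1}), then sums over $i$. Your only addition is to spell out why these facts hold---the logarithmic distribution's support on $\{1,2,\ldots\}$---which the paper leaves implicit.
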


\begin{cor}\label{cor:mono2}
%Denoting $m_{\cdotv k}^{(t)} = m_{\cdotv k}^{({T}+1)} $ for $t\in\{{T}+2,{T}+3,\ldots\}$,
The latent count
$
m_{\cdotv \cdotv}^{(t)} = \sum_{k} m_{\cdotv k}^{(t)}$
monotonically decreases as $t $ increases %as a function of $t$ 
and 
$
%m_{\cdotv \cdotv}^{(1)} \ge m_{\cdotv \cdotv}^{(2)} \ge \ldots \ge m_{\cdotv \cdotv}^{({T}+1)} 
m_{\cdotv \cdotv}^{(t)}
\ge \sum_{i} \delta(y_i=1).
$
%for all $t$.
\end{cor}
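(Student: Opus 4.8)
The plan is to obtain both assertions of Corollary~\ref{cor:mono2} by summing the corresponding statements of Corollary~\ref{cor:mono1} over the expert index $k$, and then relating the resulting count of nonzero bottom-layer latents to the label indicators through the defining relations $m_i=\sum_{k}m_{ik}^{(1)}$ and $y_i=\delta(m_i\ge 1)$ in~\eqref{eq:ICNBE_finite_1}. Since Corollary~\ref{cor:mono1} is already available, the work reduces to elementary manipulations of nonnegative integers, with no genuine probabilistic content beyond the support property of the sum-logarithmic distribution.

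For the monotonicity claim, I would first recall from Corollary~\ref{cor:mono1} that for each fixed $k$ the layerwise total $m_{\cdotv k}^{(t)}$ is nonincreasing in $t$; this itself rests on the almost-sure inequality $m_{ik}^{(t+1)}\le m_{ik}^{(t)}$, which holds because $m_{ik}^{(t)}\sim\mbox{SumLog}(m_{ik}^{(t+1)},p_{ik}^{(t+1)})$ is a sum of $m_{ik}^{(t+1)}$ logarithmic variables, each supported on $\{1,2,\ldots\}$ and hence at least one. Because $m_{\cdotv\cdotv}^{(t)}=\sum_k m_{\cdotv k}^{(t)}$ is a finite sum of sequences that are each nonincreasing in $t$, it is itself nonincreasing in $t$, which settles the first claim.

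For the lower bound, I would sum the inequality $m_{\cdotv k}^{(t)}\ge \sum_i \delta(m_{ik}^{(1)}\ge 1)$ from Corollary~\ref{cor:mono1} over $k$ to get $m_{\cdotv\cdotv}^{(t)}\ge \sum_i \sum_k \delta(m_{ik}^{(1)}\ge 1)$. It then remains to show, for each $i$, that $\sum_k \delta(m_{ik}^{(1)}\ge 1)\ge \delta(y_i=1)$. This follows because $y_i=1$ forces $m_i=\sum_k m_{ik}^{(1)}\ge 1$, so at least one summand $m_{ik}^{(1)}$ must be at least one, making the inner sum at least one; when $y_i=0$ the right-hand side is zero and the inequality is trivial. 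Summing over $i$ then yields $m_{\cdotv\cdotv}^{(t)}\ge \sum_i \delta(y_i=1)$, as desired.

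I do not expect a deep obstacle, since all the probabilistic substance has already been distilled into the sum-logarithmic support facts ($\mbox{SumLog}(0,p)=0$ and $\mbox{SumLog}(n,p)\ge n$ almost surely) and into Corollary~\ref{cor:mono1}. The one point requiring care is the bookkeeping between the per-expert indicators $\delta(m_{ik}^{(1)}\ge 1)$ and the committee-level indicator $\delta(y_i=1)$: a single positive example may activate several experts at once, so $\sum_k \delta(m_{ik}^{(1)}\ge 1)$ can strictly exceed $\delta(y_i=1)$, which is precisely why only a lower bound, rather than an equality, can be asserted.
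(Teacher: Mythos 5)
Your proof is correct and takes essentially the same route as the paper, which presents Corollary~\ref{cor:mono2} as an immediate consequence of the facts that $m_{ik}^{(t+1)}\le m_{ik}^{(t)}$ a.s. (from the SumLog/CRT representation), that $m_{ik}^{(t)}\ge 1$ a.s. whenever $m_{ik}^{(1)}\ge 1$, and that $m_i\ge 1$ a.s. whenever $y_i=1$. Your explicit bookkeeping---summing Corollary~\ref{cor:mono1} over $k$ and bounding $\sum_k \delta(m_{ik}^{(1)}\ge 1)\ge \delta(y_i=1)$ via $m_i=\sum_k m_{ik}^{(1)}$---is exactly the argument the paper leaves implicit.
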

With Corollary \ref{cor:mono2},  one may consider using the values of $m_{\cdotv\cdotv}^{(t)}/\sum_{i} \delta(y_i=1)$  to decide whether $T$, the depth of the gamma belief network used in SS-softplus regression, need to be increased  to increase the model capacity,   or whether $T$ could be decreased to reduce the computational complexity. Moreover, with Corollary \ref{cor:mono1},  one may consider using the values of $m_{\cdotv k}^{(t)}/\sum_{i} \delta(y_i=1)$ to decide how many criteria would be sufficient to equip each individual expert. For simplicity, we consider the number of criteria for each expert as a parameter that determines the model capacity and we fix it  as $T$ for all experts in this paper.

\section{Related Methods and Discussions}\label{sec:discussion}

While we introduce a novel nonlinear regression framework for binary response variables, we recognize some interesting connections with previous work, including %the %BerPo link,
 the gamma belief network, several binary classification algorithms that use multiple hyperplanes, and the ideas of using the mixture or product of multiple probability distributions to construct a more complex predictive distribution, %using the mixture or product operations,
 as discussed below.

\subsection{Gamma belief network}\label{sec:GBN}

The Poisson gamma belief network is proposed in \citet{PGBN_NIPS2015} to construct a deep Poisson factor model, in which the shape parameters of the gamma distributed factor score matrix at layer $t$ are factorized under the gamma likelihood into the product of a factor loading matrix and a gamma distributed factor score matrix at layer $t+1$. While the scale parameters of the gamma distributed factor scores depend on the indices of data samples, they are constructed to be independent of the indices of the latent factors, making it convenient to derive closed-form Gibbs sampling update equations via data augmentation and marginalization. The gamma belief networks in both stack- and SS-softplus regressions, on the other hand, do not factorize the gamma shape parameters but parameterize the logarithm of each gamma scale parameter using the inner product of the corresponding covariate and regression coefficient vectors. Hence a gamma scale parameter in softplus regressions depends on both the index of the data sample and that of the corresponding latent expert. On a related note,
while the gamma distribution function is the building unit for both gamma belief networks, the one in \citet{PGBN_NIPS2015} is used to factorize the Poisson rates of the observed or latent high-dimensional count vectors, extracting multilayer deep representations in an unsupervised manner, whereas the one used in \eqref{eq:BerPo_recursive_softplus_reg_model} is designed for supervised learning to establish a direct functional relationship to predict a label given its covariates, without introducing factorization within the gamma belief network.

\subsection{Multi-hyperplane regression models}\label{sec:CPM}
Generalizing the construction of multiclass support vector machines in \citet{crammer2002algorithmic},
the idea of combining multiple hyperplanes to define nonlinear binary classification decision boundary has been discussed in \citet{aiolli2005multiclass}, \citet{wang2011trading}, \citet{manwani2010learning, manwani2011polyceptron}, and \citet{kantchelian2014large}. 
%In addition, \citet{manwani2010learning, manwani2011polyceptron} and 
In particular,
\citet{kantchelian2014large} clearly connects the idea of combining multiple hyperplanes for nonlinear classification with the learning of a convex polytope, defined by the intersection of multiple hyperplanes, to separate one class from the other, and shows that a convex polytope classifier can provide larger margins than a linear classifier equipped with a single hyperplane. 

From this point of view, the proposed sum-softplus regression is closely related to the convex polytope machine (CPM) of \citet{kantchelian2014large} as its decision boundary can be explicitly bounded by a convex polytope that encloses negative examples, as described in Theorem \ref{thm:sum_polytope} and illustrated in Fig. \ref{fig:circle_1}. Distinct from the CPM that uses a convex polytope as its decision boundary, and provides no probability estimates for class labels and no principled ways to set its number of equally-weighted hyperplanes, sum-softplus regression makes its decision boundary smoother than the corresponding bounding convex polytope, as shown in Figs. \ref{fig:circle_1} (c)-(d), using more complex interactions between hyperplanes than simple intersection, provides probability estimates for its labels, and supports countably infinite differently-weighted hyperplanes with the gamma-negative binomial process. 

In addition, to solve the objective function that is non-convex, the CPM relies on heuristics to hard assign a positively labeled data point to one and only one of the hyperplanes, making the learning of the parameters for each hyperplane become a convex optimization problem, whereas all softplus regressions use Bayesian inference with closed-form Gibbs sampling update equations, in which each data point is assigned to one or multiple hyperplanes to learn their parameters. 
Moreover, distinct from the CPM and sum-softplus regression that use either a single convex polytope or a single convex-polytope bounded space to enclose \emph{negative} examples,
the proposed stack-softplus regression defines a single convex-polytope-like confined space to enclose \emph{positive} examples, and the proposed SS-softplus regression further generalizes all of them in that its decision boundary is related to the union of multiple convex-polytope-like confined spaces. % is no longer restricted to be a convex polytope.

\subsection{Mixture, product, convolution, and stack of experts}
%With a hyperplane in sum-softplus regression, a hyperplane in stack-softplus regression, and 
With each regression coefficient vector analogized as an expert,
the proposed softplus regressions can also be related to the idea of
combining multiple experts' beliefs %is a simple and effective strategy 
%Ensemble methods are commonly used 
to improve a model's predictive performance. 
% performance of regression and classification. % by combining the ``opinions'' of multiple ``experts'' \cite{polikar2006ensemble}. 
 %\subsubsection{Linear and logarithmic opinion pools}
 Conventionally, if an expert's belief is expressed as a probability density/mass function, then one may consider using the linear opinion pool \citep{LOP} or logarithmic opinion pool \citep{%bordley1982multiplicative,
 genest1986combining,heskes1998selecting} to aggregate multiple experts' probability distributions into a single one. To reach a single aggregated distribution of some unknown quantity $y$, the linear opinion pool, also known as mixture of experts (MoE), aggregates experts additively by %, which can be dated back to Laplace, 
taking a weighted average of their distributions on $y$, while the logarithmic opinion pool, including the product of experts (PoE) of \citet{POE} as a special case, aggregates them multiplicatively by taking a weighted geometric mean of these distributions \citep{jacobs1995methods,clemen1999combining}. 

Opinion pools with separately trained experts can also be related to ensemble methods \citep{hastie01statisticallearning, zhou2012ensemble}, including both bagging  \citep{breiman1996bagging}  and boosting \citep{freund1997decision}. %,breiman1998arcing}. 
Another common strategy is to jointly train the experts using the same set of features and data. For example, the PoE of  \citet{POE} trains its equal-weighted experts jointly on the same data.  The proposed softplus regressions follow the second strategy to jointly train on the same data not only its experts but also their weights. %, 

%In Appendix \ref{app:pool}, 
%Below we provide more information on both linear and logarithmic opinion pools, and we show how the operations of convolving and stacking experts, used in the proposed softplus regressions, differ from both the mixture and product operations previously used to aggregate multiple distributions. 

%\subsection{Aggregation of expert distritbuions} %Linear and logarithmic opinion pools}
%\label{app:pool}

Assume there are $K$ experts and the $k$th expert's belief on $y$ is expressed as a probability density/mass function $f_k(y\,|\,\theta_k)$, where $\theta_k$ represents the distribution parameters. The linear opinion pool aggregates the $K$ expert distributions into a single one using 
\beq
f_Y(y\,|\,\{\theta_k\}_k) = \sum_{k=1}^K \pi_k f_k(y\,|\,\theta_k), \label{eq: LinearOP}
\eeq
where $\pi_k$ are nonnegative weights and sum to one. The logarithmic opinion pool aggregates the $K$ expert distributions using  %estimates the probability distribution of $y$  using
\beq
f_Y(y\,|\,\{\theta_k\}_k) = \frac{\prod_{k=1}^K [f_k(y\,|\,\theta_k)]^{\pi_k}}{\sum_{y} \prod_{k=1}^K  [f_k(y\,|\,\theta_k)]^{\pi_k}}, \label{eq: LOP}
\eeq
where $\pi_k\ge 0$  and the constraint $\sum_{k=1}^K \pi_k = 1$   is also  commonly imposed. If $\pi_k=1$ for all $k$, then a logarithmic opinion pool becomes a product of experts (PoE). % to simplify analysis. % for the sake of convenience of theoretical analysis. 
%Note that %$y$ in (\ref{eq: LOP}) is assumed to be discrete, and 
%the summation over $y$ in the denominator of (\ref{eq: LOP}) would be replaced by an integration over $y$  if $y$ is continuous. 

In decision and risk analysis, the functions $f_k(y)$ usually represent independent experts' subjective probabilities, which are often assumed to be known \emph{a priori}, and the focus is to optimize the expert weights~\citep{clemen1999combining}. Whereas in statistics and machine learning, both the functions $f_k(y)$ and the expert weights are typically learned from the data.
One common strategy is to first train different experts separately, such as using different feature sets, different data, and different learning algorithms, and subsequently aggregate  their distributions
into a single one. For example,  in  \cite{tax2000combining}, a set of classifiers are first separately trained on different independent data sources and then aggregated additively or multiplicatively to construct a single classifier. 

%Opinion pools with separately trained experts can also be related to ensemble methods \citep{hastie01statisticallearning, zhou2012ensemble}, including both bagging  \citep{breiman1996bagging}  and boosting \citep{freund1997decision}. %,breiman1998arcing}. 
%Another common strategy is to jointly train the experts using the same set of features and data. For example, the PoE of  \citet{POE} trains its equal-weighted experts jointly on the same data.  The proposed softplus regressions follow the second strategy to jointly train on the same data not only its experts but also their weights. %, without manipulating the training data and their features. 
%If the experts are separately trained, then the learning of individual  experts often precedes the step of aggregating them. 
 %If the experts are  trained cooperatively on the same data using the same set of features, then both the experts and their weights in an opinion pool are desired to be jointly optimized. 
 %Similar to PoEs, the proposed CoEs will train the experts cooperatively on the same data and we would consider manipulating neither the training data nor their features. 
%as well as the rule to combine them 

\subsubsection{Mixture of experts}
A linear opinion pool is also commonly known as a mixture of experts (MoE). %, is  well studied and widely used. 
Not only are there efficient algorithms, using expectation-maximization (EM) or MCMC, to jointly learn the experts (mixture components) and their mixture weights in a MoE, there are nonparametric Bayesian algorithms, such as Dirichlet process mixture models \citep{DP_Mixture_Antoniak,rasmussen2000infinite}, that support a potentially infinite number of experts. %While it in unclear how to use MoEs to improve classification, they have been combined with standard classifiers to indirectly improve the classification performance.
 We note it is possible to  combine the proposed softplus regression models with linear opinion pool  to further improve their performance. We leave that extension for future study.

%, avoiding the need of model selection via cross-validating the number of experts. 
%
\subsubsection{Product of experts}
In contrast to MoEs, 
%By contrast, 
the logarithmic opinion pool could produce a  probability distribution with sharper boundaries, but at the same time is usually much more challenging to train due to a normalization constant that is typically intractable to compute.
 Hinton's product of experts (PoE)  is one of the most well-know logarithmic opinion pools; % in statistics and machine learning; 
 since jointly training the experts and their weights  in the logarithmic opinion pool 
makes the inference much more difficult, all the experts in a PoE  are weighted equally \citep{POE}. A PoE is distinct from previously proposed logarithmic opinion pools in that its experts are trained jointly rather than separately.  Even with the restriction of equal weights, the exact gradients of model parameters in a PoE are often intractable to compute, and hence contrastive divergence that approximately computes  these gradients is commonly employed for approximate maximum likelihood inference. Moreover, no nonparametric Bayesian prior is available to allow the number of experts in a PoE to be automatically inferred. PoEs have been successfully applied to binary image modeling \citep{POE} and one of its special forms, the restricted Boltzmann machine, was widely used as a basic building block in constructing a  deep neural network \citep{hinton2006fast,bengio2007greedy}.  PoEs for non-binary data and several other logarithmic opinion pools inspired by PoEs have also been proposed, with applications to  image analysis, information retrieval, and %natural language processing 
 computational linguistics \citep{welling2004exponential,xing2005mining,smith2005logarithmic}. 
We note one may apply the proposed softplus regressions to regress the binary response variables on the covariates transformed by PoEs, such as restricted Boltzmann machine and its deep constructions, to further improve the classification performance. We leave that extension for future study. %, as the PoEs can be used to transform the original data features to make two different classes  more linearly separable  while the multi-hyperplan logistic regression can be used to define non-convex polytope nonlinear decision boundaries. 

\subsubsection{Stack of experts}

Different from both MoE and PoE, %where the weighted sum of different experts' distribution functions is used as the aggregated distribution, 
we propose stack of experts (SoE) that repeatedly mixes the same distribution with respect to the same distribution parameter as
$$x\sim f(\,r_1,w_1),\ldots,~r_{k-1}\sim f(r_{k},w_{k}),\ldots,~r_{K-1}\sim f(r_K,w_{K}).$$
In a SoE, the marginal distribution of $x$ given $r$ and $\{w_k\}_{1,K}$ can be 
expressed as
%generated from a hierarchical mixture as
%to aggregate the $K$ expert distributions as
$$
f(x\given r_K,\{w_k\}_{1,K}) = \int\! \ldots\!\int f(x\,|\,r_1,w_1) f(r_1\,|\,r_2,w_2)\ldots f(r_{K-1}\,|\,r_K,w_K)dr_{K-1}\ldots dr_{1},
$$
where the parameter $w_k$ that is pushed into the stack after $w_{k-1}$ will pop out before $w_{k-1}$ to parameterize the marginal distribution.
%in this paper we refer to stack of experts (SoE) as 
%the operation of mixing the parameter of a distribution with the same distribution 
%the distributions of different experts hierarchically as stack of experts (SoE).
%For the expert distribution expressed as %with at least two parameters 
%$f(x\,|\,r,w)$, we mix one of its parameters repeatedly as
In both stack- and SS-softplus regressions, we obtain a stacked gamma distribution $x\sim f(r_K,\{w_k\}_{1,K})$ by letting $f(x_k\,|\,r_k,w_k)=\mbox{Gamma}(x_k;r_k,e^{w_k})$, and as shown in \eqref{eq:recurssive_softplus_reg}
and \eqref{eq:SRS_regression}, the regression coefficient vectors that are pushed into the stack later appear earlier in both the stack- and SS-softplus functions.

%, where $w=\xv'\betav$.

%the same expert distribution with different parameters are mixed together 

%We stack multiple experts in different layers, the first expert will come out last. 

\subsubsection{Convolution of experts}
Distinct from both MoE and PoE, we may consider that both sum- and SS-softplus regressions use a convolution of experts (CoE) strategy to aggregate multiple experts' beliefs by convolving their probability distributions into a single one. A CoEs is based on a fundamental law in probability and statistics: the probability distribution of independent random variables' summation is equal to the convolution of their probability distributions \citep[$e.g.$,][]{fristedt1997}. Thus even though it is possible that the convolution is extremely difficult to solve and hence the explicit form of the aggregated probability density/mass function might not be available, simply adding together the random samples independently drawn from a CoE's experts would lead to a random sample drawn from the aggregated distribution that is smoother than any of the distributions used in convolution.

In a general setting,
denoting $G=\sum_{k=1}^\infty r_k\delta_{\omega_k}$ as a draw from % parameterized by the weights $r_k$ and atoms $\omega_k$ that are drawn from 
 a completely random measure \citep{%Kingman,
 PoissonP} that consists of countably infinite atoms, 
one may construct an infinite CoE model to generate random variables from a convolved distribution as
 $x=\sum_{k=1}^\infty x_k, ~x_k\sim f_k,$
 where $f_k=f(r_k,\omega_k)$ are independent experts parameterized by both the weights $r_k$ and atoms~$\omega_k$ of $G$. Denoting $(f_i*f_j)(x):=\int f_i(\tau)f_j(x-\tau)d\tau$ as the convolution operation, under the infinite CoE model, we have
\beq
f_X(x)= ( f_1 * f_2*\ldots *f_\infty) (x),\notag
\eeq
where % and %. The ICE model repeatedly convolves
 the same distribution function is repeatedly convolved to increase the representation power % construct a potentially significantly more powerful distribution function
 to better fit complex data. 
 
 Under this general framework, we may consider both sum- and SS-softplus regressions as infinite CoEs, with the gamma process used as the underlying completely random measure \cite{Kingman,PoissonP} to support countably infinite differently weighted probability distributions for convolution. 
For sum-softplus regression, as shown in \eqref{eq:CNB} of Theorem \ref{thm:CLR2CNB}, each expert can be considered as a gamma distribution whose scale parameter is parameterized by the inner product of the covariate vector and an expert-specific regression coefficient vector, and the convolution of countably infinite experts' gamma distributions is used as the distribution of the BerPo rate of the response variable; and alternatively, as shown in \eqref{eq:CNB1} of Theorem \ref{thm:CLR2CNB}, each expert can be considered as a NB regression, and the convolution of countably infinite experts' NB distributions is used as the distribution of the latent count response variable. 
For SS-softplus regression, each expert can be considered as a stacked gamma distribution, and the convolution of countably infinite experts' stacked gamma distributions is used as the distribution of the BerPo rate of a response variable.
 
 % We will show combining multiple NB experts' distributions with convolution leads to a convex polytope classification decision boundary that is enclosed by the intersections of all experts' hyperplanes, but taking the product of them and renormalizing fails to generalize. 

Related to the PoE of \citet{POE}, a CoE trains its experts jointly on the same data. Distinct from that, % the PoE of \citet{POE}, 
a CoE does not have an intractable normalization constant in the aggregated distribution, its experts can be weighted differently, and its number of experts could be automatically inferred from the data in a nonparametric Bayesian manner. The training for a CoE is also unique, as inferring the parameters of each expert essentially corresponds to deconvolving the aggregated distribution. Moreover, the convolution operation ensures that the aggregated distribution is smoother than every expert distribution. 
For inference, while it is often challenging %very difficult %could be almost impossible 
to analytically deconvolve the convolved distribution function, 
we consider first constructing a hierarchical Bayesian model that can generate random variables from the convolved distribution, and then developing a MCMC algorithm to decompose the total sum $x$ into % first sampling 
the $x_k$ of individual experts, which are then used to infer the model parameters $r_k$ and $\omega_k$ for each expert.

\section{Experimental settings and additional results}\label{app:setting}

We use the $L_2$ regularized logistic regression provided by the LIBLINEAR package \citep{REF08a} to train a linear classifier, %on $\bar{\thetav}_j$ 
 % and use it to classify $\bar{\thetav}_j$ in the test set, 
where a bias term is included and the regularization parameter $C$ is five-fold %, with its optimal value
 cross-validated on the training set from $(2^{-10}, 2^{-9},\ldots, 2^{15})$.

For kernel SVM, a Gaussian RBF kernel is used and three-fold cross validation is used to tune both the regularization parameter $C$ and kernel width on the training set. We use the LIBSVM package \citep{LIBSVM}, where we three-fold %, with its optimal value
 cross-validate both the regularization parameter $C$ and kernel-width parameter $\gamma$ on the training set from $(2^{-5}, 2^{-4},\ldots, 2^{5})$, and choose the default settings for all the other parameters. Following \citet{chang2010training}, for the ijcnn1 dataset, we choose $C=32$ and $\gamma=2$, and for the a9a dataset, we choose $C=8$ and $\gamma=0.03125$.
 
% We use the same experimental setting as that described in Appendix E.2 of \citet{RVM}. 

For RVM, instead of directly quoting the results from \citet{RVM}, which only reported the mean but not standard deviation of the classification errors for each of the first six datasets in Tab. \ref{tab:data}, we use the matlab code\footnote{\url{http://www.miketipping.com/downloads/SB2_Release_200.zip}} provided by the author, using a Gaussian RBF kernel whose kernel width is three-fold cross-validated on the training set from $(2^{-5},2^{-4.5},\ldots,2^{5})$ for both ijcnn1 and a9a and from $(2^{-10},2^{-9.5},\ldots,2^{10})$ for all the others. %We quote the results of %SVM and 
%RVM directly from the table in Section 4.4.2 of \citet{RVM}.
 
 We consider adaptive multi-hyperplane machine (AMM) of \citet{wang2011trading}, as implemented in the BudgetSVM\footnote{\url{http://www.dabi.temple.edu/budgetedsvm/}} (Version 1.1) software package \citep{BudgetSVM}. We consider the batch version of the algorithm. Important parameters of the AMM include both the regularization parameter $\lambda$ and training epochs $E$. As also observed in \citet{kantchelian2014large}, we do not observe the testing errors of AMM to strictly decrease as $E$ increases. Thus,
in addition to cross validating the regularization parameter $\lambda$ on the training set from $\{10^{-7}, 10^{-6},\ldots, 10^{-2}\}$, as done in \citet{wang2011trading}, for each $\lambda$, we try $E\in\{5,10,20,50,100\}$ sequentially until the %select the last $E$ before the 
cross-validation error begins to decrease, $i.e.$, under the same $\lambda$, we choose $E=20$ if the cross-validation error of $E=50$ is greater than that of $E=20$. 
 %We consider three-fold cross validation, set the number of training epochs as 100, 
%and 
We use the default settings for all the other parameters. 
 
 We consider the convex polytope machine (CPM) of \citep{kantchelian2014large}, using the python code\footnote{\url{https://github.com/alkant/cpm}} provided by the authors. Important parameters of the CPM include the entropy parameter $h$, regularization factor $C$, and number of hyperplanes $K$ for each side of the CPM (2$K$ hyperplanes in total). Similar to the setting of \citep{kantchelian2014large}, we first fix $h=0$ and select the best regularization factor $C$ from $\{10^{-4},10^{-3},\ldots,10^{0}\}$ using three-fold cross validation on the training set. For each $C$, we try $K\in\{1,3,5,10,20,40,60,80,100\}$ sequentially until the %select the last $E$ before the 
cross-validation error begins to decrease. With both $\lambda$ and $K$ selected, we then select $h$ from $\{0,\ln(K/10), \ln(2K/10),
 \ldots,\ln(9K/10)\}$. For each trial, we consider $10$ million iterations in cross-validation and $32$ million iterations in training with the cross-validated parameters. Note different from \citet{kantchelian2014large}, which suggests that the error rate decreases as $K$ increases, 
 we cross-validate $K$ as %a parameter to be cross-validated since 
 we have found that the testing errors of the CPM may increase once it increases over certain limits. % for some of the benchmark datasets in Tab. \ref{tab:data}. 
 %We consider $K\in\{1,2,3,4,5,10,20,40,60,80,100\}$ and report the best results. 

%\vspace{-3mm}
%\section{Additional results}\label{app:setting}

\begin{algorithm}[t!]
\small
 \caption{\small Upward-downward Gibbs sampling for sum-stack-softplus (SS-softplus) regression.\newline
 \textbf{Inputs:} $y_i$: the observed labels, $\xv_i$: covariate vectors, $K_{\max}$: the upper-bound of the number of experts, %$K_*$: the number of inactive experts to be maintained, %added if all experts are active, 
 $T$: the number of criteria of each expert, $I_{Prune}$: the set of %indices of 
 iterations at which the operation of deactivating %and adding 
 experts %subcommittees %and small-weighted experts 
 is performed, and the model hyper-parameters.
 %are pruned. 
 %pruning of regression coefficient vectors is performed.
\newline
\textbf{Outputs:} $KT $ regression coefficient vectors $\betav^{(t)}_{k}$ and $K$ weights $r_k$, where $K\le K_{\max}$ is the total number of active experts that are associated with nonzero latent counts. 
 }\label{alg:1}
 \begin{algorithmic}[1] 
 %\State \text{}
 \State \text{%Initialize $K = K_{\max}$. 
 Initialize the model parameters with $\betav^{(t)}_{k}=0$ and $r_k=1/K_{\max}$.}
 \For{\text{$iter=1:maxIter$ %Burnin_T+Collection_T$
 }} Gibbs sampling
 % \Loop{ Gibbs sampling} % iteration}
 % \For{\texttt{$iter=1:maxIter$}}
 % \State
 %Activate all experts if $iter \in I_{Prune}$ ;
 %\State $r\gets a\bmod b$
 
 \ParFor{\text{$k=1,\ldots,K_{\max}$}} Downward sampling
 
 % Prune layer $T$'s inactive factors $\{\phiv_{k}^{({T})}\}_{k:x_{\cdotv \cdotv k}^{({T})}=0}$;
% \State let $K_T=\sum_k {\delta(x_{\cdotv \cdotv k}^{({T})}>0)}$ and %and the corresponding elements in $\rv$, 
% update $\rv$; % and $K_T$; % correspondingly;
%% \State \!\emph{Prune all inactive factors $\{\phiv_{k}^{(t)}\}_{k:x_{\cdotv \cdotv k}^{(t)}=0}$ and update $K_t$ for all $t\in\{1,2,\ldots,T-1\}$};
 %\If{Expert $k$ is active }

 \For{\text{$t=T,T-1\ldots,1$}} 
 \State 
 \text{Sample $\theta_{ik}^{(t)}$ if Expert $k$ is active } ; 
\EndFor
% \EndIf 
 \EndParFor
 \State 
 \text{Sample $m_i$ }; \text{Sample $\{m^{(1)}_{ik}\}_k$ };

 \ParFor{\text{$k=1,\ldots,K_{\max}$}} Upward sampling
 \If{Expert $k$ is active }
 \For{\text{$t=2,3,\ldots,{T}+1$}} 
 \State 
 \text{Sample $m_{ik}^{(t)}$} ; \text{Sample $\omega_{ik}^{(t)}$} ; \text{Sample $\betav_k^{(t)}$ and Calculate $p_{ik}^{(t)}$ and ${q}_{ik}^{(t)}$} ;
% \State 
% \text{Sample $\theta_{ik}^{(t)}$ if $t\le T$} ; 
 \EndFor
 \EndIf 
 \State
Deactivate Expert $k$ if $iter \in I_{Prune}$ and $m_{\cdotv k}^{(1)} = 0$ ;
% \If{ $iter \in I_{Prune}$ and $m_{\cdotv k}^{(1)} = 0$}
% \State
% Deactivate Expert $k$ ; % \text{Sample $\omega_{ik}^{(t)}$} ; \text{Sample $\betav_k^{(t)}$ and Calculate $p_{ik}^{(t)}$ and ${q}_{ik}^{(t)}$} ;
% % $\sum_i \delta(m_{i k}^{(1)}\ge 1) <1$
%% \Else
%% \State
%% Prune the number of criteria equipped by expert $k$,
%% \State making $m_{\cdotv k}^{(t)}\le 3 \sum_{i} \delta(m_{ik}^{(1)}\ge 1)$ if and only if $t={T}_k +1$.
% \EndIf
 % Prune layer $T$'s inactive factors $\{\phiv_{k}^{({T})}\}_{k:x_{\cdotv \cdotv k}^{({T})}=0}$;
% \State let $K_T=\sum_k {\delta(x_{\cdotv \cdotv k}^{({T})}>0)}$ and %and the corresponding elements in $\rv$, 
% update $\rv$; % and $K_T$; % correspondingly;
%% \State \!\emph{Prune all inactive factors $\{\phiv_{k}^{(t)}\}_{k:x_{\cdotv \cdotv k}^{(t)}=0}$ and update $K_t$ for all $t\in\{1,2,\ldots,T-1\}$};
% \If{Expert $k$ is active }
%
% \For{\text{$t=T,T-1\ldots,1$}} Downward sampling
% \State 
% \text{Sample $\theta_{ik}^{(t)}$} ; 
%\EndFor
% \EndIf 
 \EndParFor
 \State
 \text{Sample $\gamma_0$ and $c_0$} ;
 \State
 \text{Sample $r_{1},\ldots,r_{K_{\max}}$} ;
% \If{ $iter \in I_{Prune}$}
% \For{\text{$k=1,\ldots,K_{\max}$}}
% \If { $m_{\cdotv k}^{(1)} = 0$}
% \State
% Deactivate expert $k$ ; % \text{Sample $\omega_{ik}^{(t)}$} ; \text{Sample $\betav_k^{(t)}$ and Calculate $p_{ik}^{(t)}$ and ${q}_{ik}^{(t)}$} ;
% % $\sum_i \delta(m_{i k}^{(1)}\ge 1) <1$
%% \Else
%% \State
%% Prune the number of criteria equipped by expert $k$,
%% \State making $m_{\cdotv k}^{(t)}\le 3 \sum_{i} \delta(m_{ik}^{(1)}\ge 1)$ if and only if $t={T}_k +1$.
% \EndIf
% % Prune layer $T$'s inactive factors $\{\phiv_{k}^{({T})}\}_{k:x_{\cdotv \cdotv k}^{({T})}=0}$;
%% \State let $K_T=\sum_k {\delta(x_{\cdotv \cdotv k}^{({T})}>0)}$ and %and the corresponding elements in $\rv$, 
%% update $\rv$; % and $K_T$; % correspondingly;
%%% \State \!\emph{Prune all inactive factors $\{\phiv_{k}^{(t)}\}_{k:x_{\cdotv \cdotv k}^{(t)}=0}$ and update $K_t$ for all $t\in\{1,2,\ldots,T-1\}$};
% \EndFor
% \If { The number of inactive experts $K_0<K_*$}
% %\If { All the $K_{\max}$ experts are active}
% \State
% Add $K_*-K_0$ number of active experts with $\betav_k^{(t)}=0$; Increase $K_{\max}$ by $K_*-K_0$ ; % $\sum_i \delta(m_{i k}^{(1)}\ge 1) <1$
%% \Else
%% \State
%% Prune the number of criteria equipped by expert $k$,
%% \State making $m_{\cdotv k}^{(t)}\le 3 \sum_{i} \delta(m_{ik}^{(1)}\ge 1)$ if and only if $t={T}_k +1$.
% \EndIf
 %\State Update $K$ and $T$
 % \EndIf
 
 \EndFor
 
% \State
 % Output %the posterior means (according to the last MCMC sample) of 
 % MCMC samples of
 % $\betav_k^{(2:T+1)}$ and $r_k$ for $k=\{1,\ldots,K\}$.
% all remaining factors 
% $\{\phiv_k^{(t)}\}_{k,t}$
% % $\left\{\{\phiv_k^{(t)}\}_{k:x_{\cdotv \cdotv k}^{(t)}>0}\right\}_{t=1,T}$ %for $t\in\{1,2,\ldots,T\}$ and\
% as the inferred network of $T$ layers, and $\{r_k\}_{k=1}^{K_T}$ as % those of $\{r_k\}_{k:x_{\cdotv \cdotv k}^{({T})}>0}$ as 
% the gamma shape parameters of layer $T$'s %the gamma distributed
% hidden units. %, for out-of-sample prediction.
 \end{algorithmic}
 \normalsize
\end{algorithm}%

 \begin{figure}[!h]
\begin{center}
 \includegraphics[width=0.75\columnwidth]{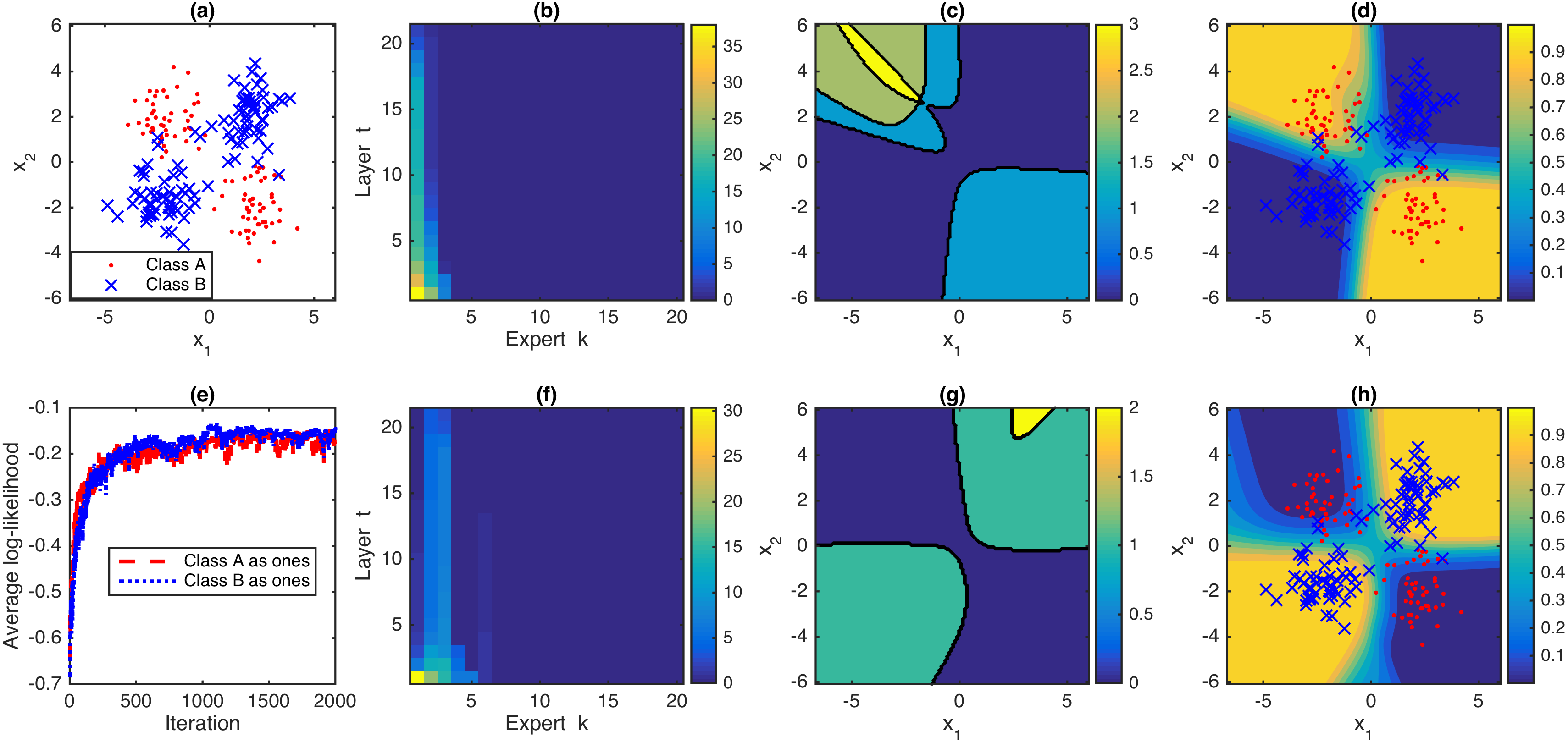}
\vspace{-.2cm}
%\vspace{2.5mm}
% \includegraphics[width=0.75\columnwidth]{ICNBE_Toy_quad_rk}
%ICNBE_toy_MCE.png}
%\vspace{3cm}
\end{center}
\vspace{-4.9mm}
\caption{\small\label{fig:xor}
Analogous figure to to Fig. \ref{fig:banana} for SS-softplus regression for a different dataset with two classes, where Class $A$ consists of 50 data points $(x_i,y_i)$ centering around $(-2,2)$, where $x_i\sim\mathcal{N}(-2,1)$ and $y_i\sim\mathcal{N}(2,1)$, and another 50 such kind of data points centering around $(2,-2)$, and Class $B$ consists of 50 such kind of data points centering around $(2,2)$ and another 50 such kind of data points centering around $(-2,-2)$.
}
%\end{figure}

% \begin{figure}[!t]
\begin{center}
 \includegraphics[width=0.75\columnwidth]{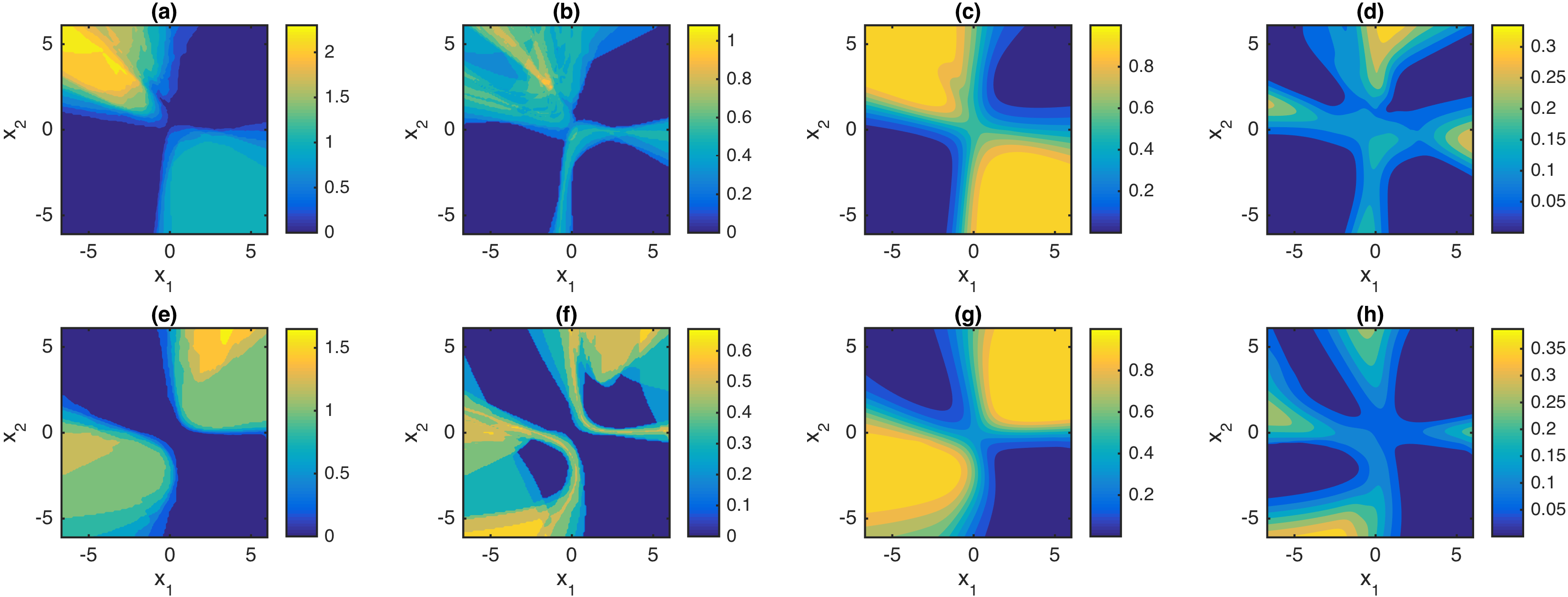}
 \vspace{-.2cm}
%\vspace{2.5mm}
% \includegraphics[width=0.75\columnwidth]{ICNBE_Toy_rk}
%MCE.png}
%\vspace{3cm}
\end{center}
\vspace{-5.9mm}
\caption{\small\label{fig:xor_ave}
Analogous figure to Fig. \ref{fig:banana}, % for SS-softplus regression, 
with the same experimental setting used for Fig. \ref{fig:xor}.}
\end{figure}

 \begin{figure}[!t]
\begin{center}
 \includegraphics[width=0.75\columnwidth]{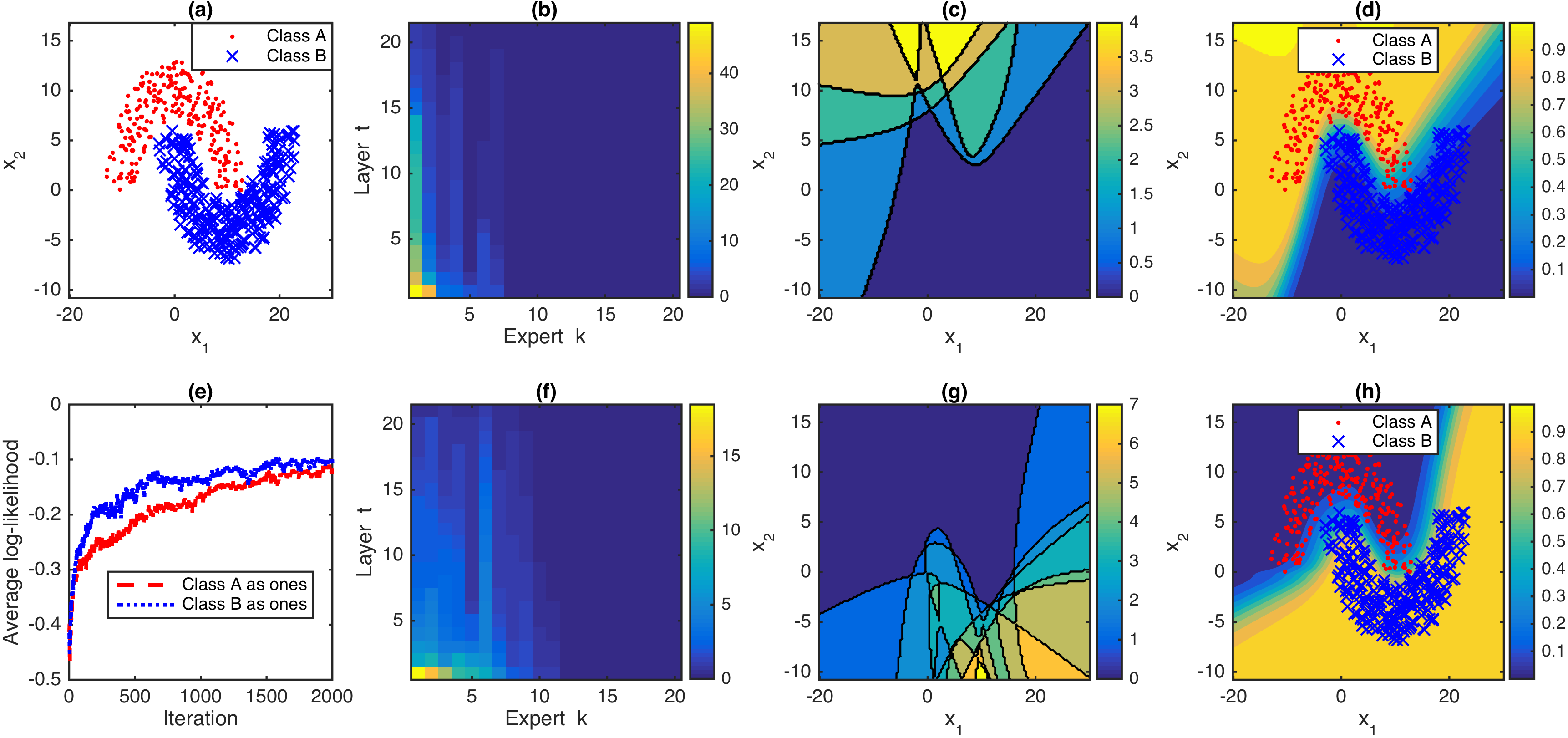}
\vspace{-.2cm}
%\vspace{2.5mm}
% \includegraphics[width=0.75\columnwidth]{ICNBE_Toy_quad_rk}
%ICNBE_toy_MCE.png}
%\vspace{3cm}
\end{center}
\vspace{-5.9mm}
\caption{\small\label{fig:dbmoon}
Analogous figure to Fig. \ref{fig:banana} for SS-softplus regression for a double moon dataset, where both Classes $A$ and $B$ consist of 250 data points $(x_i,y_i)$.
% centering around $(-2,2)$, where $x_i\sim\mathcal{N}(-2,1)$ and $y_i\sim\mathcal{N}(2,1)$, and another 50 such kind of data points centering around $(2,-2)$, and Class $B$ consists of 50 such kind of data points centering around $(2,2)$ and another 50 such kind of data points centering around $(-2,-2)$.
}
%\end{figure}

% \begin{figure}[!t]
\begin{center}
 \includegraphics[width=0.75\columnwidth]{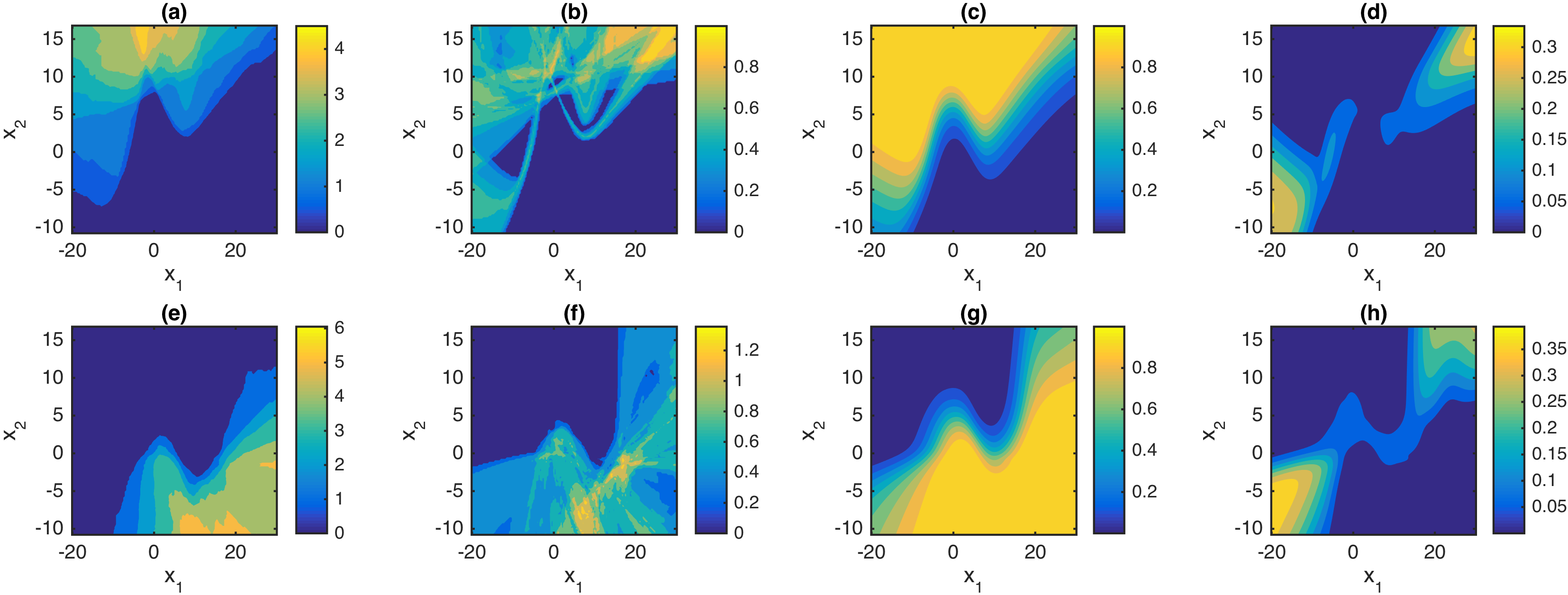}
%\vspace{2.5mm}
% \includegraphics[width=0.75\columnwidth]{ICNBE_Toy_rk}
%MCE.png}
%\vspace{3cm}
\vspace{-.2cm}
\end{center}
\vspace{-4.9mm}
\caption{\small\label{fig:dbmoon_ave}
Analogous figure to Fig. \ref{fig:banana_ave}, % for SS-softplus regression, 
with the same experimental setting used for Fig. \ref{fig:dbmoon}. }
\end{figure}

\begin{table}[t!]
\footnotesize
\caption{\small Binary classification datasets used in experiments, where %$N$ as the training set size and 
$V$ is the feature dimension.\vspace{1mm}}\label{tab:data}
\vspace{-4mm}
\begin{center}
%\begin{tabular}{r p{14pc} p{10pc} p{9pc} }
\begin{tabular}{c | c c c c c c c c c}
\toprule
Dataset & banana & breast cancer & titanic & waveform & german & image & ijcnn1 & a9a \\
\midrule
Train size &400 &200&150&400&700&1300 &49,990& 32,561 \\
Test size &4900 &77&2051&4600&300&1010 & 91,701& 16,281 \\
$V$ & 2 &9 & 3 &21 &20 &18&22& 123\\
\bottomrule
\end{tabular}
\end{center}
\vspace{-4mm}
%\end{table}%

%\begin{table}[t!] 
\footnotesize 
\caption{\small Performance of stack-softplus regression with the depth set as $T\in\{1,2,3,5,10\}$, where stack-softplus regression with $T=1$ reduces to softplus regression.\vspace{1mm}} 
\label{table:4} 
\centering 
\begin{tabular}{c|ccccc} 
\toprule 
 Dataset & softplus & stack-$\varsigma$ ($T$=2) & stack-$\varsigma$ ($T$=3) & stack-$\varsigma$ ($T$=5) & stack-$\varsigma$ ($T$=10) \\ 
 \midrule 
banana & $47.87 \pm 4.36$ & $34.66 \pm 5.58$ & $32.19 \pm 4.76$ & $33.21 \pm 5.76$ & $\mathbf{30.67} \pm 4.23$ \\ 
breast cancer & $28.70 \pm 4.76$ & $29.35 \pm 2.31$ & $29.48 \pm 4.94$ & $\mathbf{27.92} \pm 3.31$ & $28.31 \pm 4.36$ \\ 
titanic & $22.53 \pm 0.43$ & $22.80 \pm 0.59$ & $22.48 \pm 0.55$ & $22.71 \pm 0.70$ & $22.84 \pm 0.54$ \\ 
waveform & $13.62 \pm 0.71$ & $12.52 \pm 1.14$ & $\mathbf{12.23} \pm 0.79$ & $12.25 \pm 0.69$ & $12.33 \pm 0.65$ \\ 
german & $24.07 \pm 2.11$ & $23.73 \pm 1.99$ & $23.67 \pm 1.89$ & $\mathbf{22.97} \pm 2.22$ & $23.80 \pm 1.64$ \\ 
image & $17.55 \pm 0.75$ & $9.11 \pm 0.99$ & $8.39 \pm 1.05$ & $7.97 \pm 0.52$ & $\mathbf{7.50} \pm 1.17$ \\ 
 \bottomrule 
\begin{tabular}{@{}c@{}}\scriptsize Mean of SVM\\\scriptsize normalized errors\end{tabular} & $2.485$ & $1.773 $ & $1.686$ & $1.665$ & $1.609$ \\ 
\end{tabular} 
%\end{table} 

%\begin{table}[t!] 
\footnotesize 
\caption{\small Performance of SS-softplus regression with $K_{\max}=20$ and the depth set as $T\in\{1,2,3,5,10\}$, where SS-softplus regression with $T=1$ reduces to sum-softplus regression.\vspace{1mm}} 
\label{tab:5} 
\centering 
\begin{tabular}{c|ccccc} 
\toprule 
Dataset & sum-$\varsigma$ & SS-$\varsigma$ ($T$=2) & SS-$\varsigma$ ($T$=3) & SS-$\varsigma$ ($T$=5) & SS-$\varsigma$ ($T$=10) \\ 
 \midrule 
banana & $30.78 \pm 8.68$ & $15.00 \pm 5.31$ & $12.54 \pm 1.18$ & $\mathbf{11.89} \pm 0.61$ & $11.93 \pm 0.59$ \\ 
breast cancer & $30.13 \pm 4.23$ & $29.74 \pm 3.89$ & $30.39 \pm 4.94$ & $28.83 \pm 3.40$ & $\mathbf{28.44} \pm 4.60$ \\ 
titanic & $22.48 \pm 0.25$ & $22.56 \pm 0.65$ & $22.42 \pm 0.45$ & $22.29 \pm 0.80$ & $\mathbf{22.20} \pm 0.48$ \\ 
waveform & $11.51 \pm 0.65$ & $11.41 \pm 0.96$ & $\mathbf{11.34} \pm 0.70$ & $11.69 \pm 0.69$ & $12.92 \pm 1.00$ \\ 
german & $23.60 \pm 2.39$ & $\mathbf{23.30} \pm 2.54$ & $\mathbf{23.30} \pm 2.20$ & $24.23 \pm 2.46$ & $23.90 \pm 1.50$ \\ 
image & $3.50 \pm 0.73$ & $2.76 \pm 0.47$ & $\mathbf{2.59} \pm 0.47$ & $2.73 \pm 0.53$ & $2.93 \pm 0.46$ \\ 
\bottomrule 
\begin{tabular}{@{}c@{}}\scriptsize Mean of SVM\\\scriptsize normalized errors\end{tabular} & $1.370$ & $1.079$ & $1.033$ & $1.033$ & $1.059$ \\ 
\end{tabular} 
 
%\end{table} 

%\begin{table}[t!] 
%\centering 
\caption{\small Analogous table to Tab. \ref{tab:5} for the number of inferred experts (hyperplanes). % of SS-softplus regression. % with $K_{\max}=20$ and the depth set as $T\in\{1,2,3,5,10\}$, where SS-softplus regression with $T=1$ reduces to sum-softplus regression.
\vspace{1mm}} 
\label{table:6} 
\begin{tabular}{c|ccccc} 
\toprule 
Dataset & sum-$\varsigma$ & SS-$\varsigma$ ($T$=2) & SS-$\varsigma$ ($T$=3) & SS-$\varsigma$ ($T$=5) & SS-$\varsigma$ ($T$=10) \\ 
\midrule 
banana & $3.70 \pm 0.95$ & $5.70 \pm 0.67$ & $6.80 \pm 0.79$ & $7.60 \pm 1.17$ & $9.80 \pm 2.39$ \\                                                                                                          
breast cancer & $3.10 \pm 0.74$ & $4.10 \pm 0.88$ & $5.70 \pm 1.70$ & $6.40 \pm 1.43$ & $9.50 \pm 1.51$ \\                                                                                                
titanic & $2.30 \pm 0.48$ & $3.30 \pm 0.82$ & $3.80 \pm 0.92$ & $4.00 \pm 0.94$ & $6.20 \pm 1.23$ \\                                                                                                    
waveform & $4.40 \pm 0.84$ & $6.20 \pm 1.62$ & $7.00 \pm 2.21$ & $8.90 \pm 2.33$ & $11.50 \pm 2.72$ \\                                                                                                   
german & $6.70 \pm 0.95$ & $9.80 \pm 1.48$ & $11.10 \pm 2.64$ & $14.70 \pm 1.77$ & $20.00 \pm 2.40$ \\                                                                                                 
image & $11.20 \pm 1.32$ & $13.20 \pm 2.30$ & $14.60 \pm 2.07$ & $17.60 \pm 1.90$ & $21.40 \pm 2.22$ \\   
\bottomrule 
\begin{tabular}{@{}c@{}}\scriptsize Mean of SVM \\\scriptsize normalized $K$\end{tabular} & $0.030$ & $0.041~(\times 2)$ & $0.048~(\times 3)$ & $0.057~(\times 5)$ & $0.077~(\times 10)$ \\ 
\end{tabular} 
\end{table}

\begin{table}[t!] 
\footnotesize 
\caption{\small Comparison of classification errors of logistic regression (LR), support vector machine (SVM), %relevance vector machine (RVM), 
adaptive multi-hyperplane machine (AMM), convex polytope machine (CPM), softplus regression, sum-softplus (sum-$\varsigma$) regression with $K_{\max}=20$, stack-softplus (stack-$\varsigma$) regression with $T=5$, and SS-softplus regression with $K_{\max}=20$ and $T=5$. \vspace{1mm}}\label{tab:Error1} 
\centering 
%\begin{tabular}{c|cccccccc} 
%\toprule 
% Dataset & CPM & SVM & AMM & LR & softplus & sum-$\varsigma$ & stack-$\varsigma$ ($T$=5) & SS-$\varsigma$ ($T$=5) \\ 
% \midrule 
%ijcnn1 & $3.00 $ & $1.44 $ & $2.84 $ & $7.98$ & $8.37 $ & $3.39 $ & $6.45 $ & $2.27$ \\ 
% & $ \pm 0.11$ & & $ \pm 0.31$ & & $ \pm 0.02$ & $ \pm 0.15$ & $\pm 0.15$ & $ \pm 0.11$ \\ \midrule
%a9a & $15.15$ & $14.96$ & $15.29 $ & $14.97$ & $15.08$ & $14.92$ & $14.99 $ & $14.97 $ \\
%& $\pm 0.06$ & & $ \pm 0.18$ & & $ \pm 0.09$ & $ \pm 0.12$ & $ \pm 0.10$ & $ \pm 0.07$ \\
% \bottomrule 
%\end{tabular} 
\begin{tabular}{c|ccccccccc} 
\toprule 
 Dataset & LR & SVM & RVM & AMM & CPM & softplus & sum-$\varsigma$ & stack-$\varsigma$ ($T$=5) & SS-$\varsigma$ ($T$=5) \\ 
 \midrule 
ijcnn1 & $8.00$ & $1.30$ & $\mathbf{1.29}$ & $2.06$ & $2.57$ & $8.41$ & $3.39$ & $6.43$ & $2.24$ \\ 
 & & & & $\pm 0.27$ & $\pm 0.17$ & $\pm 0.03$ & $\pm 0.17$ & $\pm 0.15$ & $\pm 0.12$ \\
 \midrule 
a9a & $15.00$ & $\mathbf{14.88}$ & $14.95$ & $15.03$ & $15.08$ & $15.02$ & $\mathbf{14.88}$ & $15.00$ & $15.02$ \\ 
 & & & & $\pm 0.17$ & $\pm 0.07$ & $\pm 0.06$ & $\pm 0.05$ & $\pm 0.06$ & $\pm 0.11$ \\
 \bottomrule 
\end{tabular} 
 
%\end{table} 

%\begin{table}[t!] 
\footnotesize 
\caption{\small 
 Analogous table to Tab. \ref{tab:Error1} for the number of inferred experts (hyperplanes).
%The number of experts used in regression with $K_{\max}=20$.
\vspace{1mm}} 
\label{table:K1} 
\centering 
\begin{tabular}{c|ccccccccc} 
\toprule 
 & LR & SVM & RVM & AMM & CPM & softplus & sum-$\varsigma$ & stack-$\varsigma$ ($T$=5) & SS-$\varsigma$ ($T$=5) \\ 
 \midrule 
ijcnn1 & 1 & $2477$ & $296$ & $8.20$ & $58.00$ & 2 & $37.60$ & $2~(\times 5)$ & $38.80~(\times 5)$ \\ 
 & & & & $\pm 0.84$ & $\pm 13.04$ & & $\pm 1.52$ & & $\pm 0.84~(\times 5)$ \\
 \midrule 
a9a & 1 & $11506$ & $109$ & $28.00$ & $7.60$ & 2 & $37.60$ & $2~(\times 5)$ & $40.00~(\times 5)$ \\ 
 & & & & $\pm 4.12$ & $\pm 2.19$ & & $\pm 0.55$ & & $\pm 0.00~(\times 5)$\\ 
 \bottomrule 
\end{tabular}

%\end{table} 

%\begin{table}[t!] 

\caption{\small Performance of stack-softplus regression with the depth set as $T\in\{1,2,3,5,10\}$, where stack-softplus regression with $T=1$ reduces to softplus regression.\vspace{1mm}} 
\centering 
\label{table:9} 
\begin{tabular}{c|ccccc} 
\toprule 
Dataset & softplus & stack-$\varsigma $ ($T$=2) & stack-$\varsigma $ ($T$=3) & stack-$\varsigma $ ($T$=5) & stack-$\varsigma $ ($T$=10) \\
 \midrule
ijcnn1 & $8.41 \pm 0.03$ & $6.73 \pm 0.13$ & $6.44 \pm 0.21$ & $6.43 \pm 0.15$ & $\mathbf{6.39} \pm 0.08$ \\ 
a9a & $15.02 \pm 0.06$ & $14.96 \pm 0.04$ & $\mathbf{14.93} \pm 0.06$ & $15.00 \pm 0.06$ & $14.97 \pm 0.08$ \\
 \bottomrule 
\end{tabular} 
%\end{table} 
%\begin{table}[t!] 

\caption{\small Performance of SS-softplus regression with $K_{\max}=20$ and the depth set as $T\in\{1,2,3,5,10\}$, where SS-softplus regression with $T=1$ reduces to sum-softplus regression.\vspace{1mm}} 
\centering 
\label{table:10} 
\begin{tabular}{c|ccccc} 
\toprule 
 Dataset & sum-$\varsigma$ & SS-$\varsigma$ ($T$=2) & SS-$\varsigma$ ($T$=3) & SS-$\varsigma$ ($T$=5) & SS-$\varsigma$ ($T$=10) \\ 
 \midrule
ijcnn1 & $3.39 \pm 0.17$ & $2.32 \pm 0.18$ & $2.31 \pm 0.17$ & $2.24 \pm 0.12$ & $\mathbf{2.19} \pm 0.11$ \\ 
a9a & $\mathbf{14.88} \pm 0.05$ & $14.98 \pm 0.03$ & $15.07 \pm 0.20$ & $15.02 \pm 0.11$ & $15.09 \pm 0.06$ \\
 \bottomrule 
\end{tabular} 
 
\end{table}

\end{document}